\newtheorem{theorem}{Theorem}% same for example numbers
\theoremstyle{definition} % amsthm only
\theoremstyle{lemma} % amsthm only
\newtheorem{lemma}{Lemma}
\theoremstyle{corollary} 
\newtheorem{corollary}{Corollary}
\theoremstyle{condition} 
\newtheorem{condition}{Condition}
\theoremstyle{assumption} 
\newtheorem{assumption}{Assumption}
\newcommand{\bs}[1]{\boldsymbol{#1}}
\newcommand{\ws}{\bs{w^*}}
\newcommand{\wsgd}{\bs{\bar{w}_{SGD}}}
\newcommand{\wsm}{\bs{\bar{w}_{MKL}}}
\newcommand{\mkl}{MKL-SGD}
\newcommand{\wg}{\bs{w^*}}%{\bs{w^*_\mathcal{G}}}
\newcommand{\wb}{\bs{w_\mathcal{B}}}
\newcommand{\w}{\bs{w}}
\newcommand{\tw}{\widetilde{\bs{w}}}
\newcommand{\tfw}{\widetilde{F}(\w)}
\title{Choosing the Sample with Lowest Loss makes SGD Robust}
\author[1]{Vatsal Shah}
\author[2]{Xiaoxia Wu}
\author[1]{Sujay Sanghavi}
\affil[1]{Department of Electrical and Computer Engineering, UT Austin}
\affil[2]{Department of Mathematics, UT Austin}
\begin{document}

\maketitle

	\begin{abstract}
The presence of outliers can potentially significantly skew the parameters of machine learning models trained via stochastic gradient descent (SGD). In this paper we propose a simple variant of the simple SGD method: in each step, first choose a set of $k$ samples, then from these choose the one with the smallest current loss, and do an SGD-like update with this chosen sample. Vanilla SGD corresponds to $k=1$, i.e. no choice; $k\geq 2$ represents a new algorithm that is however effectively minimizing a non-convex surrogate loss. Our main contribution is a theoretical analysis of the robustness properties of this idea for ML problems which are sums of convex losses; these are backed up with linear regression and small-scale neural network experiments.\footnote{Appears in AISTATS 2020}
	\end{abstract}
	
\section{Introduction}

This paper focuses on machine learning problems that can be formulated as optimizing the sum of $n$ convex loss functions:
\begin{align}\label{eq:objective}
\min_{\bs{w}}~ F(\w) 
\end{align}
where $F(\w) ~ =\frac{1}{n} \, \sum_{i=1}^n f_i(\bs{w})$ is the sum of convex, continuously differentiable loss functions.

Stochastic gradient descent (SGD) is a popular way to solve such problems when $n$ is large; the simplest SGD update is:
\begin{align} \label{eq:sgdupdatestep}
\text{SGD:} ~ \bs{w_{t+1}} = \bs{w_t} - \eta_t \nabla f_{i_t}(\bs{w_t})
\end{align}
where the sample $i_t$ is typically chosen uniformly at random from [n].

%Most machine learning problems can be represented as solving a sum of smooth, convex, and separable cost functions described as follows: 
%\begin{align}\label{eq:objective}
%\ws &=\arg \min_{\bs{w}} F(\bs{w}) \\
%& \text{where }  F(\bs{w}) = \frac{1}{n}\sum_{i=1}^n f_i(\bs{w})  \notag
%\end{align}
%where $\bs{w}$ represents the $p-$dimensional underlying model parameter and the functions $f_i(.)$ are completely described by a single data sample $(\bs{x_i}, y_i)$. 

%The success of modern supervised learning methods such as deep neural networks hinges on the utilization of extremely large training datasets. Full gradient computations on these large data-sets are expensive and often intractable. Stochastic gradient descent (SGD) has become one of the de-facto methods to solve problems described in equation \ref{eq:objective}. The update step for SGD is given as follows:
%\begin{align} \label{eq:sgdupdatestep}
%\bs{w_{t+1}} = \bs{w_t} - \eta_t \nabla f_{i_t}(\bs{w_t})
%\end{align}
%where $i_t$ is typically picked uniformly at random from [n].

However, as is well known, the performance of SGD and most other stochastic optimization methods is highly sensitive to the quality of the available training data. A small fraction of outliers can cause SGD to converge far away from the true optimum. While there has been a significant amount of work on more robust algorithms for special problem classes (e.g. linear regression, PCA etc.) in this paper our objective is to make a modification to the basic SGD method itself; one that can be easily applied to the many settings where vanilla SGD is already used in the training of machine learning models. 
%\cite{owen2007robust} discusses the importance of limiting the impact of points with large loss values.

We call our method Min-$k$ Loss SGD $($\mkl$)$, given below (Algorithm \ref{alg:min2loss}). In each iteration, we first choose a set of $k$ samples and then select the sample with the smallest current loss in that set; the gradient of this sample is then used for the update step. 

\begin{algorithm}[!ht]
	\caption{\mkl}
	\label{alg:min2loss}
	\begin{algorithmic}[1] % The number tells where the line numbering should start
		\State Initialize $\bs{w_0}$
		\State Given samples $D = (\bs{x_t}, y_t)_{t=1}^\infty$
		\For{$t=1, \dots $}
		\State Choose a set $S_t$ of $k$ samples
		\State Select $i_t = \arg \min_{i\in S_t} f_{i}(\bs{w_t})$
		\State Update $\bs{w_{t+1}} =  \bs{w_t} - \eta \nabla f_{i_t}(\bs{w_t})$
		\EndFor
		\State Return $\bs{w_t}$
	\end{algorithmic}
\end{algorithm}

The effectiveness of our algorithm relies on a simple observation:  \emph{in a situation where most samples adhere to a model but a few are outliers skewing the output, the outlier points that contribute the most to the skew are often those with high loss}. In this paper, our focus is on the stochastic setting for standard convex functions.  We show that it provides a certain degree of robustness against outliers/bad training samples that may otherwise skew the estimate. 
% We illustrate this using a toy example in Figure \ref{fig:toyexamples}. %Larger the value of $k$, more robust is the algorithm, however this comes at a cost: the rate of convergence is same order-wise but has smaller constants.

% %\end{figure}
% \begin{center}
%     \begin{figure}[!ht]	\subfloat[][Toy Example: Linear Regression]{\includegraphics[width=\linewidth,height=\linewidth]{Figs/LR1.pdf}}\\
% 	%\subfloat[][\small{Toy Example: Logistic Regression}]{\includegraphics[scale=0.55]{./{LogisticRegression_sanitycheck5}.png}}
% 	\caption{Toy Example: Linear Regression with Min-2 Loss} \label{fig:toyexamples}
% \end{figure}
% \end{center}{}

%While \mkl is practical and easy to implement, analyzing the corresponding optimization problem comes with its own unique set of challenges. At any point $\bs{w}$ in the landscape, probability of picking a sample is dependent on the relative loss of various functions. Thus, as we move through the landscape, samples are picked with dynamically varying probabilities, which leads to `non-convexity' in $\bs{w}$. 
%We show that even in the noiseless setting, the landscape for \mkl is non-convex. Presence of  noise or outliers can further complicate matters due to the possible existence of many local minima \footnote{quite a few of them can be classified as bad}. %We demonstrate how

%which raises its own questions. Using \mkl, a simple tweak of the famous SGD algorithm, we not only provide theoretical guarantees for convergence but also show the significant empirical gains achieved by this method.

\paragraph{Our Contributions}
\begin{itemize}
	\item  To keep the analysis simple yet insightful, we define four natural and {\em deterministic} problem settings - noiseless with no outliers, noiseless with outliers, and noisy with and without outliers - in which we study the performance of \mkl. In each of these settings the individual losses are assumed to be convex, and the overall loss is additionally strongly convex. We are interested in finding the optimum $\ws$ of the ``good" samples, but we do not a-priori know which samples are good and which are outliers.
	\item The expected \mkl~update (over the randomness of sample choice) is {\em not} the gradient of the original loss function (as would have been the case with vanilla SGD); it is instead the gradient of a different non-convex surrogate loss, even for the simplest and friendliest setting of noiseless with no outliers. Our first result establishes that this non-convexity however does not yield any bad local minima or fixed points for \mkl~in this particular setting, ensuring its success.
	\item We next turn to the setting of noiseless with outliers, where the surrogate loss can now potentially have many spurious local minima. We show that by picking a value of $k$ high enough (depending on the condition number of the loss functions that we define) the local minima of \mkl~closest to $\ws$ is better than the (unique) fixed point of SGD.
	\item We establish the convergence rates of \mkl -with and without outliers - for  both the noiseless and noisy settings. %\item We empirically demonstrate that \mkl transforms the usual convex landscape into a highly non-convex one with many local minimums. However, the rate of convergence of \mkl  to a local minimum is of the same order as SGD albeit with smaller constants. We also provide a simple way to initialize the algorithm, that leads to improvement in performance. 
	\item We back up our theoretical results with both synthetic linear regression experiments that provide insight, as well as encouraging results on the MNIST and CIFAR-10 datasets.
\end{itemize}

\section{Related Work}

The related work can be divided into the following four main subparts:

\paragraph{Stochastic optimization and weighted sampling}
The proposed \mkl algorithm inherently implements a weighted sampling strategy to pick samples. Weighted sampling is one of the popular variants of SGD that can be used for matching one distribution to another (importance sampling), improving the rate of convergence, variance reduction or all of them and has been considered in \cite{kahn1953methods, strohmer2009randomized, zhao2015stochastic, katharopoulos2018not}. Other popular weighted sampling techniques include \cite{needell2014stochastic,moulines2011non, lee2013efficient}. Without the assumption of strong convexity for each $f_i(.)$, the weighted sampling techniques often lead to biased estimators which are difficult to analyze. 
%Recently, an interesting line of work by \cite{chen2018stochastic} discusses the convergence guarantees for biased but consistent estimators. In this paper, our focus will not be on consistency. For the noiseless setting, \mkl is asymptotically unbiased and consistent while for the noisy setting or in presence of outliers, \mkl algorithm is neither unbiased nor consistent.
Another idea that is analogous to weighted sampling includes boosting \cite{freund1999short} where harder samples are used to train subsequent classifiers. \emph{However, in presence of outliers and label noise, learning the hard samples may often lead to over-fitting the solution to these bad samples}. This serves as a motivation for picking samples with the lowest loss in \mkl.
\paragraph{Robust linear regression}
Learning with bad training samples is challenging and often intractable even for simple convex optimization problems. For example, OLS is quite susceptible to arbitrary corruptions by even a small fraction of outliers. Least Median Squares (LMS) and least trimmed squares (LTS) estimator proposed in \cite{rousseeuw1984least,vivsek2002least,vivsek2006least} are both sample efficient, have a relatively high break-down point, but require exponential running time to converge.
\cite{huber2011robust} provides a detailed survey on some of these robust estimators for OLS problem. Recently, \cite{bhatia2015robust, bhatia2017consistent, shen2019learning, klivans2018efficient, karmalkar2019list} have proposed robust learning algorithms for linear regression which require the computation of gradient over the entire dataset which may be computationally intractable for large datasets. Another line of recent work considers robustness in the high-dimensional setting (\cite{owen2007robust, xu2009robust, chen2013robust, balakrishnan2017computationally, liu2018high})  In this version, our focus is on general stochastic optimization in presence of outliers.

\paragraph{Robust optimization}
Robust optimization has received a renewed impetus following the works in \cite{diakonikolas2019robust, lai2016agnostic, charikar2017learning}.  %
In most modern machine learning problems, however, simultaneous access to gradients over the entire dataset is time consuming and often, infeasible. \cite{diakonikolas2018sever, prasad2018robust} provides robust meta-algorithms for stochastic optimization under adversarial corruptions. However, both these algorithms require the computation of one or many principal components per epoch which requires atleast $O(p^2)$ computation (\cite{anaraki2014memory}). In contrast, \mkl algorithm runs in $O(k)$ computations per iteration where $k$ is the number of loss evaluations per epoch. In this paper, we don't consider the stronger adversarial model, our focus is on a tractable method that provides robustness on a simpler corruption model (as defined in the next section).

\paragraph{Label noise in deep learning}
\cite{angluin1988learning, kumar2010self, bengio2009curriculum} describe different techniques to learn in presence of label noise and outliers.
\cite{rolnick2017deep} showed that deep neural networks are robust to random label noise especially for datasets like MNIST and CIFAR10. \cite{jiang2017mentornet, ren2018learning} propose optimization methods based on re-weighting samples that often require significant pre-processing. In this paper, our aim is to propose a computationally inexpensive optimization approach that can also provide a certain degree of robustness.

\section{Problem Setup}

We make the following assumptions about our problem setting described in \ref{eq:objective}. Let $\mathbb{O}$ be the set of outlier samples; this set is of course unknown to the algorithm. We denote the optimum of the non-outlier samples by $\ws$, i.e.
$$ \ws ~ := ~ \arg \min_{\bs{w}} \, \sum_{i \notin \mathbb{O}} f_i(\bs{w})$$
In this paper we show that \mkl~allows us to estimate $\ws$ without a-priori knowledge of the set $\mathbb{O}$, under certain conditions. We now spell these conditions out. 

\begin{assumption}[\textbf{Individual losses}]
	Each $f_i(\bs{w})$ is convex in $\bs{w}$, with Lipschitz continuous gradients with constant $L_i$.
	$$\norm{\nabla f_i(\bs{w_1}) -\nabla f_i(\bs{w_2})} \leq L_i \norm{ \bs{w_1} - \bs{w_2}} $$ 
	and define $L :=max_i L_i$
\end{assumption}{}
\noindent
It is common to also assume strong convexity of the overall loss function $F(\cdot)$. Here, since we are dropping samples, we need a very slightly stronger assumption.
\begin{assumption}[\textbf{Overall loss}]
	For any $n-k$ size subset $S$ of the samples, we assume the loss function $\sum_{i\in S} f_i(\bs{w})$ is {\em strongly convex} in $\bs{w}$. Recall that here $k$ is the size of the sample set in the \mkl~algorithm.
\end{assumption}
\noindent
Lastly, we also assume that all the functions share the same minimum value. Assumption 3 is often satisfied by most standard loss functions with a finite unique minima \cite{gunasekar2018characterizing} such as squared loss, hinge loss, etc.
\begin{assumption}[\textbf{Equal minimum values}]
	Each of the functions $f_i(.)$ shares the same minimum value $\min_{\bs{w}} f_i(\bs{w}) = \min_{\bs{w}} f_j(\bs{w})~\forall~ i,j$. 
\end{assumption}

We are now in a position to formally define three problem settings we will consider in this paper. For each $i$ let $C_i := \{\hat{\bs{w}} : \hat{\bs{w}} = \arg \min_{\bs{w}} f_i(\bs{w}) \}$ denote the set of optimal solutions (there can be more than one because $f_i(\cdot)$ is only convex but not strongly convex). Let $d(a, S)$ denote the shortest distance between point $a$ and set $S$.

\noindent
\paragraph{Noiseless setting with no outliers:} As a first step and sanity check, we consider what happens in the easiest case: where there are no outliers. There is also no ``noise", by which we mean that the optimum $\ws$ we seek is also in the optimal set of every one of the individual sample losses, i.e.
$$\ws \in C_i ~ \text{for all $i$}$$
Of course in this case vanilla SGD (and many other methods) will converge to $\ws$ as well; we just study this setting as a first step and also to build insight.
\noindent
\paragraph{Outlier setting:} Finally, we consider the case where a subset $\mathbb{O}$ of the samples are outliers. Specifically, we assume that for outlier samples the $\ws$ we seek lies far from their optimal sets, while for the others it is in the optimal sets:
$$ d(\ws,C_i) \geq 2\delta ~ \text{for all $i\in \mathbb{O}$}$$
$$ \ws \in C_i ~ \text{for all $i\notin \mathbb{O}$} $$
Note that now vanilla SGD on the entire loss function will {\em not} converge to $\ws$. 
\noindent
\paragraph{Noisy setting:} As a second step, we consider the case when samples are noisy but there are no outliers. In particular, we model noise by allowing $\ws$ to now be outside of individual optimal sets $C_i$, but not too far; specifically, \\
\textit{No outliers} $$ d(\ws,C_i) \leq \delta ~ \text{for all $i$}$$
\textit{With outliers} 
$$ d(\ws,C_i) \leq \delta ~ \text{for all $i\notin \mathbb{O}$}$$
$$ d(\ws,C_i) > 2\delta ~ \text{for all $i\in \mathbb{O}$}$$

For the noisy setting, we will focus only on the convergence guarantees. We will show that \mkl~gets close to $\ws$ in this setting; again in this case vanilla SGD will do so as well for the no outliers setting of course.

%We will also differentiate between the optimum solutions with and without outliers. Let $\ws$ be the unique optimums of the objective in equation \ref{eq:objective} attained by SGD in presence of only good samples.  The noiseless, noisy and outlier settings are defined as follows:
%\begin{table}[!h]
%	\centering
%	\begin{tabular}{|c|c|}
%		\hline
%		setting & Constraints on the OPT solution \\
%		\hline 
%		Noiseless & $\ws \in C_i \quad \quad \quad \qquad \forall~i$\\
%		\hline
%		Noisy &$d(\ws, C_i) \leq \delta \quad \quad \quad \forall~i$ \\
%		\hline
%		Outliers &$d(\ws_{\backslash\mathbb{O}}, C_i) > 2\delta \quad \text{for } i \in \mathbb{O}$\\
%		\hline
%	\end{tabular}\label{table:setting}
%	\caption{Noiseless, noisy and outlier settings}
%\end{table}

%The noiseless setting implies that $\ws$, the unique optima of $F(\bs{w})$ lies in the set of optimal solutions for each function  $f_i(.)$. For the noisy case, we assume that the distance of $\ws$ to each set of optimal solutions cannot exceed $\delta$. In the outlier case, we assume that $\ws$ is atleast $2\delta$ away from the optimum. By triangle inequality we have, $d(C_i, C_j)\leq 2\delta$ and thus the outliers are atleast as far away from the $\ws$ as the closest distance between any two optimal sets of good samples.

\section{Understanding \mkl}

To build some intuition for \mkl, we describe the notation and look at some simple settings. Recall \mkl~takes $k$ samples and then retains the one with lowest current loss; this means it is sampling non-uniformly. For any $\bs{w}$, let $m_1(\bs{w}), m_2(\bs{w}), m_3(\bs{w}), \dots m_n(\bs{w})$ be the sorted order w.r.t. the loss at that $\bs{w}$, i.e.
\begin{align*}
f_{m_1(\bs{w})}(\bs{w}) \leq 	f_{m_2(\bs{w})}(\bs{w}) \leq \dots \leq 	f_{m_n(\bs{w})}(\bs{w}) %\label{eq:order}
\end{align*}
Recall that for a sample to be the one picked by \mkl~for updating $\bs{w}$, it needs to first be part of the set of $k$ samples, and then have the lowest loss among them. 
A simple calculation shows that probability that the $i^{th}$ best sample $m_i(\bs{w})$ is the one picked by \mkl~is given by
\begin{align}
p_{m_i(\bs{w})}(\bs{w}) = \begin{cases}{}
&\dfrac{{n-i \choose k-1}}{{n\choose k}}\qquad \qquad \qquad \qquad\qquad \qquad\text{without replacement}\\
&\dfrac{(n - (i-1))^k - (n-i)^k}{n^k}
\qquad \qquad \text{with replacement}
\end{cases} \label{eq:probability}
\end{align}
In the rest of the paper, we will focus on the ``with replacement" scenario for ease of presentation; this choice does not change our main ideas or results. With this notation, we can rewrite the expected update step of \mkl~ as
$$
\mathbb{E}[\w_+ | \w] = \w - \eta \sum_i p_{m_i(\bs{w})} \nabla f_{m_i(\bs{w})}(\bs{w}) %\label{eq:mklgradient}
$$ 
For simplicity of notation in the rest of the paper, we will relabel the update term in the above by defining as follows:
$$\nabla \tfw  ~ := ~ \sum_i p_{m_i(\bs{w})} \nabla f_{m_i(\bs{w})}(\bs{w})$$
Underlying this notation is the idea that, in expectation, \mkl~is akin to gradient descent on a {\em surrogate} loss function $\tilde{F}(\cdot)$ which is different from the original loss function $F(\cdot)$; indeed if needed this surrogate loss can be found (upto a constant shift) from the above gradient. We will not do that explicitly here, but instead note that even with all our assumptions, indeed even without any outliers or noise, this surrogate loss can be  non-convex. It is thus important to see that \mkl~does the right thing in all of our settings, which is what we describe next.

\subsection{Noiseless setting with no outliers}

As a first step (and for the purposes of sanity check), we look at \mkl~in the simplest setting when there are no outliers and no noise. Recall from above that this means that $\ws$ is in the optimal set of every single individual loss $f_i(\cdot)$. However as mentioned above, even in this case the surrogate loss can be non-convex, as seen e.g. in Figure \ref{fig:3dsurfaceplot} for a simple example.
\begin{figure}[!ht]
	\centering
	\includegraphics[scale=0.2]{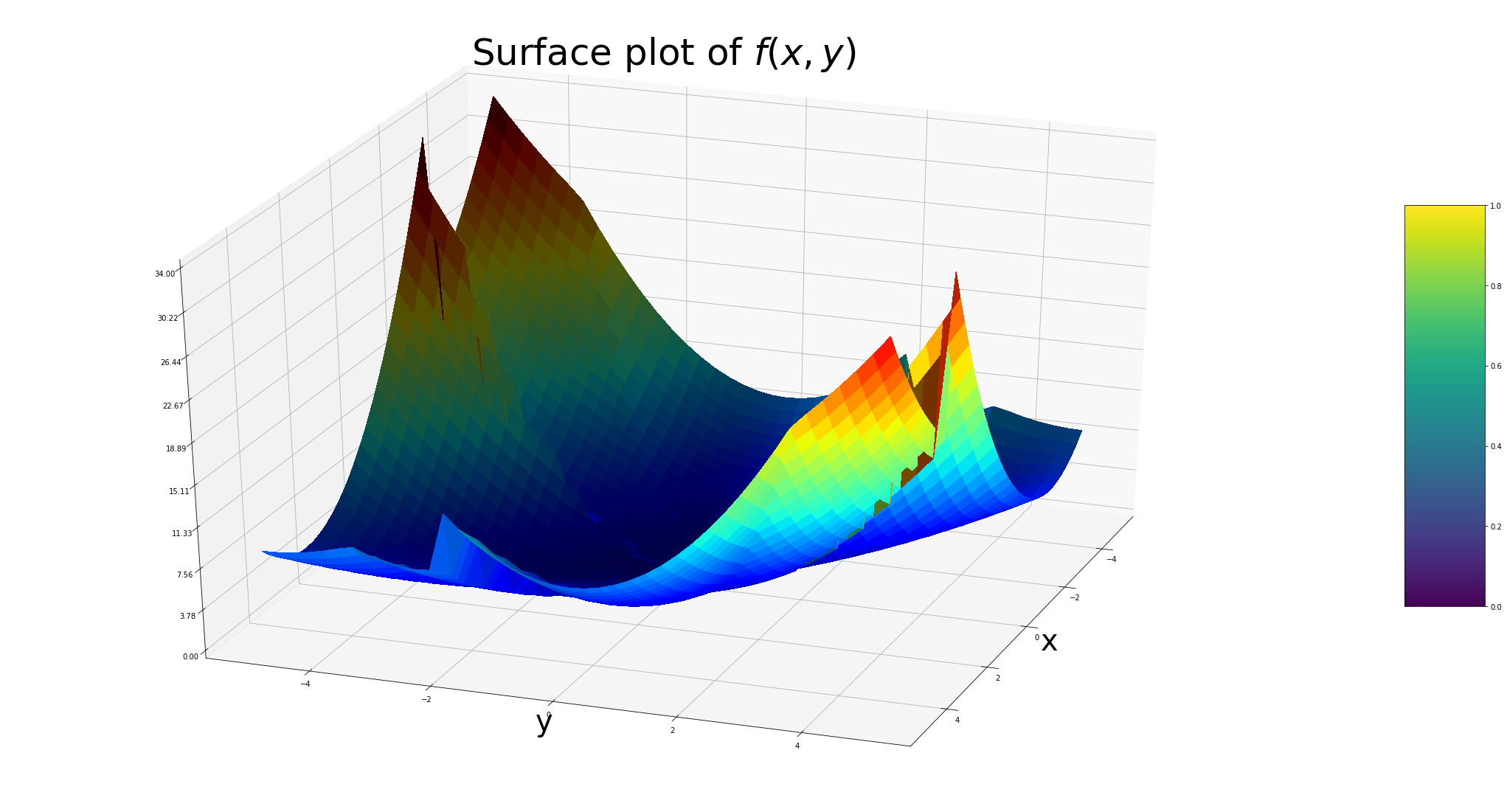}
	\caption{Non-convexity of the surface plot with three samples in the two-dimensional noiseless linear regression setting}
	\label{fig:3dsurfaceplot}
\end{figure}
However, in the following lemma we show that even though the overall surrogate loss $\tilde{F}(\cdot)$ is non-convex, in this no-noise no-outlier setting it has a special property with regards to the point $\ws$.
\begin{lemma} \label{lemma:sc}
	In the noiseless setting, for any $\w$ there exists a $\lambda_{\w} > 0$ such that $$\nabla \tfw^\top (\bs{w} - \ws)  \geq \lambda_{\w} \norm{\w - \ws}^2.$$
\end{lemma}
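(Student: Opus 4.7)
The plan is to exploit two features that are specific to the noiseless no-outlier case: (i) because $\ws \in C_i$ for every $i$ (and by Assumption 3 the $f_i$ share a common minimum value), $\ws$ is actually a minimizer of every individual $f_i$ and hence $\nabla f_i(\ws)=0$; (ii) convexity of each $f_i$ combined with this stationarity makes every inner product $\nabla f_{m_i(\w)}(\w)^\top(\w-\ws)$ non-negative. This term-wise non-negativity is what will let me ``undo'' the non-uniform reweighting in $\nabla \tfw$ and transfer the strong convexity of $F$ through the weighted combination.

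Concretely, I would proceed in three steps. First, from the with-replacement formula \eqref{eq:probability} observe that every weight is strictly positive, and in particular
\[
p_{m_i(\w)} \;\geq\; p_{m_n(\w)} \;=\; \frac{1^k - 0^k}{n^k} \;=\; \frac{1}{n^k},
\]
uniformly in $\w$. Second, write
\[
\nabla \tfw^\top(\w-\ws) \;=\; \sum_{i=1}^n p_{m_i(\w)}\, \nabla f_{m_i(\w)}(\w)^\top(\w-\ws),
\]
and use convexity of $f_{m_i(\w)}$ together with $\nabla f_{m_i(\w)}(\ws)=0$ to conclude that each summand is non-negative; lower-bounding each weight by $1/n^k$ and relabeling (since $\{m_i(\w)\}_{i=1}^n$ is a permutation of $[n]$) yields
\[
\nabla \tfw^\top(\w-\ws) \;\geq\; \frac{1}{n^k}\sum_{i=1}^n \nabla f_i(\w)^\top(\w-\ws) \;=\; \frac{n}{n^k}\,\nabla F(\w)^\top(\w-\ws).
\]
Third, by Assumption 2 any $(n-k)$-subset sum is strongly convex, and adding the remaining $k$ convex functions preserves strong convexity, so $F$ itself is $\mu$-strongly convex for some $\mu>0$ (take the minimum constant over the finitely many subsets). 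Since $\ws$ minimizes every $f_i$ and therefore $F$, the standard strong-convexity inequality gives $\nabla F(\w)^\top(\w-\ws) \geq \mu\|\w-\ws\|^2$, and chaining yields the lemma with $\lambda_\w = \mu/n^{k-1}$ (uniform in $\w$, stronger than required).

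The only non-routine step is recognizing that term-wise non-negativity lets us uniformly floor the non-uniform probability weights without flipping the inequality; this is exactly what the noiseless no-outlier assumption buys. This step will break in the outlier setting, where $\nabla f_i(\w)^\top(\w-\ws)$ can be negative for $i \in \mathbb{O}$, which is why the outlier analysis later in the paper will necessarily take a different route rather than be a direct extension of this argument.
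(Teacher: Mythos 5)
Your proof is correct and takes essentially the same route as the paper: the paper likewise fixes the weights at $\w$, uses that every weight is strictly positive and that $\ws$ minimizes each $f_i$ in the noiseless setting (so the frozen weighted sum has $\ws$ as a stationary point), and transfers strong convexity of the unweighted sum through the weighted combination --- indeed the ``naive bound'' the paper states immediately after its proof is exactly your weight-flooring argument with $\min_i p_i$ in place of $1/n^k$. Your version is merely more explicit about the term-wise non-negativity of $\nabla f_i(\w)^\top(\w-\ws)$ that justifies flooring the weights, a step the paper glosses over.
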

In other words, what this lemma says is that on the line between any point $\bs{w}$ and the point $\ws$, the surrogate loss function $\tilde{F}$ is convex from any point -- even though it is not convex overall. This is akin to the restricted secant inequality condition described in \cite{karimi2016linear, zhang2017restricted}. The following theorem uses this lemma to establish our first result: that in the noiseless setting with no outliers, $\ws$ is the only fixed point (in expectation) of \mkl.

%\begin{lemma} \label{lemma:sc}
%	 Let us define $\hfw = \sum_{i=1}^n p_i f_i(\w)$ where $\bs{p} = [p_1, \dots p_n]$, then for any $\w$ there exists an $\lambda_{\bs{w}} > 0$ such that $$\nabla \hfw^\top (\bs{w} - \ws)  \geq \lambda_{\w, \bs{p}} \norm{\bs{w} - \ws}^2.$$
%\end{lemma}

%While the projection of all functions passing through $\ws$ and any $\w$ is strongly convex, however, $\tfw$ is not convex in general.

%Using Lemma \ref{lemma:sc}, we now show that $\tfw$ cannot have any first order stationary points apart from the unique global minimum. This implies that no matter where we are in the landscape, there always exist a descent direction. However, finding the descent direction requires the knowledge of $\ws$ and the whole point of stochastic optimization is to find $\ws$: thus \mkl~is experiencing what is commonly known as the chicken and the egg problem. In the rest of this paper, we show how this can be resolved using some elegant analysis.

%Let us define $\lambda$ as the strong-convexity parameter of $\tfw$ where $\lambda := min_{\bs{w}} \lambda_{\bs{w}}$.

\begin{theorem}[\textbf{Unique stationary point}] \label{theorem:nolocalminima}
	For the noiseless setting with no outliers, and under assumptions $1-3$, the expected \mkl~update satisfies $\nabla \tfw = 0$ if and only if $\w = \ws$.
\end{theorem}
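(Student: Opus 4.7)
The plan is to dispatch the two implications separately, with the harder direction reduced to the restricted secant inequality already provided by Lemma~\ref{lemma:sc}.

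For the easy direction $\w = \ws \Rightarrow \nabla \tfw = 0$, I would argue as follows. In the noiseless no-outlier setting we have $\ws \in C_i$ for every $i$, and since each $f_i$ is convex and continuously differentiable (Assumption~1), membership in the minimizer set implies $\nabla f_i(\ws) = \bs{0}$ for all $i$. Plugging this into the definition
\[
\nabla \widetilde{F}(\ws) \;=\; \sum_i p_{m_i(\ws)}(\ws)\,\nabla f_{m_i(\ws)}(\ws)
\]
every summand vanishes regardless of the weights, giving $\nabla \widetilde{F}(\ws) = \bs{0}$. The only subtlety worth flagging is that by Assumption~3 all of the $f_i(\ws)$ are tied, so the ordering $m_i(\ws)$ is not uniquely defined; but this is irrelevant because every individual gradient is zero, so any tie-breaking rule yields the same (zero) weighted sum.

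For the reverse direction $\nabla \tfw = 0 \Rightarrow \w = \ws$, I would invoke Lemma~\ref{lemma:sc} directly. Suppose $\nabla \widetilde{F}(\w) = \bs{0}$. Taking the inner product with $\w - \ws$ and applying Lemma~\ref{lemma:sc},
\[
0 \;=\; \nabla \widetilde{F}(\w)^\top (\w - \ws) \;\geq\; \lambda_{\w}\,\|\w - \ws\|^2.
\]
Since Lemma~\ref{lemma:sc} guarantees $\lambda_{\w} > 0$ at every $\w$, this forces $\|\w - \ws\| = 0$, i.e., $\w = \ws$. Both directions together give the biconditional.

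The main obstacle in this theorem is not the theorem statement itself, which reduces to a one-line application of the restricted secant inequality, but rather Lemma~\ref{lemma:sc} that underlies it; all the real work of handling the non-convex surrogate $\tilde{F}$ is already encapsulated there. One minor point I would be careful about in the write-up is to justify, for the forward direction, why ties in the sorted order at $\ws$ do not create any ambiguity in $\nabla \widetilde{F}(\ws)$; this is a genuine subtlety because $\widetilde{F}$ itself depends on a discrete sorting step, but it is resolved cleanly by the fact that every $\nabla f_i(\ws)$ simultaneously vanishes.
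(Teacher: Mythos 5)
Your proposal is correct and takes essentially the same approach as the paper: the reverse direction is precisely the paper's argument (the paper phrases it as a contradiction, but the computation is identical—apply Lemma~\ref{lemma:sc} to a stationary point and conclude $\|\w - \ws\| = 0$), while the forward direction simply makes explicit what the paper leaves implicit, namely that $\ws \in C_i$ for every $i$ forces each $\nabla f_i(\ws)$ to vanish. Your remark about tie-breaking in the sorted order at $\ws$ is a careful detail the paper omits, but it does not change the substance of the argument.
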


\subsection{Noiseless setting with Outliers}

In presence of outliers, the surrogate loss can have multiple local minima that are far from $\ws$ and indeed potentially even worse than what we could have obtained with vanilla SGD on the original loss function. We now analyze \mkl~in the simple setting of symmetric squared loss functions and try to gain useful insights into the landscape of loss function for the scalar setting. We would like to point out that the analysis in the next part serves as a clean template and can be extended for many other standard loss functions used in convex optimization. 

\paragraph{Squared loss in the scalar setting}
Figure \ref{fig:scalar} will be a handy tool for visualizing and understanding both the notation and results of this subsection.
Consider the case where all losses are squared losses, with all the clean samples centered at $w^*$ and all the outliers at $w_B$, but all having different Lipschitz constants. Specifically, consider:
%{\color{red}Remove the bold}

\begin{align} \label{eq:scalarloss}
f_i(w) = \begin{cases} &l_i (w -w^*)^2 ~~\forall~i \notin \mathbb{O} \\
&l_i (w -w_B)^2 ~~ \forall~i \in \mathbb{O}, 
\end{cases}
\end{align} 

Let $l_{m} := \min_{i\notin\mathbb{O}} l_i$ and Let $l_{M} := \max_{i\in\mathbb{O}} l_i$ and $l_{max} = \max_{i\in[n]} l_i$, $l_{min} = \min_{i \in [n]}l_i$. Let us define $\kappa=\dfrac{l_{max}}{l_{min}} \geq \dfrac{l_{M}}{l_{m}}$.
We initialize \mkl~at $w_0 = w_B$, a point where the losses of outlier samples are 0 and all the clean samples have non-zero losses. As a result at $w_B$, \mkl~ has a tendency to pick all the outlier samples with a higher probability than any of the clean samples. This does not bode well for the algorithm since this implies that the final stationary point will be heavily influenced by outliers.
Let $\bar{w}_{MKL}$ be the stationary point of \mkl~for this scalar case when initialized at $w_B$. 

Let us define $\widetilde{w}$ as follows: 
\begin{align} \label{eq:tildewwww}
\widetilde{w} :=\begin{cases}\begin{drcases} w \mid &w = \min_{\alpha} \alpha w^* + (1-\alpha)w_B, \alpha\in (0, 1),\quad f_{l_{m}}(w) = f_{l_{M}}(w) \end{drcases}\end{cases} 
\end{align}{} 
Thus, $\widetilde{w}$ is the closest point to $w_B$ on the line joining $w_B$ and $w^*$ where the loss function of one of the clean samples and one of the outliers intersect as illustrated in Figure \ref{fig:scalar}. 

By observation, we know for the above scalar case  $\widetilde{w}= \dfrac{\sqrt{l_m} w^* + \sqrt{l_M} w_B }{\sqrt{l_m} + \sqrt{l_M}}$. Let $\hat{p}(\w_0) = \sum_{j \in \mathbb{O}} p_j(\w_0)$ represent the total probability of picking outliers at the starting point $\w_0$. The maximum possible value that can be attained $\hat{p}$ over the entire landscape is given as:
\begin{align} \label{eq:pmax}
\hat{p}_{max}=\max_{w} \hat{p}(\w) = \sum_{i=1}^{\mathbb{|O|}} p_{m_i(\w)}(\w)
\end{align}{}  

The next condition gives a sufficient condition to avoid all the bad local minima are avoided no matter where we initialize. For the simple scalar case, the condition is:
\begin{condition} \label{cond:1}
	$\hat{p}_{max} < \dfrac{1}{1+\kappa\sqrt{\kappa}}$
\end{condition}{}

\begin{figure}
	\centering
	\includegraphics[scale=0.5]{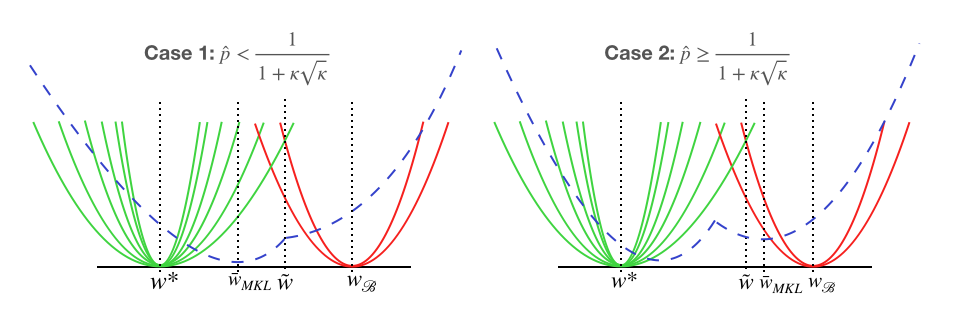}
	\caption{Illustration with conditions when bad local minima will or will not exist. Here, we demonstrate that even if we start at an initialization $\wb$ that assigns the highest probabilities to bad samples (red),  it is possible to avoid the existence of a bad local minima if Condition 1 is satisfied. Recursively, we show in Lemma \ref{lemma:vector} that it is possible to avoid all bad local minima and reach a good local minima (where the good samples have the highest probabilities)}
	\label{fig:scalar}
\end{figure}{}

To further elaborate on this, for the loss functions and $\widetilde{w}$ defined in equations \eqref{eq:scalarloss} and \eqref{eq:tildewwww} respectively, if condition 1 is not satisfied, then we cannot say anything about where \mkl~ converges. 
However, if condition 1 holds true, then we are in Case 1 (Figure \ref{fig:scalar}), i.e.
the stationary point attained by \mkl~will be such that it is possible to avoid the existence of the first bad local minima. The first bad local minima occurs by solving the optimization problem where the top-$|\mathbb{O}|$ highest probabilities are assigned to the bad samples. 

Following the above analysis recursively, we can show that all other subsequent bad local minimas are avoided as well, until we reach the local minima which assigns the largest $(n-|\mathbb{O}|)$ probabilities to the clean samples\footnote{Refer to Appendix section \ref{sec:vector} for further details on this discussion}. This indicates that irrespective of where we initialize in the $1D$ landscape, we are bound to end up at a local minima with the highest probabilities assigned to the clean samples. In the latter part of this section, we will show that \mkl~solution attained when Case 1 holds is provably better than the SGD solution.
However, if condition 1 is false (Case 2, Figure \ref{fig:scalar}), then it is possible that \mkl~gets stuck at any one of the many local minimas that exist close to the outlier center $w_B$ and we cannot say anything about the relative distance from $\ws$.

A key takeaway from the above condition is that \emph{for a fixed $n$ as $\kappa$ increases, we can tolerate smaller $\hat{p}$ and consequently smaller fraction of corruptions $\epsilon$}. For a fixed $\epsilon$ and $n$, increasing the parameter $k$ (upto $k < \frac{n}{2}$) in \mkl~leads to an increase in $\hat{p}$ and thus increasing $k$ can lead to the violation of the above condition. %when outliers have high probabilities. 
This happens because samples with lower loss will be picked with increasing probability as $k$ increases and as a result the propensity of \mkl~to converge towards the closest stationary point it encounters is higher.

\paragraph{Squared loss in the vector setting}
%Consider the line joining $\ws$ and $\w_C$ where $\w_C = C \bar{\w}$ for large\footnote{Note that we just need $\w_C$ for the purpose of landscape analysis and it is not a parameter of the algorithm and a possible value of $C=\max_{j\in\mathbb{O}}\norm{\w_{b_j}-\ws}$} C such that at $\w_C$, we have $\forall i \notin \mathbb{O}, \forall j \in \mathbb{O}, p_i(\w_C) < p_j(\w_C)$. Note that our bounds hold for any initialization, but here
%the sum of probabilities of the bad samples are equal to $\sum_{i=1}^{\mathbb{|O|}} p_{m_i(\w_C)}(\w_C)$. 
%We would like to remind that for any $\w$,  $p_{m_i(\w)}(\w)$ are ordered i.e. $p_{m_1(\w)}(\w)\geq p_{m_2(\w)}(\w)\geq \dots \geq p_{m_n(\w)}(\w) $.
The loss functions are redefined as follows: 
\begin{align} \label{eq:vectorloss}
&f_i(\w) = \begin{cases}{}
&l_i \norm{\w -\ws}^2 ~~\forall~i \notin \mathbb{O}  \\
&l_i \norm{\w -\w_{b_i}}^2 ~~ \forall~i \in \mathbb{O},  
\end{cases}
\end{align}

%where $\rho = \dfrac{\norm{w_{b_{min}}}}{\norm{w_{b_{max}}}}\dfrac{\cos{\left(\theta_{max}\right)}}{\cos{\left(\theta_{min}\right)}}$
Without loss of generality, assume that $2\delta<\norm{\w_{b_1}-\ws} \leq \norm{\w_{b_2}-\ws}\leq \dots \leq \norm{\w_{b_{|\mathbb{O}|}}-\ws}  $ and $\gamma = \dfrac{2\delta}{\norm{\w_{b_{|\mathbb{O}|}}-\ws}}$. Let $\bar{\w}$ be any stationary attained by \mkl.
Suppose $\theta_{i,\bar{\w}}$ be the angle between the line passing through $\w_{b_i}$ and $\ws$ and the line connecting $\bar{\w}$ and $\ws$. Let us define $\theta_{M,\bar{\w}} := \max_i\theta_{i,\bar{\w}}$
Consider $\kappa=\dfrac{\max_{i\in[n]} l_i}{\min_{i\in[n]} l_i}$. Let $\hat{p}(\w_0) = \sum_{j \in \mathbb{O}} p_j(\w_0)$ represent the total probability of picking outliers at the starting point $\w_0$. The maximum possible value that can be attained $\hat{p}$ is given as:
\begin{align} \label{eq:pmax}
\hat{p}_{max}=\max_{w} \hat{p}(\w) = \sum_{i=1}^{\mathbb{|O|}} p_{m_i(\w)}(\w)
\end{align}{} 
where for any $\w$,  $p_{m_i(\w)}(\w)$ are ordered i.e. $p_{m_1(\w)}(\w)> p_{m_2(\w)}(\w)>\dots >p_{m_n(\w)}(\w) $.

At $\ws$, by definition, we know that $\forall~i \notin \mathbb{O}$, $f_i(\ws)= 0$ and $\forall~j\in \mathbb{O}$, $f_j(\ws) > 0$. By continuity arguments, there exists a ball of radius $r>0$ around $\ws$, $\mathcal{B}_r(\ws)$, defined as follows:
\begin{align} \label{eq:defball}
\mathcal{B}_r(\ws) = \begin{cases}\begin{drcases}{} \w \mid  &f_i(\bs{w}) < f_j(\bs{w})~ \forall~i \notin \mathbb{O},~j \in \mathbb{O}, \\
& \norm{\w -\ws}\leq r \hspace{1cm}\end{drcases} \end{cases}
\end{align}{}
%For any $\w \in \mathcal{B}_r(\ws)$, every good loss is smaller than every bad loss.
%There exists an $R>0$ such that $\forall r\geq R$, the balls of radius $r$ and $R$ represent identical sets, i.e. $\mathcal{B}_r(\ws) = \mathcal{B}_{R}(\ws)$.
In the subsequent lemma, we show that that it is possible to drift into the ball $\mathcal{B}_r(\ws)$ where the clean samples have the highest probability or the lowest loss. \footnote{It is trivial to show the existence of a ball of radius $r>0$ for any set of continuously differentiable $f_i(.)$.}

\begin{lemma} \label{lemma:vector}
	Consider the loss function and $\mathcal{B}_r(\ws)$ as defined in equations \eqref{eq:vectorloss} and \eqref{eq:defball} respectively. Suppose $q = \dfrac{\cos{\theta_{M,\bar{\w}}}}{\gamma} - 1 +\dfrac{\sqrt{\kappa}\cos{\theta_{M,\bar{\w}}}}{\gamma}>0$ and $\hat{p}_{max}$ as defined in Equation \eqref{eq:pmax} satisfies $\hat{p}_{max} \leq \dfrac{1}{1 + \kappa q}$. Starting from any initialization $\w_0$, for any stationary point $\bar{\w}$ attained by \mkl, we have that $\bar{\w} \in \mathcal{B}_r(\ws)$
	%$\sum_{i \notin \mathbb{O}} p_i(\bar{\w}) = \sum_{i=1}^{n- |\mathbb{O}|} p_{m_i(\bar{\w})}(\bar{\w})$, i.e. the clean samples are assigned the highest probabilities by \mkl.
\end{lemma}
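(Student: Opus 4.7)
The plan is to mirror the restricted-secant-inequality strategy used to prove Lemma \ref{lemma:sc}, but now carefully track the adversarial contribution of the outlier gradients. I will argue by contradiction: suppose $\bar{\w}$ is a stationary point of \mkl~with $\bar{\w}\notin\mathcal{B}_r(\ws)$, and show that $\nabla\widetilde{F}(\bar{\w})^\top(\bar{\w}-\ws)>0$, contradicting $\nabla\widetilde{F}(\bar{\w})=0$.

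First I would exploit the quadratic form of each $f_i$ to decompose
\begin{align*}
\nabla\widetilde{F}(\bar{\w}) \;=\; 2\sum_{i\notin\mathbb{O}} p_i\,l_i\,(\bar{\w}-\ws) \;+\; 2\sum_{i\in\mathbb{O}} p_i\,l_i\,(\bar{\w}-\w_{b_i}),
\end{align*}
then substitute $\bar{\w}-\w_{b_i}=(\bar{\w}-\ws)-(\w_{b_i}-\ws)$ in each outlier term and take inner product with $\bar{\w}-\ws$. This yields
\begin{align*}
\tfrac12\nabla\widetilde{F}(\bar{\w})^\top(\bar{\w}-\ws) \;=\; \Big(\sum_{i=1}^n p_i l_i\Big)\|\bar{\w}-\ws\|^2 \;-\; \sum_{i\in\mathbb{O}} p_i\,l_i\,(\w_{b_i}-\ws)^\top(\bar{\w}-\ws).
\end{align*}
The first summand is an unconditional restoring force pulling toward $\ws$; the second captures the adversarial pull exerted by the outlier centers that must be controlled.

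Next I would invoke both the geometric and probabilistic hypotheses. For each outlier term, $(\w_{b_i}-\ws)^\top(\bar{\w}-\ws)=\|\w_{b_i}-\ws\|\,\|\bar{\w}-\ws\|\cos\theta_{i,\bar{\w}}$ by definition of $\theta_{i,\bar{\w}}$; bounding $\|\w_{b_i}-\ws\|\le 2\delta/\gamma$ and $\cos\theta_{i,\bar{\w}}$ by the worst-case $\cos\theta_{M,\bar{\w}}$ makes the adversarial term uniform over $i\in\mathbb{O}$. On the probabilistic side, $\sum_{i\notin\mathbb{O}} p_i l_i \ge (1-\hat{p}_{max})l_{min}$ and $\sum_{i\in\mathbb{O}} p_i l_i \le \hat{p}_{max}\,l_{max}$. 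After factoring out $l_{min}\|\bar{\w}-\ws\|^2$ and grouping terms, positivity reduces to the scalar inequality
\begin{align*}
(1-\hat{p}_{max}) \;-\; \kappa\,\hat{p}_{max}\,q \;\ge\; 0,
\end{align*}
which is precisely the rearrangement of the hypothesis $\hat{p}_{max}\le 1/(1+\kappa q)$. Since this inequality is strict whenever $\bar{\w}$ lies outside the ``clean-dominates'' region of $\mathcal{B}_r(\ws)$, any stationary point must in fact sit inside $\mathcal{B}_r(\ws)$.

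The main obstacle is teasing out exactly the factor $q=\cos\theta_{M,\bar{\w}}(1+\sqrt{\kappa})/\gamma-1$ --- in particular the $\sqrt{\kappa}$ (rather than $\kappa$) multiplier on $\cos\theta_{M,\bar{\w}}/\gamma$. I expect this to arise from pairing the worst-case Lipschitz ratio with a $\sqrt{\kappa}$-weighted bound on the outlier distance, mirroring the fact that the scalar computation produces $\widetilde{w}=(\sqrt{l_m}w^*+\sqrt{l_M}w_B)/(\sqrt{l_m}+\sqrt{l_M})$ rather than a naive linearly weighted average. A second subtlety is that the probabilities $p_i$ themselves depend on $\bar{\w}$ through the loss ordering; defining $\hat{p}_{max}$ as the supremum $\max_{\w}\sum_{j\in\mathbb{O}}p_j(\w)$ in \eqref{eq:pmax} sidesteps this by providing a $\w$-independent bound that closes the argument uniformly over candidate stationary points.
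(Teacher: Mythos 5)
Your setup is sound and is in fact the paper's own starting point in disguise: writing the stationarity condition $\sum_i p_i l_i(\bar{\w}-\ws) + \sum_{j\in\mathbb{O}}p_j l_j(\ws-\w_{b_j})=0$ and pairing it with $\bar{\w}-\ws$ is equivalent to the paper's step of solving the fixed-point equation for $\bar{\w}$ as a $p_i l_i$-weighted average of $\ws$ and the outlier centers and then applying the triangle inequality. But there is a genuine gap where you claim the argument "reduces to the scalar inequality $(1-\hat{p}_{max})-\kappa\hat{p}_{max}q\ge 0$." After dividing your identity by $l_{min}\norm{\bar{\w}-\ws}^2$, the adversarial term retains the factor $\norm{\w_{b_i}-\ws}/\norm{\bar{\w}-\ws}$, which is not $1/\gamma$ and blows up as $\bar{\w}\to\ws$; so positivity of the secant cannot hold uniformly off $\mathcal{B}_r(\ws)$, and what your computation actually yields is only an upper bound on $\norm{\bar{\w}-\ws}$ for any stationary point (the same bound the paper gets by triangle inequality). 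Converting that distance bound into membership in $\mathcal{B}_r(\ws)$ requires comparing it against the distance from $\ws$ to the nearest surface $\{f_{\text{clean}}=f_{\text{outlier}}\}$ along the direction of $\bar{\w}$ --- i.e., the crossing point $\widetilde{\w}=(\sqrt{l_m}\ws+\sqrt{l_M}\w_{b})/(\sqrt{l_m}+\sqrt{l_M})$ projected via $\cos\theta$ --- and it is exactly this computation that manufactures the $\sqrt{\kappa}$ and the precise form of $q$. You explicitly defer this step ("I expect this to arise from..."), but it is the mathematical heart of the lemma, not a detail; without it the hypothesis $\hat{p}_{max}\le 1/(1+\kappa q)$ never enters the argument in the form stated.

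Two smaller issues. First, your worst-case bound on the angles points the wrong way: since $\theta_{M,\bar{\w}}=\max_i\theta_{i,\bar{\w}}$, the quantity $\cos\theta_{M,\bar{\w}}$ is the \emph{minimum} of the cosines, so it cannot upper-bound the outlier pull $(\w_{b_i}-\ws)^\top(\bar{\w}-\ws)$; in the paper the cosine instead enters through the crossing-point distance (it sits in a denominator), which is a different role. Second, the paper's conclusion is reached by applying the crossing-point comparison \emph{recursively} across the successive surfaces where the loss ordering changes, ruling out each intermediate bad local minimum in turn until the iterate is trapped in the region where the $n-|\mathbb{O}|$ largest probabilities go to clean samples; a single application of your inequality, even if completed, establishes only that no stationary point lies beyond the first such surface.
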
{}
%In other words, for any stationary point $\bar{\w}$, we say that $p_i(\bar{\w}) > p_j(\bar{\w})$ for any $i \notin \mathbb{O}, j \in \mathbb{O}$, i.e. the probability of picking any clean sample is higher than the probability of picking any sample from the set of outliers. 
In other words, initializing at any point in the landscape, the final stationary point attained by \mkl~will inevitably assign the largest $n-|\mathbb{O}|$ probabilities to the clean samples.
The proof is availabe in Appendix Section \ref{sec:vector}.
For the scalar case, $d=1$, we have $\theta_{j,\bar{\w}} = 0~\forall~j$. If $\gamma=1$ and all the outliers are centered at the same point, then in the scalar setting the condition in Lemma \ref{lemma:vector} reduces to condition 1.

Note that, the above lemma leads to a very strong worst-case guarantee. It states that the farthest optimum will always be within a bowl of distance $r$ from $\ws$ no matter where we initialize. Moreover, as long as the condition is satisfied no matter where the outliers lie (can be adversarially chosen), \mkl~ always has the propensity to bring the iterates to a ball of radius $r$ around $\ws$. However, when the necessary conditions for its convergence are violated, the guarantees are initialization dependent. Thus, all the discussions in the rest of this section will be with respect to these worst case guarantees. However, as we see in the experimental section for both neural networks and linear regression, random initialization also seems to perform better than SGD.

\paragraph{Effect of $\kappa$}
A direct result of Lemma \ref{lemma:vector} is that higher the condition number of the set of quadratic loss functions, lower is the fraction $\epsilon$ of outliers the \mkl~can tolerate. This is because large $\kappa$ results in a small value of $\dfrac{1}{1 + \kappa q}$. This implies that $\hat{p}$ has to be small which in turn requires smaller fractions fo corruptions, $\epsilon$. 

\paragraph{Effect of $\gamma$:}  
The relative distance of the outliers from $\ws$ plays a critical role in the condition for Lemma \ref{lemma:vector}. We know that $\gamma \in (0,1]$.  $\gamma =1$ implies the outliers are equidistant from the optimum $\ws$. 
Low values of $\gamma$ lead to a large $q$ leading to the violation of the condition with $\hat{p}$ (since RHS in the condition is very small), which implies that one bad outlier can guarantee that the condition in Lemma \ref{lemma:vector} are violated. The guarantees in the above lemma are only when the outliers are not adversarially chosen to lie at very high relative distances from $\ws$. One way to avoid the set of outliers far far away from the optimum is to have a filtering step at the start of the algorithm like the one in \cite{diakonikolas2018sever}. We will refer this in Experiments.

\paragraph{Effect of $\cos{\theta_{j,\bar{\w}}}$:} At first glance, it may seem that $\cos{\theta_{j,\bar{\w}}}=0$ may cause $1+\kappa q < 0$ and since $\hat{p}(\w) >0 $, the condition in Lemma \ref{lemma:vector} may never be satisfied. Since, the term $\cos{\theta_{j,\bar{\w}}}$ shows up in the denominator of the loss associated with outlier centered at $\w_{b_j}$. Thus, low values of $\cos{\theta_{j,\bar{\w}}}$ implies high value of loss associated with the function centered at $\w_{b_j}$ which in turn implies the maximum probability attained by that sample can never be in the top-$|\mathbb{O}|$ probabilities for that $\bar{\w}$.

%For the scalar setting with squared loss and all the outliers are centered at the same point, the condition to avoid the worst \mkl~solution does not depend on the location of $w_B$. However, this is usually not the case as is evident in Lemma \ref{lemma:vector}. We will provide a detailed description of the analogous setting where outliers are centered at different points in the Appendix. 
\paragraph{Analysis for the general outlier setting:}

In this part, we analyze the fixed point equations associated with \mkl~and SGD and try to understand the behavior \emph{in a ball $\mathcal{B}_r(\ws)$ around the optimum?} For the sake of simplicity, we will assume that $\norm{\nabla f_i(\bs{w})} \leq G ~\forall~i\in \mathbb{O}$. Next, we analyze the following two quantities: i) distance of $\wsgd$ from $\ws$ and distance of the any of the solutions attained by $\wsm$ from $\ws$.

\begin{lemma}\label{lemma:sgdbound}
	Let $\wsgd$ indicate the solution attained SGD.
	Under assumptions 1-3, there exists an $\epsilon'$ such that for all $\epsilon \leq \epsilon'$, 
	\begin{align*}
	\epsilon G \leq (1 - \epsilon)  L \norm{\wsgd - \ws} 
	\end{align*}
\end{lemma}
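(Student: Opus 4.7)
My plan is to read the inequality off the fixed-point condition of expected vanilla SGD, combined with a Lipschitz estimate on the clean subobjective. Since uniform sampling makes the expected SGD direction equal to $\nabla F(\w)$, any fixed point $\wsgd$ of expected SGD satisfies $\sum_{i=1}^n \nabla f_i(\wsgd) = 0$. Splitting this sum into clean and outlier indices yields the force-balance identity
\[
\sum_{i \notin \mathbb{O}} \nabla f_i(\wsgd) \;=\; -\sum_{i \in \mathbb{O}} \nabla f_i(\wsgd),
\]
and the two quantities appearing in the lemma will come from estimating each side of this identity separately.

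For the clean side I would use that $\ws$ is by definition the minimizer of $\sum_{i \notin \mathbb{O}} f_i$, so $\sum_{i \notin \mathbb{O}} \nabla f_i(\ws) = 0$. Subtracting this zero from the clean sum at $\wsgd$ and applying Assumption~1 termwise with the triangle inequality gives
\[
\left\| \sum_{i \notin \mathbb{O}} \nabla f_i(\wsgd) \right\| \;=\; \left\| \sum_{i \notin \mathbb{O}} \bigl( \nabla f_i(\wsgd) - \nabla f_i(\ws) \bigr) \right\| \;\leq\; (1-\epsilon)\, n\, L\, \norm{\wsgd - \ws},
\]
where $\epsilon := |\mathbb{O}|/n$. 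Combining this with the outlier lower bound $\left\| \sum_{i \in \mathbb{O}} \nabla f_i(\wsgd) \right\| \geq \epsilon n G$ via the force-balance identity produces $\epsilon n G \leq (1-\epsilon) n L \norm{\wsgd - \ws}$, which becomes the claim after dividing by $n$.

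The main obstacle is justifying the outlier lower bound. The stated assumption $\norm{\nabla f_i(\w)} \leq G$ bounds individual outlier gradients only from above and, a priori, permits adversarial cancellation of their sum. The smallness hypothesis $\epsilon \leq \epsilon'$ is exactly what rules this out: one chooses $\epsilon'$ small enough that $\wsgd$ is forced to lie within a ball of radius $\delta$ around $\ws$ (a bootstrapping step using Assumption~2 and the clean-side bound just derived), and then uses Lipschitz continuity together with the separation $d(\ws, C_i) \geq 2\delta$ for $i \in \mathbb{O}$ to argue that the outlier gradients at $\wsgd$ are close to $\nabla f_i(\ws)$, which are non-vanishing and aligned enough to contribute at least $\epsilon n G$ in aggregate. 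Once this lower bound is in hand, the conclusion is immediate from the force-balance identity.
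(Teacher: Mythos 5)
Your overall route---the force-balance identity at the expected-SGD fixed point, followed by a termwise Lipschitz bound on the clean sum using $\sum_{i\notin\mathbb{O}}\nabla f_i(\ws)=0$---is exactly the paper's argument, and your clean-side estimate $(1-\epsilon)\,n\,L\,\norm{\wsgd-\ws}$ matches the paper's computation line for line. You have also correctly located the soft spot. Note, though, that the paper's own proof only ever derives \emph{upper} bounds on both sides of the identity (it shows $\norm{\sum_{i\in\mathbb{O}}\nabla f_i(\wsgd)}$ is at most $\epsilon n G$ and at most $(1-\epsilon)nL\norm{\wsgd-\ws}$) and then simply asserts, when the lemma is invoked in Theorem~\ref{theorem:relativebounds}, that for $\epsilon\leq\epsilon'$ the ordering $\epsilon G\leq(1-\epsilon)L\norm{\wsgd-\ws}$ holds. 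So on the one step where you go beyond the paper, you are attempting to supply an argument the paper itself omits.

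The gap is that the lower bound $\norm{\sum_{i\in\mathbb{O}}\nabla f_i(\wsgd)}\geq \epsilon n G$ cannot be derived from the hypothesis $\norm{\nabla f_i(\w)}\leq G$ for $i\in\mathbb{O}$: that is an upper bound on each summand, so even perfect alignment of the outlier gradients yields only $\leq \epsilon n G$, never $\geq$. Your bootstrap (force $\wsgd$ into a small ball around $\ws$, then use $d(\ws,C_i)\geq 2\delta$ to argue the outlier gradients are non-vanishing there) can at best produce a lower bound in terms of a quantity like $\min_{i\in\mathbb{O}}\norm{\nabla f_i(\ws)}$ minus a Lipschitz correction, and even that requires separately controlling cancellation among outlier gradients pointing in different directions; neither step recovers the constant $G$ appearing in the statement. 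To make the inequality follow, $G$ would have to be reinterpreted as a lower bound on (or the common value of) the per-outlier gradient norm near $\ws$, together with an alignment or single-outlier-center assumption. As written, the conclusion $\epsilon G\leq(1-\epsilon)L\norm{\wsgd-\ws}$ does not follow from your argument---nor, for the same reason, from the paper's.
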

Using Lemma \ref{lemma:sc}, we will define $\lambda$ as follows:
\begin{align}\label{eq:lambdasc}
\lambda:= \min_{\w} \lambda_{\w}
\end{align}{}
In Appendix Section \ref{sec:proofnolocalminima}, we show that $\lambda>0$, however the exact lower bounds for this $\lambda$ are loss function dependent. Naively, $\lambda_{\w} = \min_i p_i(\w) \lambda$
\begin{lemma} \label{lemma:m2lboundstrongconvexity}
	Let $\wsm$ be any first order stationary point attained by \mkl.
	Under assumptions 1-3, for a given $\epsilon <1$ and $\lambda$ as defined in equation \eqref{eq:lambdasc}, there exists a $k'$ such that for all $k \geq k'$,
	\begin{align*}
	\norm{\wsm -\ws}  \leq \dfrac{\epsilon^k G}{\lambda}
	\end{align*}
\end{lemma}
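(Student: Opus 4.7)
The plan is to exploit the first-order stationarity $\nabla \widetilde{F}(\wsm) = 0$, decompose the sum defining it into clean and outlier contributions, and then argue that the outlier contribution shrinks like $\epsilon^k$ as soon as $k$ is large enough to force $\wsm$ into the good ball $\mathcal{B}_r(\ws)$ of \eqref{eq:defball}. Concretely, I would start from
$$0 \;=\; \nabla \widetilde{F}(\wsm) \;=\; \sum_{i\notin\mathbb{O}} p_i(\wsm)\, \nabla f_i(\wsm) \;+\; \sum_{i\in\mathbb{O}} p_i(\wsm)\, \nabla f_i(\wsm),$$
take the inner product of both sides with $\wsm - \ws$, and rearrange so the clean piece is isolated on one side and the outlier piece on the other.

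For the clean piece, the $n-|\mathbb{O}|$ non-outlier losses all share $\ws$ as a minimizer, so they form exactly the noiseless-with-no-outliers subproblem to which (the argument behind) Lemma \ref{lemma:sc} applies; invoking it would give
$$\bigl(\wsm - \ws\bigr)^{\!\top} \sum_{i\notin\mathbb{O}} p_i(\wsm)\, \nabla f_i(\wsm) \;\geq\; \lambda\, \norm{\wsm - \ws}^2,$$
with $\lambda = \min_\w \lambda_\w > 0$ as in \eqref{eq:lambdasc}. For the outlier piece I would use Cauchy--Schwarz together with the assumed uniform bound $\norm{\nabla f_i(\wsm)} \leq G$ for $i\in\mathbb{O}$ to obtain
$$\Bigl|\bigl(\wsm - \ws\bigr)^{\!\top} \sum_{i\in\mathbb{O}} p_i(\wsm)\, \nabla f_i(\wsm)\Bigr| \;\leq\; \hp(\wsm)\, G\, \norm{\wsm - \ws}.$$

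The quantitative step is the bound on $\hp(\wsm)$. I would choose $k'$ large enough that the hypotheses of Lemma \ref{lemma:vector} are in force at every stationary point, which guarantees $\wsm \in \mathcal{B}_r(\ws)$. By the defining property of that ball, every clean loss at $\wsm$ is strictly smaller than every outlier loss; hence, sampling with replacement, \mkl~picks an outlier only when \emph{all} $k$ drawn indices are outliers, an event of probability exactly $(|\mathbb{O}|/n)^k = \epsilon^k$. Chaining the two inequalities yields $\lambda\, \norm{\wsm-\ws}^2 \leq \epsilon^k\, G\, \norm{\wsm-\ws}$, and dividing through gives the claim.

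The main obstacle is the first bound: Lemma \ref{lemma:sc} was stated for the fully noiseless setting, but the weights $p_i(\wsm)$ in the clean sum above are the \emph{full} MKL probabilities (which depend on all $n$ samples, outliers included), not the MKL probabilities one would obtain by running the algorithm on the clean subproblem in isolation. One therefore needs the restricted secant inequality for this mixed weighting with a \emph{positive} $\lambda$ uniform in $\w$; this is precisely the "Naively, $\lambda_\w = \min_i p_i(\w)\, \lambda$'' remark the authors make right after \eqref{eq:lambdasc}, and carefully lower bounding this quantity away from zero --- simultaneously with ensuring $k \geq k'$ so that $\wsm$ lies in $\mathcal{B}_r(\ws)$ --- is the delicate part of the argument.
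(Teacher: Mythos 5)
Your proposal follows essentially the same route as the paper's own proof: stationarity of the surrogate gradient, splitting into clean and outlier sums, taking the inner product with $\wsm-\ws$, lower-bounding the clean part via the restricted secant inequality of Lemma \ref{lemma:sc} (with the caveat you correctly flag about the mixed weights, which the paper handles with the naive bound $\lambda_{\w}=\min_i p_i(\w)\lambda$ and Assumption 2), and upper-bounding the outlier part by Cauchy--Schwarz, the gradient bound $G$, and the fact that the total outlier probability mass is at most $\epsilon^k$ once the stationary point lies in $\mathcal{B}_r(\ws)$. Your explicit justification of the $\epsilon^k$ step (all $k$ draws must be outliers) and of the role of $k'$ is merely a more careful rendering of what the paper leaves implicit, so there is nothing substantive to change.
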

Finally, we show that any solution attained by \mkl~is provably better than the solution attained by SGD. We would like to emphasize that this is a very strong result. The \mkl~has numerous local minima and here we show that even the worst\footnote{farthest solution from $\ws$} solution attained by \mkl~is closer to $\ws$ than the solution attained by SGD. Let us define
$\alpha(\epsilon, L, k, \lambda) = \dfrac{(1 - \epsilon)  L \epsilon^{k-1} }{\lambda} $
\begin{theorem} \label{theorem:relativebounds}
	Let $\wsgd$ and $\wsm$ be the the stationary points attained by SGD and \mkl~algorithms respectively for the noiseless setting with outliers. Under assumptions 1-3, for any $\wsm \in \mathcal{B}_r(\ws)$ and $\lambda$ defined in equation \eqref{eq:lambdasc}, there exists an $\epsilon'$ and $k'$ such that for all $\epsilon \leq \epsilon'$ and $k\geq k'$, we have $\alpha(\epsilon, L, k, \lambda)<1$ and, 
	\begin{align}
	\norm{ \wsm - \ws} < \alpha(\epsilon, L, k, \lambda)  \norm{ \wsgd - \ws} 
	\end{align}
\end{theorem}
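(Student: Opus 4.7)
The plan is a direct composition of the two preceding lemmas. Rearranging Lemma \ref{lemma:sgdbound} yields the lower bound $\|\wsgd - \ws\| \geq \epsilon G / ((1-\epsilon) L)$ whenever $\epsilon$ is below the threshold $\epsilon_1$ it requires, while Lemma \ref{lemma:m2lboundstrongconvexity} gives the upper bound $\|\wsm - \ws\| \leq \epsilon^k G / \lambda$ whenever $k$ exceeds the threshold $k_1$ it requires. Dividing these two bounds, the factor $G$ cancels, and we obtain
\[
\frac{\|\wsm - \ws\|}{\|\wsgd - \ws\|} \;\leq\; \frac{\epsilon^k G / \lambda}{\epsilon G/((1-\epsilon) L)} \;=\; \frac{(1-\epsilon)\, L\, \epsilon^{k-1}}{\lambda} \;=\; \alpha(\epsilon, L, k, \lambda),
\]
which is exactly the ratio claimed in the theorem.

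It then remains to choose $\epsilon'$ and $k'$ simultaneously. I would take $\epsilon' \leq \epsilon_1$ and then, for each such $\epsilon$, select $k'$ large enough that $\alpha(\epsilon, L, k, \lambda) < 1$. Since the factor $(1-\epsilon) L / \lambda$ is a constant depending only on the loss functions (not on $k$ or $\epsilon$ in any shrinking way), and $\epsilon^{k-1} \to 0$ as $k \to \infty$ for any fixed $\epsilon < 1$, the requirement $\alpha < 1$ is equivalent to $\epsilon^{k-1} < \lambda/((1-\epsilon)L)$, which is arranged by taking $k' = \max\{k_1,\, \lceil 1 + \log(\lambda/L)/\log \epsilon' \rceil\}$. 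The strict inequality in the conclusion (as opposed to $\leq$) comes from the fact that the restricted-secant bound behind Lemma \ref{lemma:m2lboundstrongconvexity} is strict away from $\ws$, so for any $\wsm \in \mathcal{B}_r(\ws) \setminus \{\ws\}$ the intermediate inequality is not attained with equality.

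The main obstacle in this argument is not the algebraic combination itself, which is almost immediate once the two lemmas are in hand, but rather verifying that the $\lambda$ from \eqref{eq:lambdasc} is a genuinely positive constant that does not degrade as $k$ grows or $\epsilon$ shrinks. Lemma \ref{lemma:sc} provides pointwise positivity $\lambda_\w > 0$, but the infimum over $\w$ that defines $\lambda$ must be shown to be bounded away from zero uniformly on the region of interest; otherwise the ratio $\epsilon^k G/\lambda$ could fail to be a meaningful bound. Once that uniformity is confirmed (and the remark after \eqref{eq:lambdasc} suggests the naive bound $\lambda_\w \geq \min_i p_i(\w) \lambda$ is enough in the ball $\mathcal{B}_r(\ws)$, where clean samples have dominant probability), the rest of the proof is simply choosing the thresholds and unwinding the division.
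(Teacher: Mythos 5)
Your argument is essentially identical to the paper's: the authors also chain Lemma \ref{lemma:sgdbound} ($\epsilon G \leq (1-\epsilon)L\norm{\wsgd-\ws}$) with Lemma \ref{lemma:m2lboundstrongconvexity} ($G \geq \lambda\norm{\wsm-\ws}/\epsilon^{k}$), cancel $G$, read off $\alpha(\epsilon,L,k,\lambda) = (1-\epsilon)L\epsilon^{k-1}/\lambda$, and then take $k$ large enough that $\alpha<1$. Your added remarks on the uniform positivity of $\lambda$ and on the choice of thresholds are sensible elaborations of points the paper leaves implicit, but they do not change the route.
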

For squared loss in scalar setting, we claimed that for a fixed $n$ and $\epsilon$, using a large $k$ may not be a good idea. Here, however once we are in the ball, $\mathcal{B}_r(\ws)$, using larger $k$ (any $k < \dfrac{n}{2}$), reduces $\alpha(\epsilon, L, k, \lambda)$ and allows \mkl~to get closer to $\ws$. %For more practical settings, as we will describe in the Experiments section, to get closer to the optimal $\ws$ in presence of outliers we can re.
%The proof of this theorem is available in Section \ref{sec:theoremrelativebounds}.

The conditions required in Lemma \ref{lemma:vector} and Theorem \ref{theorem:relativebounds} enable us to provide guarantees for only a subset of relatively well-conditioned problems. We would like to emphasize that the bounds we obtain are worst case bounds and not in expectation. As we will note in the Section \ref{sec:experiments} and the Appendix, however these bounds may not be necessary, for standard convex optimization problems \mkl~easily outperforms SGD. 

\section{Convergence Rates}\label{sec:convergence}
% {\color{red}  Explain the convergence in noiseless case cite papers}
In this section, we go back to the in expectation convergence analysis which is standard for the stochastic settings. 
For smooth functions with strong convexity, \citep{moulines2011non,needell2014stochastic} provided guarantees for linear rate of convergence. We restate the theorem here and show that the theorem still holds for the non-convex landscape obtained by \mkl~in noiseless setting.  
\begin{lemma}[\textbf{Linear Convergence} \citep{needell2014stochastic}] \label{lemma:rachel}
	Let $F(\w) = \mathbb{E}[f_i(\bs{\w})]$ be $\lambda$-strongly convex. Set $\sigma^2= \mathbb{E}[\|\nabla f_i(\ws)\|^2]$ with $\bs{w}^*:=argmin F(\bs{w})$. Suppose  $\eta\leq \dfrac{1}{\sup_i L_i}$. Let $\bs{\Delta}_t =\bs{w_{t}} - \ws$. After  $T$ iterations, SGD satisfies:
	\begin{align}
	\mathbb{E}\left[\|\bs{\Delta}_T\|^2\right]\leq (1-2\eta \hat{C})^T \|\bs{\Delta}_0 \|^2+\eta R_\sigma \label{eq:convergence}
	\end{align}
	where $\hat{C}= \lambda (1-\eta \sup_i L_i)$ and $R_\sigma=  \dfrac{ \sigma^2}{\hat{C}}$.
\end{lemma}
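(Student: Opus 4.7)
The plan is to prove this via a one-step descent inequality for $\|\bs{\Delta}_t\|^2$ followed by a geometric telescoping of the noise term. Starting from the SGD update $\bs{w_{t+1}} = \bs{w_t} - \eta\nabla f_{i_t}(\bs{w_t})$, I would expand
$$\|\bs{\Delta}_{t+1}\|^2 = \|\bs{\Delta}_t\|^2 - 2\eta \langle \nabla f_{i_t}(\bs{w_t}),\bs{\Delta}_t\rangle + \eta^2 \|\nabla f_{i_t}(\bs{w_t})\|^2,$$
and take expectation conditional on $\bs{w_t}$. Unbiasedness, $\mathbb{E}[\nabla f_{i_t}(\bs{w_t})\mid\bs{w_t}]=\nabla F(\bs{w_t})$, turns the cross term into $-2\eta\langle\nabla F(\bs{w_t}),\bs{\Delta}_t\rangle$, which by $\lambda$-strong convexity of $F$ (together with $\nabla F(\ws)=0$) is at most $-2\eta\lambda\|\bs{\Delta}_t\|^2$.

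The key ingredient is a second-moment bound on the stochastic gradient that leaves behind only the variance at the optimum $\sigma^2$. I would decompose $\nabla f_i(\bs{w_t}) = (\nabla f_i(\bs{w_t}) - \nabla f_i(\ws)) + \nabla f_i(\ws)$ and invoke the co-coercivity identity for each $L_i$-smooth convex $f_i$,
$$\|\nabla f_i(\bs{w_t}) - \nabla f_i(\ws)\|^2 \leq 2L_i\bigl(f_i(\bs{w_t}) - f_i(\ws) - \langle \nabla f_i(\ws),\bs{\Delta}_t\rangle\bigr).$$
Averaging over $i$, using $\mathbb{E}_i[\nabla f_i(\ws)]=\nabla F(\ws)=0$ and $\mathbb{E}_i[\|\nabla f_i(\ws)\|^2]=\sigma^2$, and bounding $L_i\leq\sup_i L_i$, yields an inequality of the form $\mathbb{E}_i[\|\nabla f_i(\bs{w_t})\|^2] \lesssim \sup_i L_i\,\langle\nabla F(\bs{w_t}),\bs{\Delta}_t\rangle + \sigma^2$ (with absolute constants chosen to match the statement). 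Substituting back and grouping the $\langle\nabla F(\bs{w_t}),\bs{\Delta}_t\rangle$ contributions produces
$$\mathbb{E}\bigl[\|\bs{\Delta}_{t+1}\|^2\mid\bs{w_t}\bigr] \leq (1-2\eta\hat{C})\|\bs{\Delta}_t\|^2 + 2\eta^2\sigma^2,$$
with $\hat{C}=\lambda(1-\eta\sup_i L_i)$; the step-size hypothesis $\eta \leq 1/\sup_i L_i$ is precisely what makes $\hat{C}>0$ and $1-2\eta\hat{C}\in[0,1)$.

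Iterating this one-step contraction by taking total expectation and summing the geometric series $\sum_{j=0}^{T-1}(1-2\eta\hat{C})^j \leq 1/(2\eta\hat{C})$ produces $\mathbb{E}[\|\bs{\Delta}_T\|^2] \leq (1-2\eta\hat{C})^T\|\bs{\Delta}_0\|^2 + \eta\sigma^2/\hat{C}$, which is exactly the claim with $R_\sigma=\sigma^2/\hat{C}$. In the noiseless with no-outlier setting, the common-minimizer assumption (Assumption 3) forces $\nabla f_i(\ws)=0$ for every $i$, so $\sigma^2=0$ and the bound collapses to pure linear contraction; this is why the Needell--Ward template continues to apply even though, in the \mkl{} analysis, the surrogate $\tilde{F}$ is non-convex: Lemma \ref{lemma:sc} supplies a restricted secant inequality that plays the role of strong convexity wherever it is used above.

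The main obstacle is the second-moment bound on $\nabla f_{i_t}$. A naive application of smoothness gives $\|\nabla f_i(\bs{w_t})\|^2 \lesssim L_i^2\|\bs{\Delta}_t\|^2$, producing a term proportional to $\eta^2 \sup_i L_i^2\|\bs{\Delta}_t\|^2$ that destroys the contraction at step sizes up to $1/\sup_i L_i$. Co-coercivity is exactly the tool that converts a squared gradient into a first-order quantity proportional to $\langle\nabla F(\bs{w_t}),\bs{\Delta}_t\rangle$, which can then be absorbed into the strong-convexity decrease while preserving the factor $(1-\eta\sup_i L_i)$ that appears in $\hat{C}$; getting the book-keeping exactly right so that the remaining noise contribution reduces to $\sigma^2$ (and not a larger quantity scaling with $\|\bs{\Delta}_t\|^2$) is the only delicate part of the argument.
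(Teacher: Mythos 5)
The paper does not prove this lemma itself --- it is imported verbatim from Needell, Ward and Srebro --- but your sketch is the standard proof of that result and coincides with the one-step-expansion-plus-co-coercivity technique the paper uses in its appendix to prove the generalization (Theorem 3), so the approach is correct and essentially the same. One small caveat on constants: to land exactly on $\hat{C}=\lambda(1-\eta\sup_i L_i)$ you should apply co-coercivity in the gradient form $\|\nabla f_i(\w)-\nabla f_i(\ws)\|^2\le L_i\langle\nabla f_i(\w)-\nabla f_i(\ws),\,\w-\ws\rangle$ (as the paper does via its citation of Lemma A.1 of Needell et al.), since the Bregman-divergence form you quote costs an extra factor of two once you convert $F(\bs{w_t})-F(\ws)$ back into $\langle\nabla F(\bs{w_t}),\bs{\Delta}_t\rangle$, which would degrade the contraction constant to $\lambda(1-2\eta\sup_i L_i)$.
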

In the noiseless setting, we have $ \|\nabla f_i(\ws)\|=0$ and so $\sigma:=0$. $\bs{w}^*$ in \eqref{eq:convergence} is the same as $\bs{w}^*$  stated in Theorem \ref{theorem:nolocalminima}. Even though above theorem is for SGD, it still can be applied to our algorithm \ref{alg:min2loss}. At each iteration there exists a parameter $\lambda_{\bs{w_t}}$  that could be seen as the strong convexity parameter (c.f. Lemma \ref{lemma:sc}). For MKL-SGD, the parameter $\lambda$ in \eqref{eq:convergence} should be  $\lambda = \min_{t}\lambda_{\bs{w}_t}$. Thus, MKL-SGD algorithm still guarantees \textit{linear convergence} result but with an implication of slower speed of convergence than standard SGD.  

However, Lemma \ref{lemma:rachel}  will not hold for \mkl~in noisy setting since there exists no strong convexity parameter.  Even for noiseless setting, the rate of convergence for MKL-SGD given in Lemma \ref{lemma:rachel} is not tight.  The upper bound in \eqref{eq:convergence} is loosely set to the constant $\lambda:=\min_t\lambda_{\bs{w}_t}$ for all the iterations. We fix it by concretely looking at each iteration. We give a general bound for the any stochastic algorithm (c.f. Theorem \ref{thm:R}) for both noiseless and noisy setting in absence and presence of outliers. %Consider $\ws$ as defined in the problem setup.
\begin{theorem}[\textbf{Distance to $\ws$}]  \label{thm:R} 
	Let $\bs{\Delta}_t =\bs{w_{t}} - \ws$. Denote the strong convexity parameter $\lambda_{good}$  for all the good samples. 
	Let 
	$$ \psi = 2\eta_t  \lambda_{good }(1-\eta_t \sup_iL_i)\min_{ i\notin \mathbb{O}}p_i(\bs{w_t})$$
	%	\mathbb{E}_i\left[ \norm{\bs{w_{t+1}} - \wg}^2|\bs{w_{t}} \right]
	%	\leq & \left( 1-  2\eta_t  \lambda_{good }(1-\eta_t \sup_iL_i)\min_{ i\in \mathcal{G}}p_i(\bs{w_t}) \right)\norm{\bs{w_t} -\wg}^2+ R_t 
	%$$\psi = 2\eta_t  \lambda_{good }(1-\eta_t \sup_iL_i) \left(1 + \lambda_w \left(\frac{N\sum_{i\notin \mathbb{O}} p_i(\bs{w_t})}{N - \mathbb{O}} - 1 \right)\right)$$
	Suppose at $t^{th}$ iteration, the stepsize is set as $\eta_t$, then conditioned on the current parameter $\bs{w}_t$, the expectation of the distance between the $\bs{w}_{t+1}$ and $\ws$ can be upper bounded as:
	\begin{align}
	\mathbb{E}_i\left[ \norm{\bs{\Delta}_{t+1}}^2|\bs{w_{t}} \right]
	\leq & \left( 1-  \psi \right)\|\bs{\Delta}_{t}\|^2+ \eta_t R_t \label{eq:residual}
	\end{align}
	where 
	%	\begin{footnotesize}
	%		\begin{align*}
	%		R_t
	%		=&- \psi \sum_{i \notin \mathbb{O}} \dfrac{Np_i(\bs{w_t})-1}{N-\abs{\mathbb{O}}}\langle{ \bs{\Delta}_{t}, \nabla f_{i}(\bs{w_t}) - \nabla f_{i}(\ws) \rangle}	+ 2\sum_{i\notin \mathbb{O}} p_i(\bs{w_t})\left(\eta_t \norm{\nabla f_{i}( \ws)}^2-  \langle{ \bs{\Delta}_{t},\nabla f_{i}( \ws)\rangle}\right)\\
	%		& + \sum_{i\in \mathbb{O}}p_i(\bs{w_t}) \left( \eta_t\norm{\nabla f_{i}( \bs{w_{t}})}^2+ \left(f_{i}(\ws)- f_i(\bs{w_t} )\right)\right)
	%		\end{align*}
	\begin{align*}
	R_t 
	=	& - 2  \sum_{i\notin \mathbb{O}} p_i(\bs{w_t}) \langle{ \bs{w_t} - \wg,\nabla f_{i}( \wg)\rangle}+ \sum_{i\in \mathbb{O}}p_i(\bs{w_t}) \left(2\eta_t\norm{\nabla f_{i}( \wg)}^2+ \eta_t\norm{\nabla f_{i}( \bs{w_{t}})}^2+2\left(f_{i}(\wg)- f_i(\bs{w_t} )\right) \right)
	\end{align*}
	%	\end{footnotesize}
\end{theorem}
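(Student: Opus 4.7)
The plan is to adapt the standard Needell--Srebro--Ward style SGD convergence analysis to the sampling distribution $p_i(\bs{w_t})$ induced by \mkl, splitting the sum over sample indices into good ($i \notin \mathbb{O}$) and outlier ($i \in \mathbb{O}$) parts, because strong convexity is only available on the good sub-problem. The starting point is to expand the one-step squared error
\begin{align*}
\|\bs{\Delta}_{t+1}\|^2 = \|\bs{\Delta}_t\|^2 - 2\eta_t \langle \bs{\Delta}_t, \nabla f_{i_t}(\bs{w_t})\rangle + \eta_t^2 \|\nabla f_{i_t}(\bs{w_t})\|^2,
\end{align*}
and then take the conditional expectation over $i_t \sim p_i(\bs{w_t})$, arriving at an expression of the form $\|\bs{\Delta}_t\|^2 + \sum_{i \notin \mathbb{O}} p_i T_i^G + \sum_{i \in \mathbb{O}} p_i T_i^B$ to be bounded term-by-term.

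For each good index I would extract the contraction $-\psi\|\bs{\Delta}_t\|^2$ as follows. Add and subtract $\nabla f_i(\ws)$ inside the linear and quadratic terms; then combine using the co-coercivity inequality for convex $L_i$-smooth functions, $\|\nabla f_i(\bs{w_t}) - \nabla f_i(\ws)\|^2 \leq L_i \langle \nabla f_i(\bs{w_t}) - \nabla f_i(\ws), \bs{\Delta}_t\rangle$, so that the leading piece reduces to $-2\eta_t(1 - \eta_t L_i)\langle \bs{\Delta}_t, \nabla f_i(\bs{w_t}) - \nabla f_i(\ws)\rangle$. Applying $\lambda_{good}$-strong convexity then bounds this by $-2\eta_t(1 - \eta_t L_i)\lambda_{good}\|\bs{\Delta}_t\|^2$. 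Summing over $i \notin \mathbb{O}$ and lower-bounding $\sum_{i \notin \mathbb{O}} p_i(1 - \eta_t L_i) \geq (1 - \eta_t \sup_i L_i)\min_{i \notin \mathbb{O}} p_i(\bs{w_t})$ produces exactly the $-\psi\|\bs{\Delta}_t\|^2$ coefficient. The leftover per-sample $-2\eta_t\langle \bs{\Delta}_t, \nabla f_i(\ws)\rangle$ contributions, after pulling a common $\eta_t$ out front, form the first sum of $R_t$.

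For each outlier index strong convexity is unavailable, so I would use only plain convexity of $f_i$: namely $-\langle \bs{\Delta}_t, \nabla f_i(\bs{w_t})\rangle \leq f_i(\ws) - f_i(\bs{w_t})$, which furnishes the $2(f_i(\ws) - f_i(\bs{w_t}))$ contribution in $R_t$. The $\eta_t^2\|\nabla f_i(\bs{w_t})\|^2$ term is retained in closed form and becomes the $\eta_t\|\nabla f_i(\bs{w_t})\|^2$ piece after factoring out $\eta_t$; the $2\eta_t\|\nabla f_i(\ws)\|^2$ slack arises from the $\|a+b\|^2 \leq 2\|a\|^2 + 2\|b\|^2$ split that one needs to separate gradient norms at $\bs{w_t}$ from those at $\ws$ on the outlier terms. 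Collecting the contraction coefficient and packaging all remaining contributions inside a single $\eta_t R_t$ bracket yields the claimed recursion.

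The step I expect to be the main obstacle is the bookkeeping. The step-size constraint $\eta_t \leq 1/\sup_i L_i$ must be invoked precisely so that every $(1 - \eta_t L_i)$ factor is nonnegative, which is what lets the strong-convexity bound be applied with the correct sign; each algebraic piece must be counted exactly once so that $\psi$ and $R_t$ disjointly partition the remaining terms; and the deliberate weakening $\sum_{i \notin \mathbb{O}} p_i \rightsquigarrow \min_{i \notin \mathbb{O}} p_i$, though loose, is the simplification that lets the contraction coefficient be stated in terms of a single scalar minimum-probability rather than a sum that still depends on the whole distribution. The asymmetric handling of good vs.\ outlier indices---strong convexity for the former, only convexity for the latter---is precisely what dictates the two-sum structure of $R_t$.
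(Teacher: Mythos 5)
Your route mirrors the paper's proof at the structural level---the one-step expansion, conditional expectation under $p_i(\bs{w_t})$, co-coercivity on the good samples, plain convexity on the outliers---but it contains a genuine gap at the step that produces the contraction. You apply ``$\lambda_{good}$-strong convexity'' sample-by-sample, i.e.\ you bound $\langle \bs{\Delta}_t, \nabla f_i(\bs{w_t})-\nabla f_i(\ws)\rangle \geq \lambda_{good}\norm{\bs{\Delta}_t}^2$ for each individual good index $i$. Under the paper's assumptions this is false in general: Assumption 1 makes each $f_i$ merely convex, and strong convexity is assumed only for the \emph{aggregate} good loss (Assumption 2 in the main text; the assumption on $F_{good}$ in the appendix). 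In the paper's motivating examples---per-sample squared losses in regression---each $f_i$ is a rank-one quadratic when $d>1$ and has no strong convexity whatsoever. The correct order of operations, and the one the paper uses, is the reverse of yours: since each inner product $\langle \bs{\Delta}_t, \nabla f_i(\bs{w_t})-\nabla f_i(\ws)\rangle$ is nonnegative by gradient monotonicity, and each factor $1-\eta_t L_i$ is nonnegative thanks to $\eta_t\le 1/\sup_i L_i$, one first pulls $(1-\eta_t\sup_i L_i)$ and then $\min_{i\notin\mathbb{O}}p_i(\bs{w_t})$ out of the weighted sum, leaving the \emph{unweighted} sum $\sum_{i\notin\mathbb{O}}\langle \bs{\Delta}_t, \nabla f_i(\bs{w_t})-\nabla f_i(\ws)\rangle$, and only then invokes strong convexity of the good-sample sum. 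This also corrects your interpretation of the $\min_{i\notin\mathbb{O}}p_i(\bs{w_t})$ factor: it is not a ``deliberate weakening'' adopted for a cleaner constant; it is forced, because strong convexity is simply unavailable until the probability weights have been stripped off.

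A secondary issue is the bookkeeping of the $2\eta_t\norm{\nabla f_i(\wg)}^2$ term. You attribute it to the $\norm{a+b}^2\leq 2\norm{a}^2+2\norm{b}^2$ split applied to the \emph{outlier} quadratic terms, while simultaneously retaining the raw $\eta_t\norm{\nabla f_i(\bs{w_t})}^2$ piece for those same indices; a single split either replaces the raw term or is not performed, so this is double counting. In the derivation you actually describe, the split is required on the \emph{good} terms (it is what makes co-coercivity applicable there), and it necessarily leaves behind $2\eta_t^2\norm{\nabla f_i(\wg)}^2$ for $i\notin\mathbb{O}$, which the $R_t$ you state omits. In fairness, the paper itself is inconsistent here: its appendix proof yields the $\norm{\nabla f_i(\wg)}^2$ terms summed over the good samples, while the main-text statement places them in the outlier sum, and the two agree only in the noiseless-with-outliers case where $\nabla f_i(\wg)=0$ for all $i\notin\mathbb{O}$. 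Still, as written your accounting does not reproduce the $R_t$ you claim.
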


Theorem \ref{thm:R} implies that for any stochastic algorithm in the both noisy and noiseless setting, outliers can make the upper bound ($R_t$) much worse as it produces an extra term (the third term in $R_t$). \emph{The third term in $R_t$ has a lower bound that could be an increasing function of $|\mathbb{O}|$}. However, its impact can be reduced by appropriately setting $p_i(\bs{w_t})$, for instance using a larger $k$ in \mkl.  In the appendix, we also provide a sufficient condition (Corollary 1 in the Appendix) when \mkl~is always better than standard SGD (in terms of its distance from $\ws$ in expectation).  

The convergence rate depends on the constant $\psi \propto \min_{ i\notin \mathbb{O}} p_i(\bs{w_t}) $. Note that this term $ \min_{ i\notin \mathbb{O}} p_i(\bs{w_t}) $ is not too small for our algorithm MKL-SGD since it is a minimum among all the \emph{good}   sample (not including the outliers).  However, when compared with vanilla SGD where $ \min_{ i\notin \mathbb{O}} p_i(\bs{w_t}) =1/N$, $\min_{ i\notin \mathbb{O}} p_i(\bs{w_t})$ with $p_i(\bs{w_t})$ defined in \eqref{eq:probability} for MKL-SGD, in some sense, could be smaller than $1/N$. For instance, in the experiments given in Figure 5 (a)-(c) (Appendix \ref{sec:linReg}), the slope of SGD is steeper than MKL-SGD, which implies that $\psi^{(MKL)} < \psi^{(SGD)} $.

To understand the residual term $R_t$. Let us take the noiseless setting with outliers for an example.  We have $\nabla f_i(\ws) = 0$ and  $f_i(\ws)=0$ for all $i\notin \mathbb{O}$. But for  $i\in \mathbb{O}$, $\nabla f_i(\ws) \neq 0$ and  $f_i(\ws)\neq 0$.  Then the term $R_t$ can be reduced to
\begin{align}
R_t = &  \sum_{i\in \mathbb{O}}p_i(\bs{w_t}) \left(2\eta_t\norm{\nabla f_{i}( \wg)}^2+ \eta_t\norm{\nabla f_{i}( \bs{w_{t}})}^2+2\left(f_{i}(\wg)- f_i(\bs{w_t} )\right) \right) \label{eq:expressR}
\end{align} If  we are at the same point $\bs{w_t}$ for both SGD and MKL-SGD and $p_i(\bs{w_t})< \dfrac{1}{N}$ for $i \in \mathbb{O}$, we have
$R_t^{(SGD)}>R_t^{(MKL)}$. It means that  MKL-SGD could reach to a neighbor with a radius that is possibly smaller than vanilla SGD algorithm,  with a rate proportional to $ \min_{ i\notin \mathbb{O}} p_i(\bs{w_t}) $ but not necessarily faster than vanilla SGD.
% The  first term in  \eqref{eq:expressR}  could be positive or negative depending on the distributions of $\{p_i(\bs{w_t})\}_{i \notin \mathbb{O}}$ and $\{\langle{ \bs{\Delta}_{t}, \nabla f_{i}(\bs{w_t}) \rangle}	\}_{i \notin \mathbb{O}}$. 
%% But for convex function, we always has for $i\notin \mathbb{O}$
%% $$-\langle{ \bs{\Delta}_{t}, \nabla f_{i}(\bs{w_t}) \rangle} =  \langle{ -\nabla f_{i}(\bs{w_{t}}), \wg- \bs{w_t} \rangle} \geq \dfrac{1}{L_i} \| \nabla f_{i}(\bs{w_t}) \|$$

%\input{Noiseless_regression_outliers.tex}
\begin{figure}[!htbp] 
	\centering
	\includegraphics[scale=0.32]{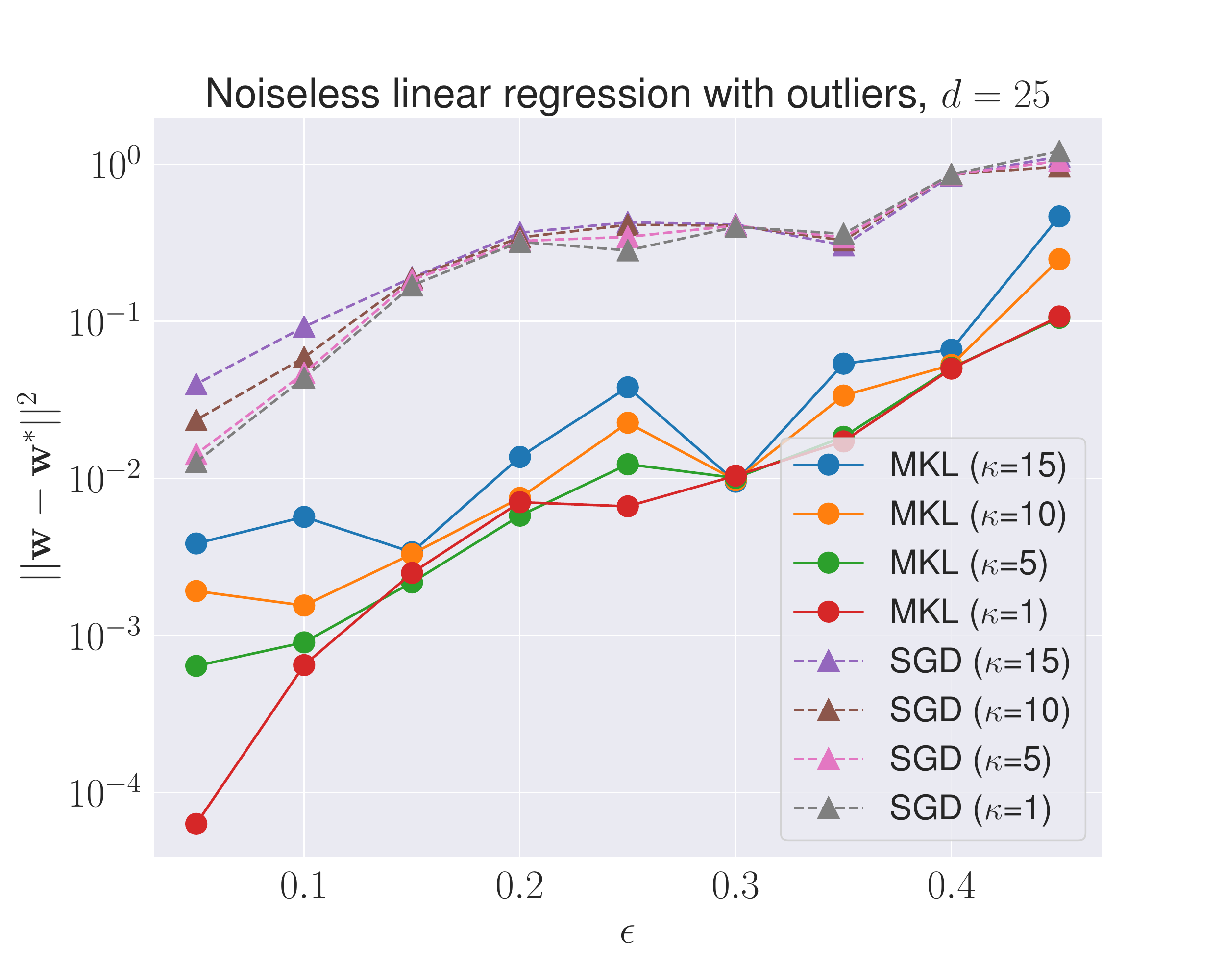}
	\includegraphics[scale=0.32]{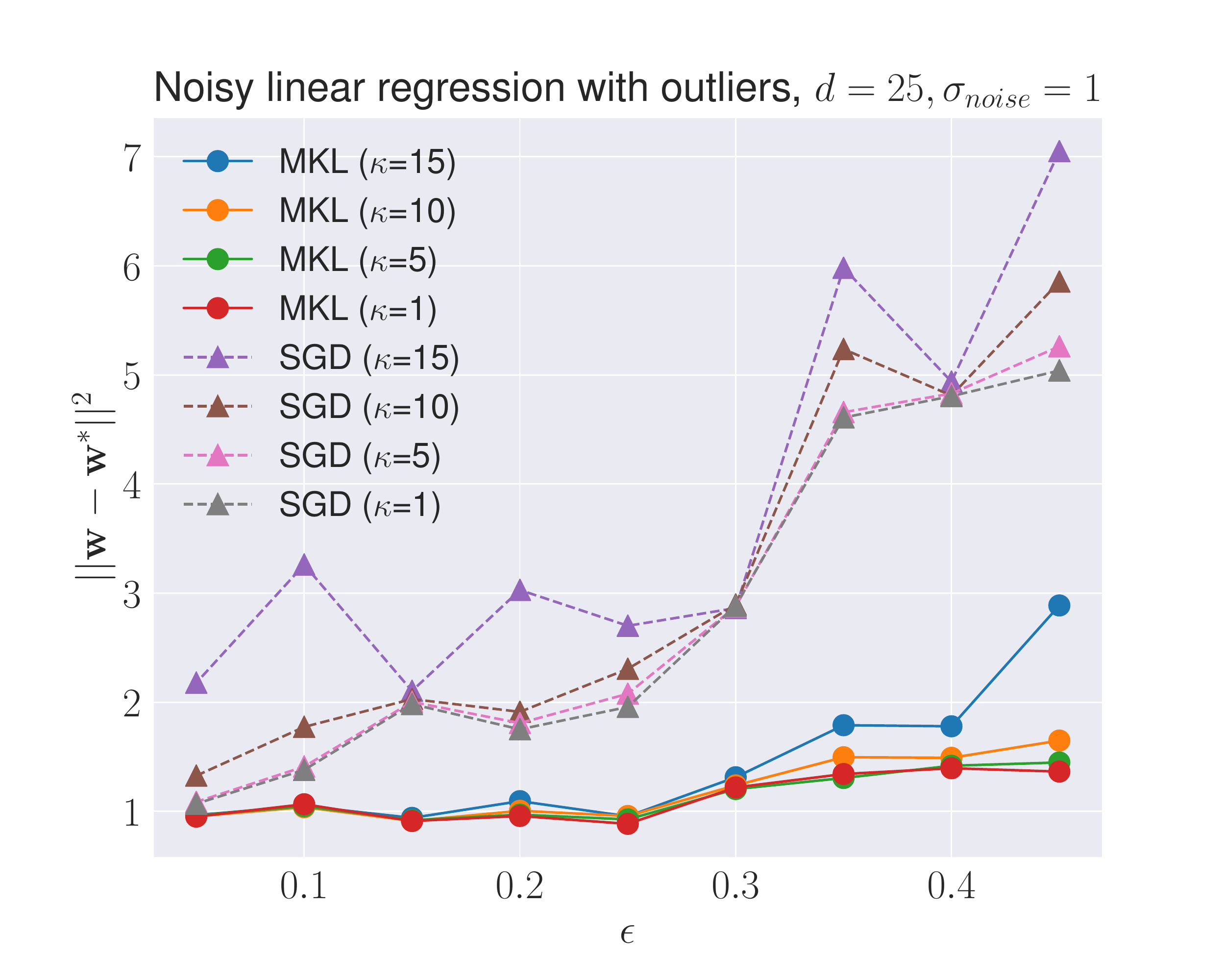}
	\caption{Comparing the performance of MKL-SGD ($k=2$) and SGD for different values of $\kappa$ in noiseless and noisy linear regression against varying fraction of outliers.}\label{fig:f2}
\end{figure}
\section{Experiments} \label{sec:experiments}
In this section, we compare the performance of \mkl~ and SGD for synthetic datasets for linear regression and small-scale neural networks.

\subsection{Linear Regression}
For simple linear regression, we assume that $X_i$ are sampled from normal distribution with different condition numbers. $X_i \sim \mathcal{N}(0, \bs{D})$ where $\bs{D}$ is a diagonal matrix such that $D_{11} = \kappa$ and $D_{ii}=1$ for all $i$). For the noisy case, we assume additive Gaussian noise with mean $0$ and variance $1$. We compare the performance of \mkl~and SGD for different values of $\kappa$ (Fig. \ref{fig:f2}) under noiseless and noisy settings against varying levels of corruption $\epsilon$. It is important to note that different $\kappa$ values correspond to different rates of convergence. To ensure fair comparison, we run the algorithms till the error values stop decaying and take the distance of $\ws$ from the exponential moving average of the iterates.

\begin{table*}[!h]
	\centering
	\begin{tabular}{|c|c|c|c|c|c|c|c|} 
		\hline
		\textbf{Dataset}  &\multicolumn{3}{c|}{\textbf{MNIST}} &\multicolumn{3}{c|}{\textbf{CIFAR10}}  \\ 	\hline
		\backslashbox{$\qquad \qquad~~~~\epsilon$}{Optimizer} &\textbf{SGD} &\textbf{\mkl}  &\textbf{Oracle} &\textbf{SGD} &\textbf{\mkl}  &\textbf{Oracle}\\ \hline 
		$0.1$ &\textbf{96.76}  &96.49 &98.52 &79.1  &\textbf{81.94} &84.56 \\  \hline
		$0.2$ &92.54 &\textbf{95.76} &98.33 &72.29 &\textbf{77.77} &84.40\\ \hline
		$0.3$ &85.77 &\textbf{95.96} &98.16 &63.96 &\textbf{66.49} &84.66\\ \hline
		$0.4$ &71.95 &\textbf{94.20} &97.98 &52.4 &\textbf{53.57} &84.42 \\ \hline
	\end{tabular}
	\begin{center}
		\caption{Comparing the test accuracy of SGD and \mkl ($k=5/3$) over MNIST and CIFAR-10 datasets in presence of corruptions via directed label noise.}	    
	\end{center}{}
\end{table*}

\subsection{Neural Networks}

For deep learning experiments, our results are in presence of corruptions via the directed noise model. In this corruption model, all the samples of class $a$ that are in error are assigned the same wrong label $b$. This is a stronger corruption model than corruption by random noise (results in Appendix). For the MKL-SGD algorithm, we run a more practical batched (size $b$) variant such that if $k=2$ the algorithm picks $b/2$ samples out of $b$ sample loss evaluations. The oracle contains results obtained by running SGD over only non-corrupted samples. More experimental results on neural networks for MNIST and CIFAR10 datasets can be found in the Appendix. 

\vspace{-0.2cm}
\paragraph{MNIST:} We train standard 2 layer convolutional network on subsampled MNIST ($5000$ samples with labels).  We train over 80 epochs using an initial learning rate of $0.05$ with the decaying schedule of factor $5$ after every $30$ epochs.  The results of the MNIST dataset are averaged over 5 runs. \vspace{-0.2cm}
\paragraph{CIFAR10:} We train Resnet-18 \citep{he2016identity} on CIFAR-10 ($50000$ training samples with labels) for over $200$ epochs using an initial learning rate of $0.05$ with the decaying schedule of factor $5$ after every $90$ epochs. The reported accuracy is based on the true validation set. The results of the CIFAR-10 dataset are averaged over $3$ runs.

\noindent
Lastly, in Fig. \ref{fig:my_label}, we show that for a neural network MKL-SGD typically has a higher training loss but smaller test loss which partially explains its superior generalization performance.

\begin{figure*}[!ht]
	\centering
	\includegraphics[scale=0.22]{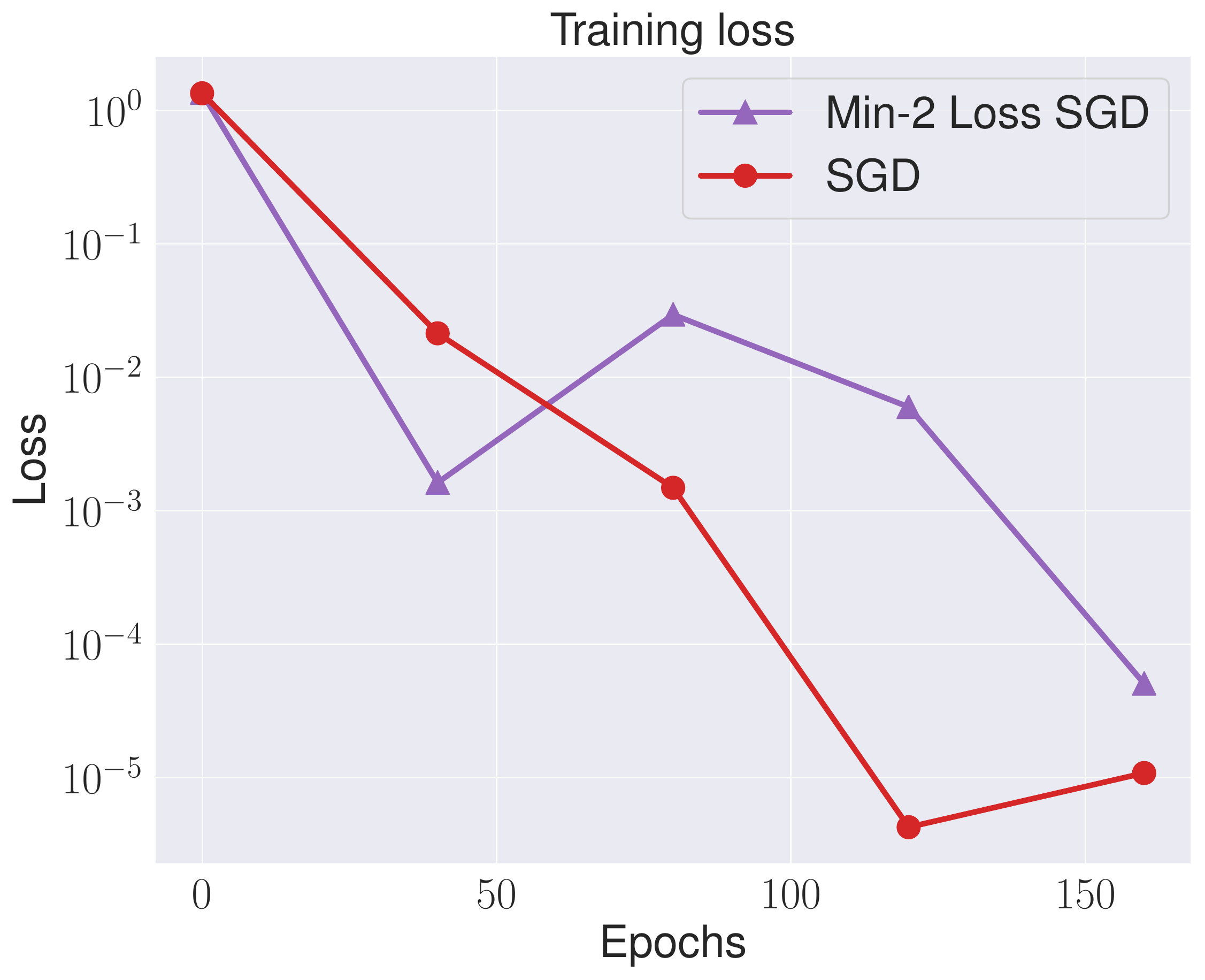}
	\includegraphics[scale=0.22]{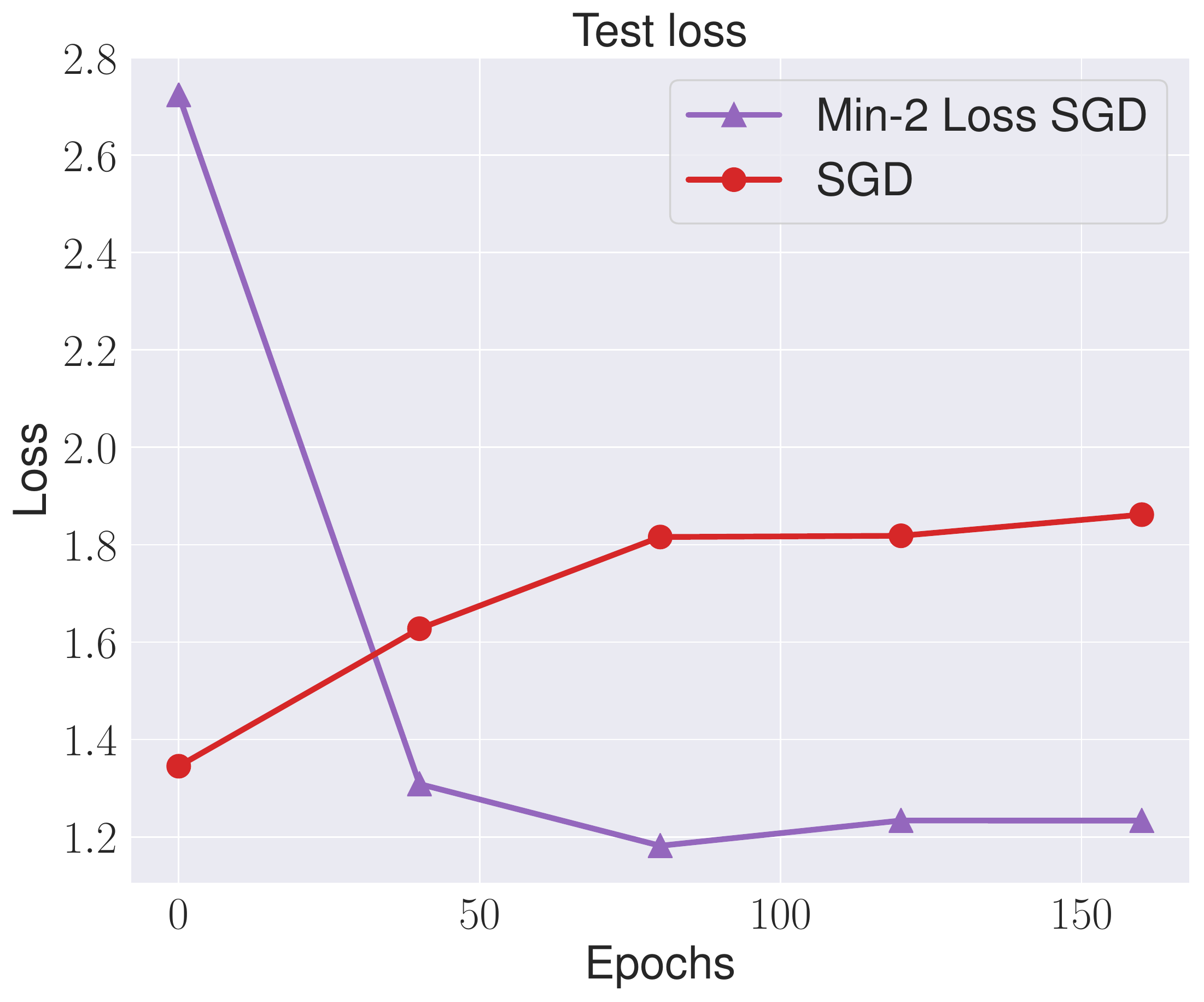}
	\includegraphics[scale=0.22]{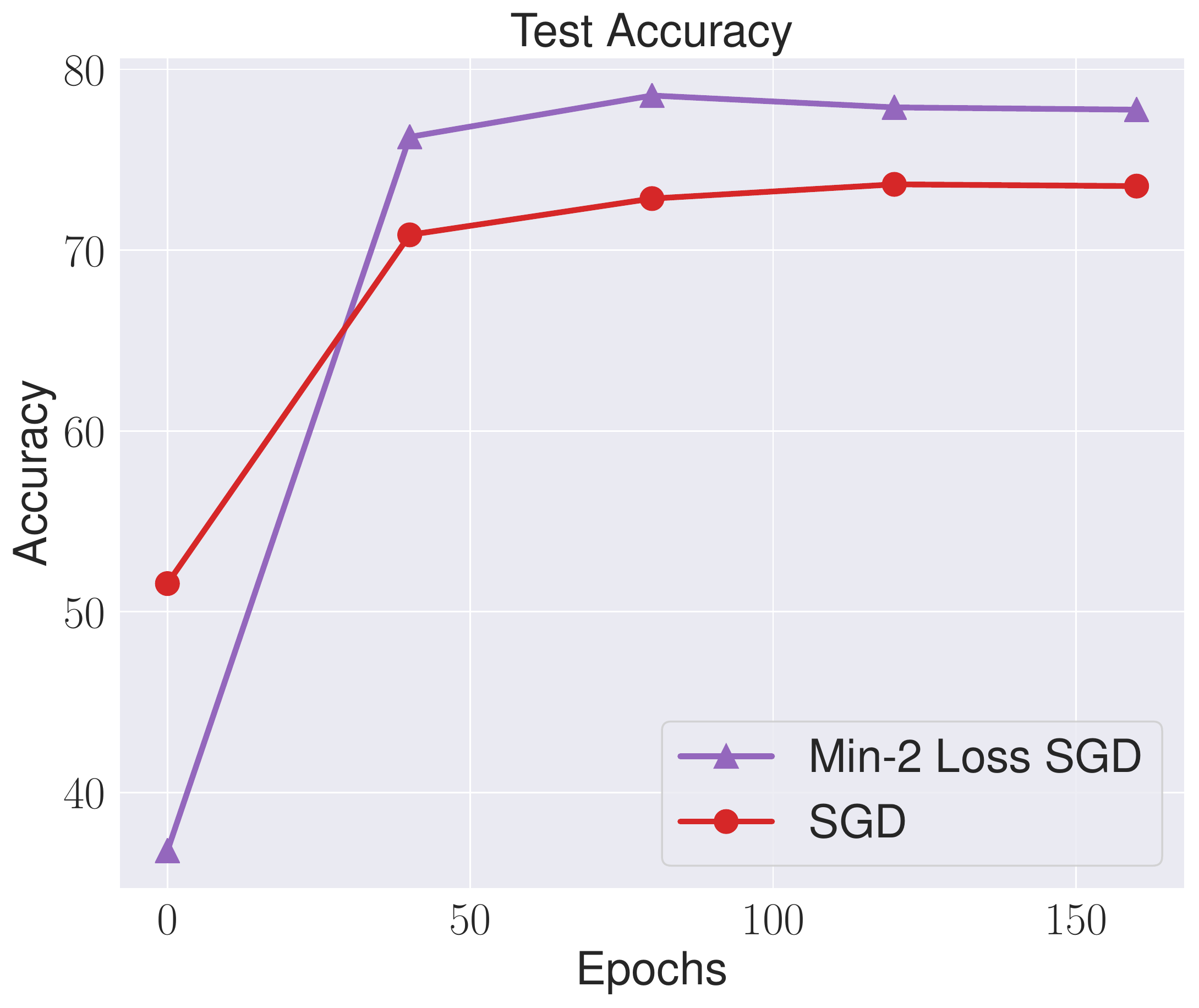}
	\caption{Comparing training loss, test loss and test accuracy of \mkl and SGD. Parameters: $\epsilon = 0.2$, $k = 2$, $b=16$. The training loss is lower for SGD which means that SGD overfits to the noisy data. The lower test loss and higher accuracy demonstrates the robustness \mkl provides for corrupted data.}
	\label{fig:my_label}
\end{figure*}

\section{Conclusion and Future Work} \label{sec:concfw} 
In this paper, we propose \mkl~that is computationally inexpensive, has linear convergence (upto a certain neighborhood) and is robust against outliers. We analyze \mkl~ algorithm under noiseless and noisy settings with and without outliers. \mkl~outperforms SGD in terms of generalization for both  linear regression and neural network experiments. More importantly, \mkl~ opens up a plethora of challenging questions with respect to understanding convex optimization in a non-convex landscape.

To ensure consistency, i.e. $\norm{\wsm -\ws} \rightarrow 0$, we require that $k \geq n\epsilon + 1$. In all other cases, there will be a non-zero contribution from the outliers which keeps the \mkl~solution from exactly converging to $\ws$. In this paper, we consider unknown $\epsilon$ and thus $k$ should be treated as a hyperparameter. However, if we knew the fraction of corruption, then with the right $k$ and smart initialization, it is possible to guarantee consistency. For neural network experiments in the Appendix, we show that tuning $k$ as a hyperparameter can lead to significant improvements in performance in presence of outliers. 

Preliminary experiments indicate that smarter initialization techniques can improve the performance of \mkl. The obvious question then is to provide worst case guarantees for a larger subset of problems using smarter initialization techniques. It will be interesting to analyze the tradeoff between rates of convergence to \mkl~ and its robustness to outliers. The worst case analysis in the noisy setting with and without outliers also remains an open problem. 
\newpage

\bibliography{references}

\begin{thebibliography}{38}
\providecommand{\natexlab}[1]{#1}
\providecommand{\url}[1]{\texttt{#1}}
\expandafter\ifx\csname urlstyle\endcsname\relax
  \providecommand{\doi}[1]{doi: #1}\else
  \providecommand{\doi}{doi: \begingroup \urlstyle{rm}\Url}\fi

\bibitem[Anaraki and Hughes(2014)]{anaraki2014memory}
Farhad~Pourkamali Anaraki and Shannon Hughes.
\newblock Memory and computation efficient pca via very sparse random
  projections.
\newblock In \emph{International Conference on Machine Learning}, pages
  1341--1349, 2014.

\bibitem[Angluin and Laird(1988)]{angluin1988learning}
Dana Angluin and Philip Laird.
\newblock Learning from noisy examples.
\newblock \emph{Machine Learning}, 2\penalty0 (4):\penalty0 343--370, 1988.

\bibitem[Balakrishnan et~al.(2017)Balakrishnan, Du, Li, and
  Singh]{balakrishnan2017computationally}
Sivaraman Balakrishnan, Simon~S Du, Jerry Li, and Aarti Singh.
\newblock Computationally efficient robust sparse estimation in high
  dimensions.
\newblock In \emph{Conference on Learning Theory}, pages 169--212, 2017.

\bibitem[Bengio et~al.(2009)Bengio, Louradour, Collobert, and
  Weston]{bengio2009curriculum}
Yoshua Bengio, J{\'e}r{\^o}me Louradour, Ronan Collobert, and Jason Weston.
\newblock Curriculum learning.
\newblock In \emph{Proceedings of the 26th annual international conference on
  machine learning}, pages 41--48. ACM, 2009.

\bibitem[Bhatia et~al.(2015)Bhatia, Jain, and Kar]{bhatia2015robust}
Kush Bhatia, Prateek Jain, and Purushottam Kar.
\newblock Robust regression via hard thresholding.
\newblock In \emph{Advances in Neural Information Processing Systems}, pages
  721--729, 2015.

\bibitem[Bhatia et~al.(2017)Bhatia, Jain, Kamalaruban, and
  Kar]{bhatia2017consistent}
Kush Bhatia, Prateek Jain, Parameswaran Kamalaruban, and Purushottam Kar.
\newblock Consistent robust regression.
\newblock In \emph{Advances in Neural Information Processing Systems}, pages
  2110--2119, 2017.

\bibitem[Charikar et~al.(2017)Charikar, Steinhardt, and
  Valiant]{charikar2017learning}
Moses Charikar, Jacob Steinhardt, and Gregory Valiant.
\newblock Learning from untrusted data.
\newblock In \emph{Proceedings of the 49th Annual ACM SIGACT Symposium on
  Theory of Computing}, pages 47--60. ACM, 2017.

\bibitem[Chen et~al.(2013)Chen, Caramanis, and Mannor]{chen2013robust}
Yudong Chen, Constantine Caramanis, and Shie Mannor.
\newblock Robust sparse regression under adversarial corruption.
\newblock In \emph{International Conference on Machine Learning}, pages
  774--782, 2013.

\bibitem[Diakonikolas et~al.(2018)Diakonikolas, Kamath, Kane, Li, Steinhardt,
  and Stewart]{diakonikolas2018sever}
Ilias Diakonikolas, Gautam Kamath, Daniel~M Kane, Jerry Li, Jacob Steinhardt,
  and Alistair Stewart.
\newblock Sever: A robust meta-algorithm for stochastic optimization.
\newblock \emph{arXiv preprint arXiv:1803.02815}, 2018.

\bibitem[Diakonikolas et~al.(2019)Diakonikolas, Kamath, Kane, Li, Moitra, and
  Stewart]{diakonikolas2019robust}
Ilias Diakonikolas, Gautam Kamath, Daniel Kane, Jerry Li, Ankur Moitra, and
  Alistair Stewart.
\newblock Robust estimators in high-dimensions without the computational
  intractability.
\newblock \emph{SIAM Journal on Computing}, 48\penalty0 (2):\penalty0 742--864,
  2019.

\bibitem[Freund et~al.(1999)Freund, Schapire, and Abe]{freund1999short}
Yoav Freund, Robert Schapire, and Naoki Abe.
\newblock A short introduction to boosting.
\newblock \emph{Journal-Japanese Society For Artificial Intelligence},
  14\penalty0 (771-780):\penalty0 1612, 1999.

\bibitem[Gunasekar et~al.(2018)Gunasekar, Lee, Soudry, and
  Srebro]{gunasekar2018characterizing}
Suriya Gunasekar, Jason Lee, Daniel Soudry, and Nathan Srebro.
\newblock Characterizing implicit bias in terms of optimization geometry.
\newblock \emph{arXiv preprint arXiv:1802.08246}, 2018.

\bibitem[He et~al.(2016)He, Zhang, Ren, and Sun]{he2016identity}
Kaiming He, Xiangyu Zhang, Shaoqing Ren, and Jian Sun.
\newblock Identity mappings in deep residual networks.
\newblock In \emph{European conference on computer vision}, pages 630--645.
  Springer, 2016.

\bibitem[Huber(2011)]{huber2011robust}
Peter~J Huber.
\newblock \emph{Robust statistics}.
\newblock Springer, 2011.

\bibitem[Jiang et~al.(2017)Jiang, Zhou, Leung, Li, and
  Fei-Fei]{jiang2017mentornet}
Lu~Jiang, Zhengyuan Zhou, Thomas Leung, Li-Jia Li, and Li~Fei-Fei.
\newblock Mentornet: Learning data-driven curriculum for very deep neural
  networks on corrupted labels.
\newblock \emph{arXiv preprint arXiv:1712.05055}, 2017.

\bibitem[Kahn and Marshall(1953)]{kahn1953methods}
Herman Kahn and Andy~W Marshall.
\newblock Methods of reducing sample size in monte carlo computations.
\newblock \emph{Journal of the Operations Research Society of America},
  1\penalty0 (5):\penalty0 263--278, 1953.

\bibitem[Karimi et~al.(2016)Karimi, Nutini, and Schmidt]{karimi2016linear}
Hamed Karimi, Julie Nutini, and Mark Schmidt.
\newblock Linear convergence of gradient and proximal-gradient methods under
  the polyak-{\l}ojasiewicz condition.
\newblock In \emph{Joint European Conference on Machine Learning and Knowledge
  Discovery in Databases}, pages 795--811. Springer, 2016.

\bibitem[Karmalkar et~al.(2019)Karmalkar, Klivans, and
  Kothari]{karmalkar2019list}
Sushrut Karmalkar, Adam Klivans, and Pravesh Kothari.
\newblock List-decodable linear regression.
\newblock In \emph{Advances in Neural Information Processing Systems}, pages
  7423--7432, 2019.

\bibitem[Katharopoulos and Fleuret(2018)]{katharopoulos2018not}
Angelos Katharopoulos and Fran{\c{c}}ois Fleuret.
\newblock Not all samples are created equal: Deep learning with importance
  sampling.
\newblock \emph{arXiv preprint arXiv:1803.00942}, 2018.

\bibitem[Klivans et~al.(2018)Klivans, Kothari, and Meka]{klivans2018efficient}
Adam Klivans, Pravesh~K Kothari, and Raghu Meka.
\newblock Efficient algorithms for outlier-robust regression.
\newblock \emph{arXiv preprint arXiv:1803.03241}, 2018.

\bibitem[Kumar et~al.(2010)Kumar, Packer, and Koller]{kumar2010self}
M~Pawan Kumar, Benjamin Packer, and Daphne Koller.
\newblock Self-paced learning for latent variable models.
\newblock In \emph{Advances in Neural Information Processing Systems}, pages
  1189--1197, 2010.

\bibitem[Lai et~al.(2016)Lai, Rao, and Vempala]{lai2016agnostic}
Kevin~A Lai, Anup~B Rao, and Santosh Vempala.
\newblock Agnostic estimation of mean and covariance.
\newblock In \emph{2016 IEEE 57th Annual Symposium on Foundations of Computer
  Science (FOCS)}, pages 665--674. IEEE, 2016.

\bibitem[Lee and Sidford(2013)]{lee2013efficient}
Yin~Tat Lee and Aaron Sidford.
\newblock Efficient accelerated coordinate descent methods and faster
  algorithms for solving linear systems.
\newblock In \emph{2013 IEEE 54th Annual Symposium on Foundations of Computer
  Science}, pages 147--156. IEEE, 2013.

\bibitem[Liu et~al.(2018)Liu, Shen, Li, and Caramanis]{liu2018high}
Liu Liu, Yanyao Shen, Tianyang Li, and Constantine Caramanis.
\newblock High dimensional robust sparse regression.
\newblock \emph{arXiv preprint arXiv:1805.11643}, 2018.

\bibitem[Moulines and Bach(2011)]{moulines2011non}
Eric Moulines and Francis~R Bach.
\newblock Non-asymptotic analysis of stochastic approximation algorithms for
  machine learning.
\newblock In \emph{Advances in Neural Information Processing Systems}, pages
  451--459, 2011.

\bibitem[Needell et~al.(2014)Needell, Ward, and Srebro]{needell2014stochastic}
Deanna Needell, Rachel Ward, and Nati Srebro.
\newblock Stochastic gradient descent, weighted sampling, and the randomized
  kaczmarz algorithm.
\newblock In \emph{Advances in neural information processing systems}, pages
  1017--1025, 2014.

\bibitem[Owen(2007)]{owen2007robust}
Art~B Owen.
\newblock A robust hybrid of lasso and ridge regression.
\newblock 2007.

\bibitem[Prasad et~al.(2018)Prasad, Suggala, Balakrishnan, and
  Ravikumar]{prasad2018robust}
Adarsh Prasad, Arun~Sai Suggala, Sivaraman Balakrishnan, and Pradeep Ravikumar.
\newblock Robust estimation via robust gradient estimation.
\newblock \emph{arXiv preprint arXiv:1802.06485}, 2018.

\bibitem[Ren et~al.(2018)Ren, Zeng, Yang, and Urtasun]{ren2018learning}
Mengye Ren, Wenyuan Zeng, Bin Yang, and Raquel Urtasun.
\newblock Learning to reweight examples for robust deep learning.
\newblock \emph{arXiv preprint arXiv:1803.09050}, 2018.

\bibitem[Rolnick et~al.(2017)Rolnick, Veit, Belongie, and
  Shavit]{rolnick2017deep}
David Rolnick, Andreas Veit, Serge Belongie, and Nir Shavit.
\newblock Deep learning is robust to massive label noise.
\newblock \emph{arXiv preprint arXiv:1705.10694}, 2017.

\bibitem[Rousseeuw(1984)]{rousseeuw1984least}
Peter~J Rousseeuw.
\newblock Least median of squares regression.
\newblock \emph{Journal of the American statistical association}, 79\penalty0
  (388):\penalty0 871--880, 1984.

\bibitem[Shen and Sanghavi(2019)]{shen2019learning}
Yanyao Shen and Sujay Sanghavi.
\newblock Learning with bad training data via iterative trimmed loss
  minimization.
\newblock In \emph{International Conference on Machine Learning}, pages
  5739--5748, 2019.

\bibitem[Strohmer and Vershynin(2009)]{strohmer2009randomized}
Thomas Strohmer and Roman Vershynin.
\newblock A randomized kaczmarz algorithm with exponential convergence.
\newblock \emph{Journal of Fourier Analysis and Applications}, 15\penalty0
  (2):\penalty0 262, 2009.

\bibitem[V{\'\i}{\v{s}}ek et~al.(2002)]{vivsek2002least}
Jan V{\'\i}{\v{s}}ek et~al.
\newblock The least weighted squares ii. consistency and asymptotic normality.
\newblock \emph{Bulletin of the Czech Econometric Society}, 9, 2002.

\bibitem[V{\'\i}{\v{s}}ek(2006)]{vivsek2006least}
Jan~{\'A}mos V{\'\i}{\v{s}}ek.
\newblock The least trimmed squares. part i: Consistency.
\newblock \emph{Kybernetika}, 42\penalty0 (1):\penalty0 1--36, 2006.

\bibitem[Xu et~al.(2009)Xu, Caramanis, and Mannor]{xu2009robust}
Huan Xu, Constantine Caramanis, and Shie Mannor.
\newblock Robust regression and lasso.
\newblock In \emph{Advances in Neural Information Processing Systems}, pages
  1801--1808, 2009.

\bibitem[Zhang(2017)]{zhang2017restricted}
Hui Zhang.
\newblock The restricted strong convexity revisited: analysis of equivalence to
  error bound and quadratic growth.
\newblock \emph{Optimization Letters}, 11\penalty0 (4):\penalty0 817--833,
  2017.

\bibitem[Zhao and Zhang(2015)]{zhao2015stochastic}
Peilin Zhao and Tong Zhang.
\newblock Stochastic optimization with importance sampling for regularized loss
  minimization.
\newblock In \emph{international conference on machine learning}, pages 1--9,
  2015.

\end{thebibliography}

\newpage
\newpage
\onecolumn

\section{Appendix}
\subsection{Additional Results for Section 3}

The following lemma provides upper bounds on the expected gradient of the worst-possible \mkl~solution that lies in a ball around $\ws$. Simultaneously satisfying the following bound with the one in Lemma \ref{lemma:sgdbound} may lead to an infeasible set of $\epsilon$ and $N'$. And thus we use Lemma \ref{lemma:m2lboundstrongconvexity} in conjunction with \ref{lemma:sgdbound}.

\begin{lemma} \label{lemma:m2lbound}
	Let us assume that \mkl~converges to $\wsm$. For any $\wsm \in  \mathcal{B}_r(\ws)$ that satisfies assumptions N1, N2, A4 and A5, there exists $N'\geq N$ and $\epsilon' \leq \epsilon$ such that, 
	\begin{align*}
	\norm{\sum_{i\notin\mathbb{O}} p_i(\wsm) \nabla f_i(\wsm)} \leq \min\left\{(1 - \epsilon^k) L \norm{\wsm - \ws}, \epsilon^k G(\w)\right\}
	\end{align*}
\end{lemma}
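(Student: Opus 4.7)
\textbf{Proof plan for Lemma \ref{lemma:m2lbound}.} The plan is to exploit the first-order stationarity of $\wsm$ under the expected \mkl{} update, which says
\[
\nabla \tilde{F}(\wsm) \;=\; \sum_{i=1}^n p_i(\wsm)\,\nabla f_i(\wsm) \;=\; 0,
\]
so that $\sum_{i\notin\mathbb{O}} p_i(\wsm)\nabla f_i(\wsm) = -\sum_{i\in\mathbb{O}} p_i(\wsm)\nabla f_i(\wsm)$. This gives us two different handles on the same quantity: one through the clean samples and one through the outliers, which is exactly what is needed to produce the $\min$ of two terms in the statement.

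First I would bound the expression through the clean samples. By the triangle inequality,
\[
\Bigl\|\sum_{i\notin\mathbb{O}} p_i(\wsm)\,\nabla f_i(\wsm)\Bigr\| \;\le\; \sum_{i\notin\mathbb{O}} p_i(\wsm)\,\|\nabla f_i(\wsm)\|.
\]
Since $\ws$ lies in $C_i$ for every clean sample, $\nabla f_i(\ws) = 0$, and Assumption~1 ($L$-Lipschitz gradients) gives $\|\nabla f_i(\wsm)\| \le L\|\wsm - \ws\|$. Summing, the right-hand side is at most $L\|\wsm-\ws\|\cdot \sum_{i\notin\mathbb{O}} p_i(\wsm)$, so I only need to show that $\sum_{i\notin\mathbb{O}} p_i(\wsm) \le 1 - \epsilon^k$, i.e., that $\sum_{i\in\mathbb{O}} p_i(\wsm) \le \epsilon^k$.

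For the outlier-side bound, I would use the defining property of $\wsm \in \mathcal{B}_r(\ws)$: every clean sample has strictly smaller loss than every outlier at $\wsm$. Thus in the sorted order $m_1(\wsm),\dots,m_n(\wsm)$, the outliers occupy exactly the last $|\mathbb{O}|$ slots. Using the with-replacement probability formula from \eqref{eq:probability}, the total outlier probability telescopes:
\[
\sum_{i\in\mathbb{O}} p_i(\wsm) \;=\; \sum_{j=n-|\mathbb{O}|+1}^{n} \frac{(n-j+1)^k - (n-j)^k}{n^k} \;=\; \frac{|\mathbb{O}|^k}{n^k} \;=\; \epsilon^k.
\]
This immediately yields the first bound $(1-\epsilon^k)L\|\wsm-\ws\|$. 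For the second bound, I apply the triangle inequality to $\sum_{i\in\mathbb{O}} p_i(\wsm)\nabla f_i(\wsm)$, invoke the uniform bound $\|\nabla f_i(\wsm)\| \le G$ for outliers, and use the same telescoping to get $\epsilon^k G$. Taking the minimum of the two expressions finishes the proof.

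The main obstacle is not the algebra of either bound in isolation; it is justifying the ``there exists $N' \ge N$ and $\epsilon' \le \epsilon$'' qualifier. The telescoping identity requires that the ordering of sample losses at $\wsm$ cleanly separates clean samples from outliers, which is precisely what $\wsm \in \mathcal{B}_r(\ws)$ buys us, but the existence of such a stationary point inside $\mathcal{B}_r(\ws)$ requires Condition~1 / the hypothesis of Lemma~\ref{lemma:vector} to hold, and this is where $\epsilon$ may need to be reduced (smaller $\epsilon$ makes $\hat p_{max}$ smaller) and $N$ enlarged (so that $|\mathbb{O}| = \epsilon N$ behaves smoothly). I would formalize this by picking $\epsilon'$ small enough that $\hat p_{max} \le 1/(1+\kappa q)$, so Lemma~\ref{lemma:vector} applies and guarantees $\wsm \in \mathcal{B}_r(\ws)$; the rest of the argument then goes through as above.
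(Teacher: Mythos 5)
Your proposal is correct and follows essentially the same route as the paper's own proof: use stationarity $\nabla \widetilde{F}(\wsm)=0$ to equate the clean-sample and outlier gradient sums, bound the clean side by the triangle inequality plus $L$-Lipschitz gradients and the outlier side by the uniform bound $G$, and use the fact that inside $\mathcal{B}_r(\ws)$ the outliers occupy the lowest-probability ranks so their total probability is $\epsilon^k$. In fact you make explicit two steps the paper leaves implicit --- the telescoping computation $\sum_{i\in\mathbb{O}} p_i(\wsm) = |\mathbb{O}|^k/n^k = \epsilon^k$, and the justification (via Lemma \ref{lemma:vector}) that suitable $N'$ and $\epsilon'$ exist so that $\wsm$ indeed lies in $\mathcal{B}_r(\ws)$, which the paper simply assumes.
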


The proof for lemma 2 can be found in the Appendix Section \ref{sec:lemmam2lbound}

\subsection{Proofs and supporting lemmas}

\subsubsection{Proof of Lemma \ref{lemma:sc}}
\begin{proof}
	$\widetilde{F}(\bs{w}) = \sum_i p_{m_i(\bs{w})}(\bs{w})$. 
	Let us fix a $\bs{w}$ such that $p_i = p_i(\bs{w})$.  We know that for any $p_i$, $\sum_i p_i f_i(\bs{w})$ is strongly convex in $\bs{w}$ with parameter $\lambda_{\bs{w}}$. This implies
	$$\nabla \widetilde{F}(\bs{w})^\top (\bs{w} - \ws)  \geq \lambda_{\bs{w} } \norm{\bs{w} - \ws}^2$$
\end{proof}

A naive bound for the above Lemma can be: 
$$\nabla \widetilde{F}(\bs{w})^\top (\bs{w} - \ws)  \geq \min_{i}p_{i}\sum_i  f_i(\bs{w})\geq \underbrace{\lambda \min_{i}p_{i}}_{\lambda_{\bs{w} }}\norm{\bs{w} - \ws}^2$$

\subsubsection{Proof of Theorem \ref{theorem:nolocalminima}} 
\label{sec:proofnolocalminima}

\begin{proof}
	By the definition of the noiseless framework, $\ws$ is the unique optimum of $F(\bs{w})$ and lies in the optimal set of each $f_i(.)$. We will prove this theorem by contradiction. Assume there exists some $\hat{\w} \neq \ws$ that also satisfies optimum of $\nabla \widetilde{F}(\hat{\w}) = \bs{0}$. 
	At $\hat{\w}$, we have $0 =<\nabla \widetilde{F}(\hat{\w}), \hat{\w} - \ws> = \lambda \norm{\hat{\w} - \ws}^2 $. This implies $\hat{\w} = \ws$. 
\end{proof}

Theorem 1 and Assumption 2 guarantee that $\lambda_{\w}> 0$. If $f(\w)$ is strongly convex and $g(\w)$ is convex, then we know that $f(\w) + g(\w)$ is strongly convex. On similar lines we can show that $\lambda > 0$ by splitting the terms in $\tilde{F(\w)}$ as $p_{min} F(\w)$ and $(\tilde{F}(\w) - p_{min} F(\w))$. The first term has $\lambda > 0$ (Assumption 2) and the second term has $\lambda = 0$ (since it is convex). Note, $p_{min}$ is a positive constant independent of $\w$ and so the above lemma is for all $\w$.

\subsubsection{Proof of Lemma \ref{lemma:vector}} \label{sec:vector}

Let $\bar{\w}$ be a stationary point of \mkl.
Now, we analyze the loss landscape on the line joining $\ws$ and $\w_C$ where $\w_C = C \bar{\w}$ is any arbitrary point \footnote{Note that we just need $\w_C$ for the purpose of landscape analysis and it is not a parameter of the algorithm} in the landscape at a distance as far as the farthest outlier from $\ws$. Let $C$ be a very large number.

The loss functions and $\widetilde{\w}$ are redefined as follows: 
\begin{align*} 
&f_i(\w) = \begin{cases}{}
&l_i \norm{\w -\ws}^2 ~~\forall~i \in \mathbb{O} \notag \\
&l_i \norm{\w -\w_{b_i}}^2 ~~ \forall~i \notin \mathbb{O},  \notag
\end{cases}\\
&\tw :=\begin{cases} \begin{drcases} \w \bigg{|} &\w = \min_{\alpha\in (0, 1)} \alpha \ws + (1-\alpha)\w_{C} , \\
& f_{l_{m}}(\w) = f_{l_{M}}(\w) \end{drcases}\end{cases}
\end{align*}

where $\abs{\mathbb{O}} = b$ such that $n=g+b$.  Let $l_{m} = \min_{i\notin\mathbb{O}} l_i$ and Let $l_{M} = \max_{i\in\mathbb{O}} l_i$ and $l_{max} = \min_{i\in[n]} l_i$, $l_{min} = \min_{i \in [n]}l_i$. Let us define $\kappa=\dfrac{l_{max}}{l_{min}} \geq \dfrac{l_{M}}{l_{m}}$.

Now at $\bar{w}$, we have $\nabla \widetilde{F}(\bar{\w}) = 0$. Let us assume that the outliers are chosen in such a way that at $\w_C$, all the outliers have the lowest loss. As stated in the previous lemma, the results hold irrespective of that. 
This implies:
\begin{align*}
\sum_{i\notin \mathbb{O}} p_i(\w_C) \nabla f_i(\bar{\w}) &= - \sum_{j \in \mathbb{O}}  p_j(\w_C) \nabla f_j(\bar{\w}) \\
\sum_{i\notin \mathbb{O}} p_i(\w_C) l_i (\bar{\w} - \ws)  &= - \sum_{j \in \mathbb{O}}  p_j(\w_C) l_j (\bar{\w} - \w_{b_j}) \\
\bar{\w} &= \dfrac{ \sum_{i\notin \mathbb{O}} p_i(\w_C) l_i \ws + \sum_{j \in \mathbb{O}}  p_j(\w_C) l_j \w_{b_j}}{ \sum_{i\notin \mathbb{O}} p_i(\w_C) l_i + \sum_{j \in \mathbb{O}} p_j(\w_C) l_j } \\
\text{By triangle inequality, }
\norm{\bar{\w} - \ws} &\leq \dfrac{\sum_{j \in \mathbb{O}}  p_j(\w_C) l_j \norm{\w_{b_j}-\ws}}{ \sum_{i\notin \mathbb{O}} p_i(\w_C) l_i + \sum_{j \in \mathbb{O}} p_j(\w_C) l_j }
\end{align*}{}

Without loss of generality assume that the outliers are ordered as follows:
$\norm{\w_{b_1}-\ws} \leq \norm{\w_{b_2}-\ws}\leq \dots \leq \norm{\w_{b_{|\mathbb{O}|}}-\ws}  $.

Now $\widetilde{\w}$ be some point of intersection of function in the set of clean samples and a function in the set of outliers to $\ws$. Let $\theta_j$ be the angle between the line connecting $\w_{b_j}$ and $\ws$ to the line connecting $\w_C$ to $\ws$. For any two curves with Lipschitz constants $l_i$ and $l_j$, the halfspaces passing through the weighted mean are also the region where both functions have equal values.

Thus, $$\widetilde{\w} = \dfrac{\sqrt{l_i} \ws +\sqrt{l_j} \w_{b_j}}{\sqrt{l_i} + \sqrt{l_j} }$$.

\begin{align*}
\norm{\widetilde{\w} -\ws} &= \dfrac{\sqrt{l_j} \norm{\w_{b_j}-\ws}}{\sqrt{l_j} + \sqrt{l_i}}
\end{align*}{}

Let $\gamma$ denote the following ratio: $$\gamma = \dfrac{\min_{j\in\mathbb{O}}\norm{\w_{b_j}-\ws}}{\max_{j\in\mathbb{O}}\norm{\w_{b_j}-\ws}}=\dfrac{2\delta}{\delta_{max}}$$
Now, we want:
\begin{align*}{}
\dfrac{\sum_{j \in \mathbb{O}}  p_j(\w_C) l_j \norm{\w_{b_j}-\ws}}{ \sum_{i\notin \mathbb{O}} p_i(\w_C) l_i + \sum_{j \in \mathbb{O}} p_j(\w_C) l_j } &\leq  \dfrac{\sqrt{l_{t_j}} }{\sqrt{l_{t_j}} + \sqrt{l_{g}}}\dfrac{\norm{\w_{b_j}-\ws}}{\cos{\theta_j}} = \dfrac{\norm{\widetilde{\w} -\ws}}{\cos{\theta_j}} \\
\dfrac{\sum_{j \in \mathbb{O}}  p_j(\w_C) l_j \norm{\w_{b_j}-\ws}}{ \sum_{i\notin \mathbb{O}} p_i(\w_C) l_i + \sum_{j \in \mathbb{O}} p_j(\w_C) l_j } \leq \dfrac{\sum_{j \in \mathbb{O}}  p_j(\w_C) l_j \norm{\w_{b_{|\mathbb{O}|}}-\ws}}{ \sum_{i\notin \mathbb{O}} p_i(\w_C) l_i + \sum_{j \in \mathbb{O}} p_j(\w_C) l_j } &\leq \dfrac{\sqrt{l_{t_j}} }{\sqrt{l_{t_j}} + \sqrt{l_{g}}}\dfrac{\norm{\w_{b_j}-\ws}}{\cos{\theta_j}} \\
\dfrac{\sum_{j \in \mathbb{O}}  p_j(\w_C) l_j }{ \sum_{i\notin \mathbb{O}} p_i(\w_C) l_i + \sum_{j \in \mathbb{O}} p_j(\w_C) l_j } &\leq \dfrac{\sqrt{l_{t_j}} }{\sqrt{l_{t_j}} + \sqrt{l_{g}}}\dfrac{\norm{\w_{b_j}-\ws}}{\cos{\theta_j}\norm{\w_{b_{|\mathbb{O}|}}-\ws}} \\
\dfrac{\sum_{j \in \mathbb{O}}  p_j(\w_C) l_j }{ \sum_{i\notin \mathbb{O}} p_i(\w_C) l_i + \sum_{j \in \mathbb{O}} p_j(\w_C) l_j } &\leq \dfrac{\sqrt{l_{t_j}} }{\sqrt{l_{t_j}} + \sqrt{l_{g}}}\dfrac{\gamma}{\cos{\theta_j}}
\end{align*}

For simplicity, $\Gamma = \dfrac{\gamma}{\cos{\theta_j}} $,
then we have:
\begin{align*}{}
\dfrac{\sum_{j \in \mathbb{O}}  p_j(\w_C) l_j }{ \sum_{i\notin \mathbb{O}} p_i(\w_C) l_i + \sum_{j \in \mathbb{O}} p_j(\w_C) l_j } &\leq \dfrac{\sqrt{l_{t_j}} }{\sqrt{l_{t_j}} + \sqrt{l_{g}}}\Gamma \\
\dfrac{1}{\Gamma} \left(\dfrac{\sqrt{l_{g}}}{\sqrt{l_{t_j}}} + 1\right) - 1 &\leq \dfrac{ (1-\hat{p}) l_m  }{\hat{p} l_M} \leq \dfrac{ \sum_{i\notin \mathbb{O}} p_i(\w_C) l_i  }{\sum_{j \in \mathbb{O}}  p_j(\w_C) l_j } \\
\dfrac{\hat{p}}{1-\hat{p}} &\leq \dfrac{\frac{l_m}{l_M}}{\frac{1}{\Gamma} - 1 +\frac{1}{\Gamma}\frac{\sqrt{l_{g}}}{\sqrt{l_{t_j}}}}\\
\hat{p} &\leq \dfrac{1}{1 + \kappa \left(\frac{1}{\Gamma} - 1 +\frac{\sqrt{\kappa}}{\Gamma}\right)} \leq \dfrac{1}{ 1 + \frac{l_M}{l_m}\left(\frac{1}{\Gamma} - 1 +\frac{1}{\Gamma}\frac{\sqrt{l_{g}}}{\sqrt{l_{t_j}}}\right)}
\end{align*}

Replacing  $\Gamma = \dfrac{\gamma}{\min_j\cos{\theta_j}} $,
and let $q = \frac{\cos{\theta_j}}{\gamma} - 1 +\frac{\cos{\theta_j}\sqrt{\kappa}}{\gamma} $
the condition to guarantee that bad local minima do no exist is 
$   \hat{p} \leq \dfrac{1}{1 + \kappa q} $
and $q > 0$.
Now, we can repeat the above analysis recursively for every corresponding $\w_C$ and $\tilde{\w}$ in the landscape and so now $\theta_j$ is a function of $\tilde{\w}$ as well as is represented in the theorem statement.

\textbf{Note:} In the vector case, for example there exists a fine tradeoff between how large $\theta_j$ can be and if for large $\theta_j$, the loss corresponding to the outlier will be one of the lowest. Understanding that tradeoff is beyond the scope of this paper.

\subsubsection{Proof of Lemma \ref{lemma:sgdbound}} \label{sec:lemmasgdbound}

\begin{proof}
	At $\wsgd$, $\nabla \widetilde{F}(\wsgd) = 0$. Then, 
	\begin{align}
	\sum_{i\notin\mathbb{O}} \nabla f_i(\wsgd) &= -  \sum_{i\in\mathcal{O}} \nabla f_i(\wsgd) \notag \\
	\norm{\sum_{i\notin\mathbb{O}} \nabla f_i(\wsgd) }&= \norm{ \sum_{i\in\mathcal{O}} \nabla f_i(\wsgd)}\notag \\
	\norm{\sum_{i\notin\mathbb{O}}   \nabla f_i(\wsgd)} &\leq \sum_i  \norm{\nabla f_i(\wsgd)} \notag \\
	&\leq \sum_i  L \norm{\wsgd - \ws}\notag\\
	&=(1 - \epsilon) n L \norm{\wsgd - \ws}\\
	\norm{ \sum_{i\in\mathcal{O}} \nabla f_i(\wsgd)} &\leq\sum_{i\in\mathcal{O}}  \norm{  \nabla f_i(\wsgd)} \notag \\
	&\leq \sum_{i\in\mathcal{O}} G(\wsgd) \notag\\
	&\leq \epsilon  G(\wsgd)
	\end{align}
	
	$$\norm{ \sum_{i\in\mathcal{O}} \nabla f_i(\wsgd)} = \min\left(\epsilon n  G(\wsgd), (1 - \epsilon) n L \norm{\wsgd - \ws}\right)$$
\end{proof}

\subsubsection{Proof of Lemma \ref{lemma:m2lboundstrongconvexity}} \label{sec:lemmam2lboundstrongconvexity}

\begin{proof}
	At $\wsm$, $\nabla \widetilde{F}(\wsm) = 0$. This implies
	\begin{align}
	\sum_{i\notin\mathbb{O}} p_i(\wsm)\nabla f_i(\wsm) &= -  \sum_{i\in\mathcal{O}}p_i(\wsm) \nabla f_i(\wsm) \notag \\
	\text{Multiplying both sides by } (\wsm - \ws) \qquad&\notag\\
	\sum_{i\notin\mathbb{O}} p_i(\wsm)<\nabla f_i(\wsm) , \wsm - \ws> &= -	\sum_{i\in\mathcal{O}} p_i(\wsm)<\nabla f_i(\wsm) , \wsm - \ws> \\
	<\nabla\widetilde{F}_\mathcal{G}(\wsm) , \wsm - \ws>  &= - \sum_{i\in\mathcal{O}} p_i(\wsm)<\nabla f_i(\wsm) , \wsm - \ws> \notag\\
	\text{Lower bounding the LHS using Lemma~ \ref{lemma:sc} and }&m= m(\wsm)\footnote{Note that strong convexity holds since we are only considering the set of good samples}&,\\
	m \norm{ \wsm - \ws}^2&\leq \norm{	<\nabla\widetilde{F}_\mathcal{G}(\wsm) , \wsm - \ws>} =LHS \\
	RHS &\leq \norm{ - \sum_{i\in\mathcal{O}} p_i(\wsm)<\nabla f_i(\wsm) , \wsm - \ws>} \notag\\
	m \norm{ \wsm - \ws}^2 &\leq \sum_{i\in\mathcal{O}}p_i(\wsm)\norm{  <\nabla f_i(\wsm) , \wsm - \ws>}\notag \\
	m \norm{ \wsm - \ws}^2 &\leq \sum_{i\in\mathcal{O}}p_i(\wsm)\norm{ \nabla f_i(\wsm)} \norm{ \wsm - \ws} \notag \\
	m \norm{ \wsm - \ws}^2 &\leq \sum_{i\in\mathcal{O}}p_i(\wsm) \norm{ \wsm - \ws} G(\wsgd) \notag \\
	m \norm{ \wsm - \ws} &\leq \epsilon^k G(\wsgd) %\text{\shirley{This is not clear why $\epsilon^k$}}
	\end{align}
\end{proof}

\subsubsection{Proof of Theorem \ref{theorem:relativebounds}} \label{sec:theoremrelativebounds}

\begin{proof}
	There exists an $\epsilon'\leq\epsilon$ such that in Lemma  \ref{lemma:sgdbound}, we have 
	\begin{align*}
	(1 - \epsilon)  L \norm{\wsgd - \ws}\geq \epsilon G(\wsgd)
	\end{align*}
	Combining above equation with Lemma \ref{lemma:m2lboundstrongconvexity}, we get
	\begin{align*}
	(1 - \epsilon)  L \norm{\wsgd - \ws}&\geq \epsilon G(\wsgd)
	\geq \epsilon \frac{\lambda}{\epsilon^2} \norm{ \wsm - \ws} \\
	\Rightarrow \norm{ \wsm - \ws} &\leq \dfrac{(1 - \epsilon)  L \epsilon^{k-1} }{\lambda} \norm{\wsgd - \ws}
	\end{align*}
	Picking a large enough $k$, we can guarantee that $\dfrac{(1 - \epsilon)  L \epsilon^{k-1} }{\lambda} < 1$
\end{proof}

% \subsubsection{Proof of Lemma \ref{lem:coo}} \label{sec:coo}
% The above lemma follows because
% \begin{align}
% \mathbb{E}_t\left[ \|\nabla f_q(\bs{w})\|^2\right]&= \dfrac{2}{n(n-1)}\sum_{i=1}^{n-1} (n-i) 	\|\nabla f_{m_i(\bs{w})}(\bs{w})\|^2 \notag \\
% 	&=  \dfrac{1}{n-1}\sum_{i=1}^{n-1}\| \nabla f_{m_i(\bs{w})}(\bs{w})\|^2 -\dfrac{2}{n(n-1)}\sum_{i=1}^{n-1} \left(i-\frac{n}{2} \right)\|  \nabla f_{m_i(\bs{w})}(\bs{w})\|^2 \notag 
% %	 \\
% %	&=\sigma^{\star2} - \dfrac{2}{n(n-1)}\sum_{i=1}^{n-1} \left(i - \frac{n}{ 2} \right) \| \nabla f_{m_i(\bs{w})}(\bs{w})\|^2
% \end{align}

\subsubsection{Proof of Lemma \ref{lemma:m2lbound}} \label{sec:lemmam2lbound}

\begin{proof}
	From the definition of good samples in the noiseless setting, we know that $f_i(\ws) = 0 ~\forall ~i\notin \mathbb{O}$. Similarly, for samples belonging to the outlier set, $f_i(\ws) > 0 ~\forall ~i \in \mathcal{O}$.  There exists a ball around the optimum of radius $r$ such that $f_i(\bs{w}) \leq f_j(\bs{w}) ~\forall i \notin \mathbb{O}, j\in \mathcal{O}, \bs{w} \in \mathbb{O}_r(\ws)$. Assume that $N'\geq N$ and $\epsilon' \leq \epsilon$, such that $\norm{ \wsm-\ws}\leq r$.
	
	At $\wsm$, $\nabla \widetilde{F}(\wsm) = 0$. This implies
	\begin{align}
	\sum_{i\notin\mathbb{O}} p_i(\wsm)\nabla f_i(\wsm) &= -  \sum_{i\in\mathcal{O}}p_i(\wsm) \nabla f_i(\wsm) \notag \\
	\norm{\sum_{i\notin\mathbb{O}} p_i(\wsm) \nabla f_i(\wsm)} &= \norm{ \sum_{i\in\mathcal{O}} p_i(\wsm) \nabla f_i(\wsm)} \notag \\
	\norm{\sum_{i\notin\mathbb{O}} p_i(\wsm)  \nabla f_i(\wsm)} &\leq \sum_i p_i(\wsm) \norm{\nabla f_i(\wsm)} \notag \\
	&\leq \sum_i p_i(\wsm) L \norm{\wsm - \ws}\notag\\
	&=(1 - \epsilon^k)  L \norm{\wsm - \ws}\\
	\norm{ \sum_{i\in\mathcal{O}} p_i(\wsm) \nabla f_i(\wsm)} &\leq\sum_{i\in\mathcal{O}} p_i(\wsm) \norm{  \nabla f_i(\wsm)} \notag \\
	&\leq \sum_{i\in\mathcal{O}} p_i(\wsm) G(\wsm) \notag\\
	&\leq \epsilon^k G(\wsm)
	\end{align}
	
\end{proof}

\subsection{Additional results and proofs for Section \ref{sec:convergence}}
% \subsubsection{ Theorem \ref{thm:R}} \label{sec:theoremconvergence}

Consider the sample size $n$ with bad set(outlier) $\mathbb{O}$ and  good set $\mathcal{G}$ such that $|\mathcal{G}|=n-|\mathbb{O}|$. Define
$${F}_{good}(\bs{w})=\frac{1}{\abs{\mathcal{G}}}\sum_{i\in {\mathcal{G}}}f_i(\bs{w}) .$$
We assume:\\
$(1)$ (Stationary Point)  
Assume $\wg$ is the solution for the average loss function of good sample such that 
$$\nabla{F}_{good}(\wg)= 0  \quad \text{ but }  \nabla f_i(\wg)\neq 0, 
\forall i\in \mathbb{O}$$
% x
$(2)$  (Strong Convexity)  $ {F}_{good}(\bs{w})$ is strongly convex with parameters $\lambda_{good}$ i.e.,
$$\langle{\nabla F_{good}(\bs{w}) -\nabla F_{good}(\wg), \bs{w} - \wg\rangle}~ \geq \lambda_{good} \norm{ \bs{w} - \wg}^2$$
$(3)$  (Gradient Lipschitz)  $ {f}_{i}(\bs{w})$ has $L_i$ Liptchitz gradient i.e.,
$$\|\nabla f_{i}(\bs{w}) -\nabla f_{i}(\wg)\|~ \leq L_{i} \norm{ \bs{w} - \wg}$$

\begin{theorem} (\textbf{Distance to $\wg$})  \label{lem:R}
	%	\begin{align}
	%	\mathbb{E}_i\left[ \norm{\bs{w_{t+1}} - \wg}^2|\bs{w_{t}} \right]
	%	\leq & \left( 1-  2\eta_t  \lambda_{good }(1-\eta_t \sup_iL_i)\frac{\abs{\mathcal{G}}}{n}\right)\norm{\bs{w_t} -\wg}^2+ R_t 
	%	\end{align}
	%	where 
	%	\begin{align*}
	%	R_t = &-  2\eta_t(1-\eta_t \sup_iL_i))\sum_{i\in \mathcal{G}} \left(p_i(\bs{w_t})-\frac{1}{n}\right)\langle{ \bs{w_t} - \wg, \nabla f_{i}(\bs{w_t}) - \nabla f_{i}(\wg) \rangle}\\
	%	& 
	%	- 2  \eta_t \sum_{i\in \mathcal{G}} p_i(\bs{w_t}) \langle{ \bs{w_t} - \wg,\nabla f_{i}( \wg)\rangle}+ 2\eta_t^2\sum_{i\in \mathcal{G}} p_i(\bs{w_t}) \norm{\nabla f_{i}( \wg)}^2\\
	%	& 
	%	+ \eta_t^2\sum_{i\in \mathbb{O}}p_i(\bs{w_t})\norm{\nabla f_{i}( \bs{w_{t}})}^2+2  \eta_t \sum_{i\in \mathbb{O}}p_i(\bs{w_t}) \left(f_{i}(\wg)- f_i(\bs{w_t} )\right)
	%	\end{align*}
	\begin{align}
	\mathbb{E}_i\left[ \norm{\bs{w_{t+1}} - \wg}^2|\bs{w_{t}} \right]
	\leq & \left( 1-  2\eta_t  \lambda_{good }(1-\eta_t \sup_iL_i)\min_{ i\in \mathcal{G}}p_i(\bs{w_t}) \right)\norm{\bs{w_t} -\wg}^2+ R_t 
	\end{align}
	where 
	\begin{align*}
	R_t 
	=	& - 2  \eta_t \sum_{i\in \mathcal{G}} p_i(\bs{w_t}) \langle{ \bs{w_t} - \wg,\nabla f_{i}( \wg)\rangle}\\
	& + 2\eta_t^2\sum_{i\in \mathcal{G}} p_i(\bs{w_t}) \norm{\nabla f_{i}( \wg)}^2 + \eta_t^2\sum_{i\in \mathbb{O}}p_i(\bs{w_t})\norm{\nabla f_{i}( \bs{w_{t}})}^2+2  \eta_t \sum_{i\in \mathbb{O}}p_i(\bs{w_t}) \left(f_{i}(\wg)- f_i(\bs{w_t} )\right)
	\end{align*}	
\end{theorem}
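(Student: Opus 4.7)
The plan is a one-step SGD-type contraction analysis, tailored to the weighted sampling distribution $p_i(\bs{w}_t)$ and to the good/bad split induced by the outliers. The first step is to write the \mkl{} update in the form $\bs{w}_{t+1} = \bs{w}_t - \eta_t \nabla f_{i_t}(\bs{w}_t)$ with $i_t$ drawn from $p_i(\bs{w}_t)$ as given in \eqref{eq:probability}, expand $\|\bs{w}_{t+1}-\wg\|^2$, and take conditional expectation over $i_t$ to obtain
\begin{align*}
\mathbb{E}_i\!\left[\|\bs{w}_{t+1}-\wg\|^2\,\big|\,\bs{w}_t\right] = \|\bs{w}_t-\wg\|^2 - 2\eta_t \sum_i p_i(\bs{w}_t)\langle \nabla f_i(\bs{w}_t), \bs{w}_t - \wg\rangle + \eta_t^2 \sum_i p_i(\bs{w}_t)\|\nabla f_i(\bs{w}_t)\|^2.
\end{align*}
I would then split every sum as $\sum_i = \sum_{i\in\mathcal{G}} + \sum_{i\in\mathbb{O}}$ and handle the two halves separately, since only the good-sample part admits a strong-convexity bound centered at $\wg$.

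For $i\in\mathcal{G}$, the idea is to pair every appearance of $\nabla f_i(\bs{w}_t)$ with $\nabla f_i(\wg)$ so that convexity plus $L_i$-smoothness can be invoked. Writing $\langle \nabla f_i(\bs{w}_t),\bs{w}_t-\wg\rangle = \langle \nabla f_i(\bs{w}_t)-\nabla f_i(\wg),\bs{w}_t-\wg\rangle + \langle \nabla f_i(\wg),\bs{w}_t-\wg\rangle$ and applying co-coercivity $\|\nabla f_i(\bs{w}_t)-\nabla f_i(\wg)\|^2 \le L_i \langle \nabla f_i(\bs{w}_t)-\nabla f_i(\wg),\bs{w}_t-\wg\rangle$ together with $\|\nabla f_i(\bs{w}_t)\|^2 \le 2\|\nabla f_i(\bs{w}_t)-\nabla f_i(\wg)\|^2 + 2\|\nabla f_i(\wg)\|^2$ combines the linear and quadratic contributions of a single good sample into $-2\eta_t(1-\eta_t L_i)\langle \nabla f_i(\bs{w}_t)-\nabla f_i(\wg),\bs{w}_t-\wg\rangle$, a nonnegative quantity to be absorbed into the contraction, plus the leftover residual pieces $-2\eta_t\langle \nabla f_i(\wg),\bs{w}_t-\wg\rangle$ and $2\eta_t^2\|\nabla f_i(\wg)\|^2$ that then appear in $R_t$.

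The next step is to convert the weighted good-sample contraction into a multiple of $\lambda_{good}\|\bs{w}_t-\wg\|^2$. Since each $\langle \nabla f_i(\bs{w}_t)-\nabla f_i(\wg),\bs{w}_t-\wg\rangle$ is nonnegative by convexity, I can pull out $\min_{i\in\mathcal{G}} p_i(\bs{w}_t)$ from the weighted sum and then use $\sum_{i\in\mathcal{G}} \nabla f_i(\bs{w}) = |\mathcal{G}|\nabla F_{good}(\bs{w})$ together with $\nabla F_{good}(\wg)=0$ and the strong convexity assumption to bound the remaining sum below by a multiple of $\lambda_{good}\|\bs{w}_t-\wg\|^2$; this yields the displayed contraction factor $\psi = 2\eta_t \lambda_{good}(1-\eta_t \sup_i L_i)\min_{i\notin\mathbb{O}} p_i(\bs{w}_t)$. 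For $i\in\mathbb{O}$ no such strong convexity at $\wg$ is available, so I would use only convexity of $f_i$, i.e.\ $\langle \nabla f_i(\bs{w}_t),\bs{w}_t-\wg\rangle \ge f_i(\bs{w}_t)-f_i(\wg)$, to turn the outlier linear term into $2\eta_t(f_i(\wg)-f_i(\bs{w}_t))$, and leave $\eta_t^2\|\nabla f_i(\bs{w}_t)\|^2$ essentially untouched. These three pieces assemble exactly into the outlier contribution in $R_t$.

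The main obstacle is bookkeeping the constants consistently. The factor $(1-\eta_t \sup_i L_i)$ has to be tracked carefully through the co-coercivity step, since it is precisely what lets the $\eta_t^2$ term be dominated by the $\eta_t$ term and thus absorbed into the contraction; and the switch between $\tfrac{1}{|\mathcal{G}|}\sum$ in the definition of $F_{good}$ and the bare $\sum_{i\in\mathcal{G}}$ that appears after pulling out $\min_i p_i$ must be reconciled with the normalization of $\lambda_{good}$ in the statement. Everything else is a routine expand-and-bound manipulation.
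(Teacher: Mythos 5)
Your proposal is correct and follows essentially the same route as the paper's own proof: expand the one-step update, split into good samples and outliers, apply co-coercivity ($\|\nabla f_i(\bs{w_t})-\nabla f_i(\wg)\|^2 \le L_i \langle \nabla f_i(\bs{w_t})-\nabla f_i(\wg),\bs{w_t}-\wg\rangle$) together with the add-and-subtract of $\nabla f_i(\wg)$ on the good samples, pull out $\min_{i\in\mathcal{G}} p_i(\bs{w_t})$ before invoking strong convexity of $F_{good}$, and use plain convexity $\langle \nabla f_i(\bs{w_t}), \wg-\bs{w_t}\rangle \le f_i(\wg)-f_i(\bs{w_t})$ on the outlier terms. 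The bookkeeping issues you flag (tracking $(1-\eta_t\sup_i L_i)$ and the $|\mathcal{G}|$ normalization of $\lambda_{good}$) are exactly the points the paper also glosses over, and both are benign since the relevant inner products are nonnegative and $|\mathcal{G}|\ge 1$.
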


\begin{proof}
	Observe first that for each component function i.e. ,
	$$ \langle{\bs{w} -\bs{v}, \nabla f_{i}(\bs{w})-\nabla f_{i}(\bs{v})\rangle} \geq 
	\frac{1}{L_i}\| f_i(\bs{w}) -f_i(\bs{v}) \|^2$$
	For detailed proof, see Lemma A.1 in \citep{needell2014stochastic}.
	% Fact 2 (\textbf{Algorithm Expectation})
	% Observe that the  MKL-SGD algorithm have 
	% \begin{align*}
	% 	\mathbb{E}_i\left[\nabla f_{i}(\bs{w_t})|\bs{w_{t}}\right]
	% % 	&= \sum_{i\in\mathcal{G}} p_i(\bs{w_t})\nabla f_i(\bs{w_t}) + \sum_{i\in\mathbb{O}}p_i(\bs{w_t}) \nabla f_i(\bs{w_t})\\
	% &=	\frac{\abs{\mathcal{G}}}{n}\sum_{i\in\mathcal{G}}\frac{1}{\abs{\mathcal{G}}}\nabla f_i(\bs{w_t}) + \sum_{i\in\mathcal{G}} \left(\frac{1}{n}- p_i(\bs{w_t})\right)\nabla f_i(\bs{w_t}) 
	% +\sum_{i\in\mathbb{O}}p_i(\bs{w_t}) \nabla f_i(\bs{w_t})\\
	% 	\mathbb{E}_i\left[\|\nabla f_{i}(\bs{w_t})\|^2|\bs{w_{t}}\right] &=
	% 	\frac{\abs{\mathcal{G}}}{n}\sum_{i\in\mathcal{G}}\frac{1}{\abs{\mathcal{G}}} \|\nabla f_i(\bs{w_t})\|^2 +\sum_{i\in\mathcal{G}} \left(\frac{1}{n}- p_i(\bs{w_t})\right)\|\nabla f_i(\bs{w_t}) \|^2+ \sum_{i\in\mathbb{O}}p_i(\bs{w_t}) \|\nabla f_i(\bs{w_t})\|^2 
	% \end{align*}
	% For simple SGD algorithm, we have $p_i(\bs{w_t})=\frac{1}{n}$ and thus
	% \begin{align*}
	% 	\mathbb{E}_i\left[\nabla f_{i}(\bs{w_t})|\bs{w_{t}}\right]& = 	\frac{\abs{\mathcal{G}}}{n}\sum_{i\in\mathcal{G}}\frac{1}{\abs{\mathcal{G}}}\nabla f_i(\bs{w_t})+ \frac{\abs{\mathbb{O}}}{n}\sum_{i\in\mathbb{O}}\frac{1}{\abs{\mathbb{O}}}\nabla f_i(\bs{w_t})\\
	% 	\mathbb{E}_i\left[\|\nabla f_{i}(\bs{w_t})\|^2|\bs{w_{t}}\right] &=\frac{\abs{\mathcal{G}}}{n}\sum_{i\in\mathcal{G}}\frac{1}{\abs{\mathcal{G}}}\|\nabla f_i(\bs{w_t})\|^2+ \frac{\abs{\mathbb{O}}}{n}\sum_{i\in\mathbb{O}}\frac{1}{\abs{\mathbb{O}}}\|\nabla f_i(\bs{w_t})\|^2
	% \end{align*}

	For each individual component function $f_i(\bs{w})$, we have 
	\begin{align*}
	\norm{\bs{w_{t+1}} -  \wg}^2 =& \norm{\bs{w_t} -\wg}^2+ \eta_t^2\norm{\nabla f_{i}( \bs{w_{t}})}^2 - 2  \eta_t  \langle{ \bs{w_t} - \wg, \nabla f_{i}(\bs{w_t})\rangle}\\
	\leq    &\norm{\bs{w_t} -\wg}^2 +2\eta_t^2\| \nabla f_{i}(\bs{w_t})-\nabla f_{i}(\wg)\|^2+2\eta_t^2\norm{\nabla f_{i}( \wg)}^2 - 2  \eta_t  \langle{ \bs{w_t} - \wg, \nabla f_{i}(\bs{w_t})\rangle}\\
	\leq    &\norm{\bs{w_t} -\wg}^2 +2\eta_t^2L_i\langle{ \bs{w_t} -\wg ,\nabla f_{i}(\bs{w_t})-\nabla f_{i}(\wg)\rangle}+2\eta_t^2\norm{\nabla f_{i}( \wg)}^2\\
	&- 2  \eta_t  \langle{ \bs{w_t} -  \wg, \nabla f_{i}(\bs{w_t})\rangle} \\
	= &\norm{\bs{w_t} -\wg}^2 - 2\eta_t(1-\eta_t\sup_i L_i)\langle{\bs{w_t} - \wg, \nabla f_{i}(\bs{w_t})-\nabla f_{i}(\wg)\rangle}+2\eta_t^2\norm{\nabla f_{i}( \wg)}^2\\
	& -2  \eta_t  \langle{ \bs{w_t} -  \wg, \nabla f_{i}(\wg)\rangle}
	% \\
	% \leq  &\norm{\bs{w_t} -\wg}^2 - 2\eta_t\left( \frac{1}{L_i}-\eta_t\right)\norm{ f_{i}(\bs{w_t})- f_{i}(\wg)}^2+2\eta_t^2\norm{\nabla f_{i}( \wg)}^2\\
	% & -2  \eta_t  \langle{ \bs{w_t} -  \wg, \nabla f_{i}(\wg)\rangle}
	\end{align*}
	We next take an expectation with respect to the choice of $i$ conditional on $\bs{w_t}$ 
	\begin{align}
	\mathbb{E}_i\left[ \norm{\bs{w_{t+1}} -  \wg}^2 |\bs{w_{t}}  \right]
	% &=\mathbb{E}_t\left[\norm{\bs{w_t} -\wg - \eta_t \nabla f_{q_t}(\bs{w_t})}^2 \right] \notag \\
	\leq &\norm{\bs{w_t} -\wg}^2 -  2  \eta_t (1-\eta_t \sup_iL_i)\underbrace{\bigg\langle{ \bs{w_t} - \wg, \sum_{i\in \mathcal{G}} p_i(\bs{w_t})\left(\nabla f_{i}(\bs{w_t}) - \nabla f_{i}(\wg) \right)\bigg\rangle}}_{Term1} \notag\\
	&- 2  \eta_t  \langle{ \bs{w_t} - \wg,\sum_{i\in \mathcal{G}} p_i(\bs{w_t})\nabla f_{i}( \wg)\rangle}+ 2\eta_t^2\sum_{i\in \mathcal{G}} p_i(\bs{w_t}) \norm{\nabla f_{i}( \wg)}^2 \notag
	\\
	& 
	+ \eta_t^2\sum_{i\in \mathbb{O}}p_i(\bs{w_t})\norm{\nabla f_{i}( \bs{w_{t}})}^2+ 2  \eta_t  \underbrace{\langle{\wg- \bs{w_t},\sum_{i\in \mathbb{O}} p_i(\bs{w_t})\nabla f_{i}(\bs{w_t})\rangle}}_{Term2} \label{eq:contraction}
	\end{align}
	Now we first bound $Term1$ as follows
	\begin{align*}
	Term1 &\leq \min_{i}p_i(\bs{w_t}) \sum_{ i\in \mathcal{G}}\bigg\langle{ \bs{w_t} - \wg,\nabla f_{i}(\bs{w_t}) - \nabla f_{i}(\wg) \bigg\rangle} \\
	&\leq\min_{i  \in \mathcal{G}}p_i(\bs{w_t})   \lambda_{good}\norm{\bs{w_{t+1}} -  \wg}^2
	\end{align*}
	For $Term2$ we apply the property of the convex function $\langle{ \nabla f_i(\bs{v}), \bs{w}-\bs{v} \rangle}\leq f_i(\bs{w})-f_i(\bs{v})$
	% $$  \langle{\nabla f_{i}(\wg), \bs{w_t} - \wg\rangle} \leq f_{i}(\bs{w_{t}})-f_{i}(\wg)$$
	$$ Term2\leq \sum_{i\in \mathbb{O}} p_i(\bs{w_t})\left(f_{i}(\wg)-f_{i}(\bs{w_{t}})\right)$$
	% then
	Putting the upper bound of $Term1$ and $Term2$ back to \eqref{eq:contraction} gives
	%	\begin{align}
	%	\mathbb{E}_i\left[ \norm{\bs{w_{t+1}} -  \wg}^2 |\bs{w_{t}}  \right]
	%	\end{align}
	%	
	\begin{align}
	\mathbb{E}_i\left[ \norm{\bs{w_{t+1}} - \wg}^2|\bs{w_{t}} \right]
	\leq & \left( 1-  2\eta_t  \lambda_{good }(1-\eta_t \sup_iL_i)\min_{ i\in \mathcal{G}}p_i(\bs{w_t}) \right)\norm{\bs{w_t} -\wg}^2+ R_t 
	\end{align}
	where 
	\begin{align*}
	R_t 
	=	& - 2  \eta_t \sum_{i\in \mathcal{G}} p_i(\bs{w_t}) \langle{ \bs{w_t} - \wg,\nabla f_{i}( \wg)\rangle}\\
	& + 2\eta_t^2\sum_{i\in \mathcal{G}} p_i(\bs{w_t}) \norm{\nabla f_{i}( \wg)}^2 + \eta_t^2\sum_{i\in \mathbb{O}}p_i(\bs{w_t})\norm{\nabla f_{i}( \bs{w_{t}})}^2+2  \eta_t \sum_{i\in \mathbb{O}}p_i(\bs{w_t}) \left(f_{i}(\wg)- f_i(\bs{w_t} )\right)
	\end{align*}	
	
\end{proof}
We have the following corollary that for noiseless setting, if we can  have some  good  initialization,  MKL-SGD is always better than SGD even the corrupted data is greater than half. For noisy setting, we can also perform better than SGD with  one more condition:  the noise is not large than the distance $\|\Delta_t\|^2$. This condition is not mild in the sense that $\|\bs{w}_t-\wg\|^2$ is always greater than $\|\wsgd -\wg\|^2$ for SGD algorithm and $\|\wsm -\wg\|^2$ for  MKL-SGD.

\begin{corollary}
	%\label{theorem:convergence}
	Suppose we have $\abs{\mathcal{G}}\leq \frac{n}{2}$. At iteration $t$ for $\eta_t\leq \frac{1}{\sup_i L_i}$,  the parameter $\bs{w_t}$  satisfies $\sup_{i \in \mathcal{G}}f_{i}(\bs{w_t})\leq \inf_{j \in \mathbb{O} }f_{j}(\bs{w_t}) $. Moreover, assume the noise level at optimal $\wg$ satisfies
	\begin{align}
	\text{either} \quad   \norm {\nabla f_{i}( \wg)} & \leq \frac{\lambda_{good}(1-\eta_t\sup_i L_i)/n}{1+ \sqrt{1+\eta_t(1-\eta_t\sup_i L_i )\lambda_{good}/n}}\|\bs{w_{t}} - \wg \|  \label{eq:noise-level1}, \text{ for } i \in \mathcal{G}   \\
	\text{or} \quad  \sum_{i\in\mathcal{G}  }\norm {\nabla f_{i}( \wg)} ^2& \leq \left( \frac{\lambda_{good}(1-\eta_t\sup_i L_i)\abs{\mathcal{G}}/n}{\sqrt{n}+ \sqrt{\sqrt{n}+\eta_t(1-\eta_t\sup_i L_i )\lambda_{good}\abs{\mathcal{G}}/n}} \right)^2\|\bs{w_{t}} - \wg \| ^2 \label{eq:noise-level2}.
	\end{align}
	Using the same setup, the vanilla SGD and MKL-SGD (K=2) algorithms yield respectively
	\begin{align*}
	\textbf{SGD} \quad \mathbb{E}_i\left[ \norm{\bs{w_{t+1}} - \wg}^2|\bs{w_{t}} \right]
	\leq & \left( 1-  2\eta_t  \lambda_{good }(1-\eta_t \sup_iL_i)\frac{\abs{\mathcal{G}}}{n}\right)\norm{\bs{w_t} -\wg}^2+  R_t^{(SGD)} \\
	\textbf{MKL-2} \quad\mathbb{E}_i\left[ \norm{\bs{w_{t+1}} - \wg}^2|\bs{w_{t}} \right]
	\leq & \left( 1-  2\eta_t  \lambda_{good }(1-\eta_t \sup_iL_i)\frac{\abs{\mathcal{G}}}{n}\right)\norm{\bs{w_t} -\wg}^2+R_t^{(MKL_2)} 
	\end{align*}
	where 
	\begin{align*}
	R_t^{(MKL_2)} &\leq R_t^{(SGD)}.
	\end{align*}
\end{corollary}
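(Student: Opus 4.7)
The plan is to apply Theorem~\ref{lem:R} to both SGD and MKL-$2$ with their respective probability vectors, and then compare the two resulting upper bounds term by term under the sorted-loss hypothesis $\sup_{i\in\mathcal{G}} f_i(\bs{w_t}) \leq \inf_{j\in\mathbb{O}} f_j(\bs{w_t})$. First, I would substitute the concrete weights: SGD has $p_i(\bs{w_t})=1/n$ uniformly, while MKL-$2$ (with replacement) by \eqref{eq:probability} assigns probability $(2(n-j)+1)/n^2$ to the sample of loss-rank $j$, which is strictly decreasing in $j$. Under the sorted-loss hypothesis, good samples occupy ranks $1,\dots,|\mathcal{G}|$, so
\[
\min_{i\in\mathcal{G}} p_i^{(MKL_2)}(\bs{w_t}) \;=\; \frac{2(n-|\mathcal{G}|)+1}{n^2} \;\geq\; \frac{1}{n},
\]
where the final inequality uses $|\mathcal{G}|\leq n/2$. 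This matches the uniform SGD weight from below.

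Next I would recover the coefficient $|\mathcal{G}|/n$ stated in the corollary by slightly sharpening the proof of Theorem~\ref{lem:R}. In that proof, pulling $\min_{i\in\mathcal{G}} p_i$ out of the first term leaves $\sum_{i\in\mathcal{G}}\langle\bs{w_t}-\wg,\nabla f_i(\bs{w_t})-\nabla f_i(\wg)\rangle = |\mathcal{G}|\,\langle\bs{w_t}-\wg,\nabla F_{good}(\bs{w_t})-\nabla F_{good}(\wg)\rangle$, to which strong convexity of $F_{good}$ applies, giving the lower bound $|\mathcal{G}|\cdot \min_{i\in\mathcal{G}} p_i \cdot \lambda_{good}\|\bs{w_t}-\wg\|^2$. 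For SGD this is exactly $(|\mathcal{G}|/n)\lambda_{good}\|\bs{w_t}-\wg\|^2$; for MKL-$2$ the previous step's estimate $\min p_i \geq 1/n$ produces the same lower bound. Both algorithms therefore share the stated contraction factor, and the corollary reduces to $R_t^{(MKL_2)} \leq R_t^{(SGD)}$.

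For that comparison I would write $R_t = \sum_i p_i(\bs{w_t})\,c_i$ with
\[
c_i \;=\; \begin{cases} -2\eta_t\langle\bs{w_t}-\wg,\nabla f_i(\wg)\rangle + 2\eta_t^2\|\nabla f_i(\wg)\|^2, & i\in\mathcal{G},\\[4pt] \eta_t^2\|\nabla f_i(\bs{w_t})\|^2 + 2\eta_t\bigl(f_i(\wg)-f_i(\bs{w_t})\bigr), & i\in\mathbb{O}, \end{cases}
\]
so the desired inequality becomes $\sum_i \bigl(p_i^{(MKL_2)}(\bs{w_t})-1/n\bigr)\,c_i \leq 0$. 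Because $p^{(MKL_2)}$ is decreasing in loss-rank, its excess over $1/n$ is positive precisely on the low-rank good samples and negative on the high-rank outlier samples. The good-sample $c_i$'s vanish identically in the noiseless regime (where $\nabla f_i(\wg)=f_i(\wg)=0$ for $i\in\mathcal{G}$) and, in the noisy regime, the two alternative noise-level hypotheses \eqref{eq:noise-level1} and \eqref{eq:noise-level2} are calibrated via Cauchy--Schwarz so that the resulting positive good-side contribution is dominated by the contraction term. On the outlier side $\eta_t^2\|\nabla f_i(\bs{w_t})\|^2\geq 0$, so the mass that MKL-$2$ removes from outliers relative to SGD is favorable.

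The main obstacle will be taming the sign-indefinite term $2\eta_t(f_i(\wg)-f_i(\bs{w_t}))$ for $i\in\mathbb{O}$. Each summand can in principle have either sign, but the sorted-loss hypothesis forces $f_i(\bs{w_t}) \geq \sup_{j\in\mathcal{G}} f_j(\bs{w_t})$, so the decreasing MKL weights are paired with outlier contributions whose $-f_i(\bs{w_t})$ piece grows with rank. Making this pairing rigorous is most naturally done by a Chebyshev rearrangement inequality applied to the co-monotone sequences $\{p^{(MKL_2)}_i - 1/n\}$ and $\{c_i\}_{i\in\mathbb{O}}$, after which summing and invoking \eqref{eq:noise-level1} or \eqref{eq:noise-level2} to absorb the good-sample contribution yields $R_t^{(MKL_2)} \leq R_t^{(SGD)}$ and completes the proof.
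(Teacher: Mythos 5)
Your overall strategy matches the paper's: apply Theorem~\ref{lem:R} with the two probability vectors, use the sorted-loss hypothesis and $\abs{\mathcal{G}}\leq n/2$ to locate where $p_i-\tfrac{1}{n}$ is positive (good ranks) versus negative (outlier ranks), and reduce everything to $R_t^{(MKL_2)}\leq R_t^{(SGD)}$. But there is a genuine gap in how you recover the shared contraction factor. By rounding $\min_{i\in\mathcal{G}}p_i^{(MKL_2)}\geq \tfrac{1}{n}$ and folding \emph{all} of the strong-convexity gain into the contraction coefficient $\tfrac{\abs{\mathcal{G}}}{n}$, you discard the strict surplus $p_i^{(MKL_2)}-\tfrac{1}{n}>0$ on the good ranks. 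That surplus is the only resource available to pay for the good-sample cross terms: for vanilla SGD the linear term $-2\eta_t\sum_{i\in\mathcal{G}}p_i\langle \bs{w_t}-\wg,\nabla f_i(\wg)\rangle$ vanishes identically because $\sum_{i\in\mathcal{G}}\nabla f_i(\wg)=0$, whereas for MKL-$2$ the non-uniform weights leave a residual of order $\tfrac{4\eta_t}{n}\norm{\bs{w_t}-\wg}\sum_{i\in\mathcal{G}}\norm{\nabla f_i(\wg)}$ that can be positive and has no counterpart in $R_t^{(SGD)}$. The paper keeps the surplus explicitly, splitting Term1 in \eqref{eq:contraction} into the uniform part (which yields the shared factor $\tfrac{\abs{\mathcal{G}}}{n}\lambda_{good}$) plus the correction $-2\eta_t(1-\eta_t\sup_iL_i)\sum_{i\in\mathcal{G}}(p_i-\tfrac{1}{n})\langle\cdot\rangle \leq -2\eta_t(1-\eta_t\sup_iL_i)\tfrac{\abs{\mathcal{G}}\lambda_{good}}{n(n-1)}\norm{\bs{w_t}-\wg}^2$, which it carries \emph{inside} $R_t^{(MKL_2)}$; the noise conditions \eqref{eq:noise-level1}--\eqref{eq:noise-level2} are calibrated exactly so that this negative term dominates the residual, via \eqref{eq:get}. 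Your appeal to the contributions being ``dominated by the contraction term'' cannot work as stated, because the corollary fixes the contraction factors of the two algorithms to be equal, leaving no slack there.

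A secondary issue is the outlier side. The Chebyshev rearrangement argument you propose does not go through directly: $\sum_{i\in\mathbb{O}}\bigl(p_i^{(MKL_2)}-\tfrac{1}{n}\bigr)$ is not zero (it is the negative of the good-side surplus), the outlier contributions $c_i$ contain the sign-indefinite piece $2\eta_t\bigl(f_i(\wg)-f_i(\bs{w_t})\bigr)$, and no monotonicity of $c_i$ in loss rank is available. The paper instead uses the coefficient-wise bound $p_{m_i}^{(MKL_2)}\leq \tfrac{1}{n}$ on the outlier ranks (which holds once the outliers occupy the top ranks and $\abs{\mathcal{G}}\leq n/2$ places those ranks above $(n+1)/2$), together with nonnegativity of $\norm{\nabla f_i(\bs{w_t})}^2$, and needs no rearrangement inequality. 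Finally, note the paper works with the without-replacement weights $\tfrac{2(n-i)}{n(n-1)}$ rather than your with-replacement $\tfrac{2(n-i)+1}{n^2}$; this is cosmetic, but your constants would change accordingly.
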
 
% 

% \subsubsection{Corollary to Theorem \ref{thm:R}}

% \begin{corollary}
% \label{col:convergence}
% If $|\mathcal{G}| < \frac{|n|}{2}$, then at iteration $t$ for $\eta_t\leq \frac{1}{\sup_i L_i}$,  the parameter $\bs{w_t}$  satisfies $\sup_{i \in \mathcal{G}}f_{i}(\bs{w_t})\leq \inf_{j \in \mathcal{B} }f_{j}(\bs{w_t}) $. Moreover, assume the noise level at optimal $\wg$ satisfies
% either
% \begin{footnotesize}
%   $$
%   \norm {\nabla f_{i}( \wg)} \leq \frac{\psi/(2\eta_t\abs{\mathcal{G}}) }{1+ \sqrt{1+\psi/(2\abs{\mathcal{G}})}}\|\Delta_t\|, \forall i \in \mathcal{G}  
%   $$
%  \text{or} 
% $$
%  \sum_{i\in\mathcal{G}  }\norm {\nabla f_{i}( \wg)} ^2
%  \leq\frac{ \left( \psi/(2\eta_t)\right)^2}{ 2n+ 2\sqrt{n}+ \psi} \|\Delta_t\|^2.
% $$
% \end{footnotesize}
%  We have
%  $$R_t^{(MKL_2)} \leq R_t^{(SGD)}$$
%  where $R_t^{(MKL_2)}$ and $R_t^{(SGD)}$ are  the residual terms in \eqref{eq:residual} for MKL and standard SGD  algorithms respectively.
%  \end{corollary}
%  The corollary  tells us that for noiseless setting, if we can  have some  good  initialization we always \mkl better than SGD even the corrupted data is greater than half. For noisy setting, we can also perform better than SGD with  one more condition:  the noise is not large than the distance $\|\Delta_t\|^2$. This condition is not mild in the sense that $\|\Delta_t\|^2$ is always greater than $\|\wsgd -\wg\|^2$ for SGD algorithm and $\|\wsm -\wg\|^2$ for \mkl.

\begin{proof}
	
	Start from the inequality \ref{eq:contraction} in the proof of Theorem  \ref{lem:R}.
	We have  $Term1$ as follows:
	\begin{align*}
	Term1
	= & \frac{\abs{\mathcal{G}}}{n}\bigg\langle{ \bs{w_t} - \wg, \sum_{i\in \mathcal{G}} \frac{p_i(\bs{w_t})}{{\abs{\mathcal{G}}}/{n}}\left(\nabla f_{i}(\bs{w_t}) - \nabla f_{i}(\wg) \right)\bigg\rangle}\\
	= & \frac{\abs{\mathcal{G}}}{n}\langle{ \bs{w_t} -\wg,  \nabla {F}_{good}(\bs{w_t})-\nabla {F}_{good}(\wg) \rangle}\\
	&+\frac{\abs{\mathcal{G}}}{n}\sum_{i\in \mathcal{G}} \left(\frac{p_i(\bs{w_t})}{{\abs{\mathcal{G}}}/{n}}-\frac{1}{ \abs{\mathcal{G}}}\right)\langle{ \bs{w_t} - \wg, \nabla f_{i}(\bs{w_t}) - \nabla f_{i}(\wg) \rangle}\\
	% &+\sum_{i\in \mathbb{O}}\frac{p_i(\bs{w_t})}{L_i} \|f_{i}(\bs{w_t}) - \nabla f_{i}(\wg)\|^2 \quad \text{(Fact 1)}\\
	\geq& \lambda_{good} \frac{\abs{\mathcal{G}}}{n}\|\bs{w_t} - \wg\|^2+\sum_{i\in \mathcal{G}} \left(p_i(\bs{w_t})-\frac{1}{ n}\right)\langle{ \bs{w_t} - \wg, \nabla f_{i}(\bs{w_t}) - \nabla f_{i}(\wg) \rangle}
	\end{align*}
	
	%For $Term2$ we apply the property of the convex function $\langle{ \nabla f_i(\bs{v}), \bs{w}-\bs{v} \rangle}\leq f_i(\bs{w})-f_i(\bs{v})$
	%	% $$  \langle{\nabla f_{i}(\wg), \bs{w_t} - \wg\rangle} \leq f_{i}(\bs{w_{t}})-f_{i}(\wg)$$
	%	$$  \langle{\nabla f_{i}(\bs{w_{t}}), \wg- \bs{w_t} \rangle} \leq f_{i}(\wg)-f_{i}(\bs{w_{t}})$$
	%	% then
	
	% $ \left(f_{i}(\wg)- f_i(\bs{w_t} )\right)$
	Putting the terms back to \eqref{eq:contraction}, we have for $\eta_t \leq {1}/(\sup_i{L}_{i} )$
	\begin{align}
	\mathbb{E}_i\left[ \norm{\bs{w_{t+1}} - \wg}^2|\bs{w_{t}} \right]
	\leq & \left( 1-  2\eta_t  \lambda_{good }(1-\eta_t \sup_iL_i)\frac{\abs{\mathcal{G}}}{n}\right)\norm{\bs{w_t} -\wg}^2+ R_t 
	\end{align}
	where 
	\begin{align*}
	R_t 
	% = &-  2\eta_t(1-\eta_t \sup_iL_i))\sum_{i\in \mathcal{G}} \left(p_i(\bs{w_t})-\frac{1}{n}\right)\langle{ \bs{w_t} - \wg, \nabla f_{i}(\bs{w_t}) - \nabla f_{i}(\wg) \rangle}\\
	% & 
	% +2  \eta_t  \langle{ \wg-\bs{w_t} ,\sum_{i\in \mathcal{G}} p_i(\bs{w_t})\nabla f_{i}( \wg)\rangle}\\
	% & 
	% + 2\eta_t^2\sum_{i\in \mathcal{G}} p_i(\bs{w_t}) \norm{\nabla f_{i}( \wg)}^2 + \eta_t^2\sum_{i\in \mathbb{O}}p_i(\bs{w_t})\norm{\nabla f_{i}( \bs{w_{t}})}^2+2  \eta_t \sum_{i\in \mathbb{O}} \langle{  \wg-\bs{w_t}, p_i(\bs{w_t})\nabla f_{i}(\bs{w_t})\rangle} \\
	= & -  2\eta_t(1-\eta_t \sup_iL_i))\sum_{i\in \mathcal{G}} \left(p_i(\bs{w_t})-\frac{1}{n}\right)\langle{ \bs{w_t} - \wg, \nabla f_{i}(\bs{w_t}) - \nabla f_{i}(\wg) \rangle}\\
	& 
	- 2  \eta_t \sum_{i\in \mathcal{G}} p_i(\bs{w_t}) \langle{ \bs{w_t} - \wg,\nabla f_{i}( \wg)\rangle}\\
	& 
	+ 2\eta_t^2\sum_{i\in \mathcal{G}} p_i(\bs{w_t}) \norm{\nabla f_{i}( \wg)}^2 + \eta_t^2\sum_{i\in \mathbb{O}}p_i(\bs{w_t})\norm{\nabla f_{i}( \bs{w_{t}})}^2+2  \eta_t \sum_{i\in \mathbb{O}}p_i(\bs{w_t}) \left(f_{i}(\wg)- f_i(\bs{w_t} )\right)
	\end{align*}
	Now we analyse the term $R_t$ for vanilla SGD and M$K$L-SGD($K=2$) respectively.
	For vanilla SGD, we have $p_i(\bs{w_t}) =\frac{1}{n}$ and $\sum_{i\in \mathcal{G}}\nabla f_{i}( \wg) =0$, which results in 
	\begin{align*}
	R_t^{(SGD)}=  \frac{2\eta_t^2}{n}\sum_{i\in \mathcal{G}}\norm{\nabla f_{i}( \wg)}^2 + \frac{\eta_t^2}{n}\sum_{i\in \mathbb{O}}\norm{\nabla f_{i}( \bs{w_{t}})}^2+\frac{2\eta_t}{n} \sum_{i\in \mathbb{O}} \left(f_{i}(\wg)- f_i(\bs{w_t} )\right)
	\end{align*}
	Note that M$K$L-SGD for $K=2$ have 
	\begin{align}
	p_{m_i(\bs{w})}(\bs{w}) = \dfrac{2(n-i)}{n(n-1)} 
	\end{align}
	where $m_1(\bs{w}), m_2(\bs{w}), m_3(\bs{w}), \dots m_n(\bs{w})$ are the indices of data samples for some $\bs{w}$:
	\begin{align*}
	f_{m_1(\bs{w})}(\bs{w}) \leq 	f_{m_2(\bs{w})}(\bs{w}) \leq \dots \leq 	f_{m_n(\bs{w})}(\bs{w}) %\label{eq:order}
	\end{align*}
	Suppose the iteration $\bs{w_t}$ satisfies that $f_{i}(\bs{w_t})<f_{j}(\bs{w_t})$ for $i \in \mathcal{G}, j \in \mathbb{O} $.
	For $\abs{\mathcal{G}}\leq \frac{n}{2}$, we have for 
	\begin{align*}
	R_t^{(MKL_2)}=&-  2\eta_t(1-\eta_t \sup_iL_i)\sum_{i=1}^{\abs{\mathcal{G}}} \frac{\left(n-2i+1\right)}{n(n-1)}\langle{ \bs{w_t} - \wg, \nabla f_{m_i}(\bs{w_t}) - \nabla f_{m_i}(\wg) \rangle}\\
	& 
	+ 2  \eta_t \sum_{i=1}^{\abs{\mathcal{G}}} \dfrac{2(n-i)}{n(n-1)} \left( \langle{  \wg-\bs{w_t},\nabla f_{i}( \wg)\rangle}+ \eta_t \norm {\nabla f_{i}( \wg)}^2\right) \\
	& 
	+ \eta_t^2\sum_{i=\abs{\mathcal{G}}+1 }^n\dfrac{2(n-i)}{n(n-1)}\norm{\nabla f_{i}( \bs{w_{t}})}^2+ 2  \eta_t \sum_{i=\abs{\mathcal{G}}+1 }^n\dfrac{2(n-i)}{n(n-1)} \left(f_{i}(\wg)- f_i(\bs{w_t} )\right)\\
	\leq &-  2\eta_t(1-\eta_t \sup_iL_i)\frac{ \abs{\mathcal{G}}\lambda_{good}}{n(n-1)} \norm{\bs{w_t} - \wg}^2 \\
	&+ \frac{4\eta_t}{n}   \sum_{i=1}^{\abs{\mathcal{G}}}\left(\|  \wg-\bs{w_t}\|\|\nabla f_{i}( \wg)\|+ \eta_t \norm {\nabla f_{i}( \wg)}^2\right) \\
	& 
	+ \sum_{i=\abs{\mathcal{G}}+1 }^n\dfrac{\eta_t^2}{n}\norm{\nabla f_{i}( \bs{w_{t}})}^2+  \sum_{i=\abs{\mathcal{G}}+1 }^n\dfrac{2  \eta_t}{n} \left(f_{i}(\wg)- f_i(\bs{w_t} )\right)
	% \\
	% \leq&  R_t^{(SGD)}
	\end{align*}
	We will have $R_t^{(MKL_2)} \leq  R_t^{(SGD)}$ if  the following inequality holds
	\begin{align}
	(1-\eta_t \sup_iL_i)\frac{ \abs{\mathcal{G}}\lambda_{good}}{(n-1)} \norm{\bs{w_t} - \wg}^2  \geq    \sum_{i=1}^{\abs{\mathcal{G}}}\left(2\|  \wg-\bs{w_t}\|\|\nabla f_{i}( \wg)\|+\eta_t \norm {\nabla f_{i}( \wg)}^2\right). \label{eq:get}    
	\end{align}
	Indeed, for the noise level $\norm {\nabla f_{i}( \wg)}^2$ satisfying \eqref{eq:noise-level1} we have for  $i \in \mathcal{G}$,
	\begin{align*}
	(1-\eta_t \sup_iL_i)\frac{ \lambda_{good}}{(n-1)} \norm{\bs{w_t} - \wg}^2 
	% &\geq \left(2\|  \wg-\bs{w_t}\| \sqrt{n\sum_{i\in \mathcal{G}} \|\nabla f_{i}( \wg)\|^2}+\eta_t  \sum_{i\in \mathcal{G}}\norm {\nabla f_{i}( \wg)}^2\right)\\
	&\geq 2\|  \wg-\bs{w_t}\|\|\nabla f_{i}( \wg)\|+\eta_t \norm {\nabla f_{i}( \wg)}^2. 
	\end{align*}
	Summing up the terms in $i\in \mathcal{G}$, we get \eqref{eq:get}.
	For the noise level $\norm {\nabla f_{i}( \wg)}^2$ satisfying \eqref{eq:noise-level2} we have  
	%we have $i \in \mathcal{G}$
	\begin{align*}
	(1-\eta_t \sup_iL_i)\frac{ \lambda_{good}\abs{\mathcal{G}} }{(n-1)} \norm{\bs{w_t} - \wg}^2 
	&\geq \left(2\|  \wg-\bs{w_t}\| \sqrt{n\sum_{i\in \mathcal{G}} \|\nabla f_{i}( \wg)\|^2}+\eta_t  \sum_{i\in \mathcal{G}}\norm {\nabla f_{i}( \wg)}^2\right)\\
	&\geq 2\|  \wg-\bs{w_t}\|\sum_{i\in \mathcal{G}}\|\nabla f_{i}( \wg)\|+\eta_t\sum_{i\in \mathcal{G}} \norm {\nabla f_{i}( \wg)}^2. 
	\end{align*}
	which results in \eqref{eq:get}.
\end{proof}

% \end{proof}

\newpage

\subsection{More experimental results}

\subsubsection{Linear Regression}\label{sec:linReg}

Here, we show that there exists a tradeoff for \mkl~ between the rate of convergence and robustness the algorithm provides against outliers depending on the value of the parameter $k$. Larger the $k$, more robust is the algorithm, but slower is the rate of convergence. The algorithm outperforms median loss SGD and SGD. We also experimentd with other order statistics and observed that for most general settings \mkl~ was the best to pick. Note that the outliers are chosen from $\mathcal{N}(0,1)$ distribution. 
\begin{figure}[!h]
	\subfloat[][k=2]{\includegraphics[scale=0.35]{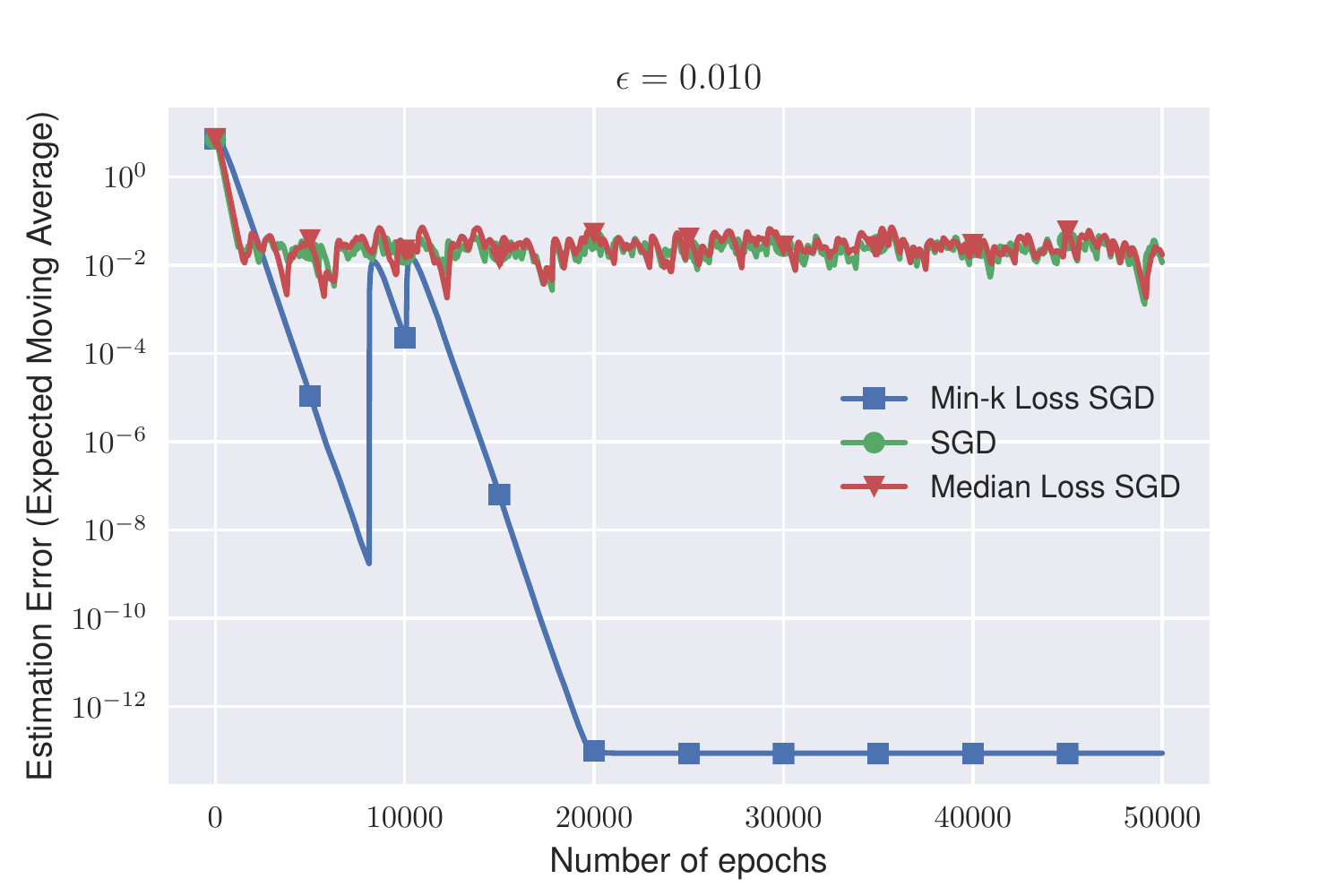}}
	\subfloat[][k=3]{\includegraphics[scale=0.35]{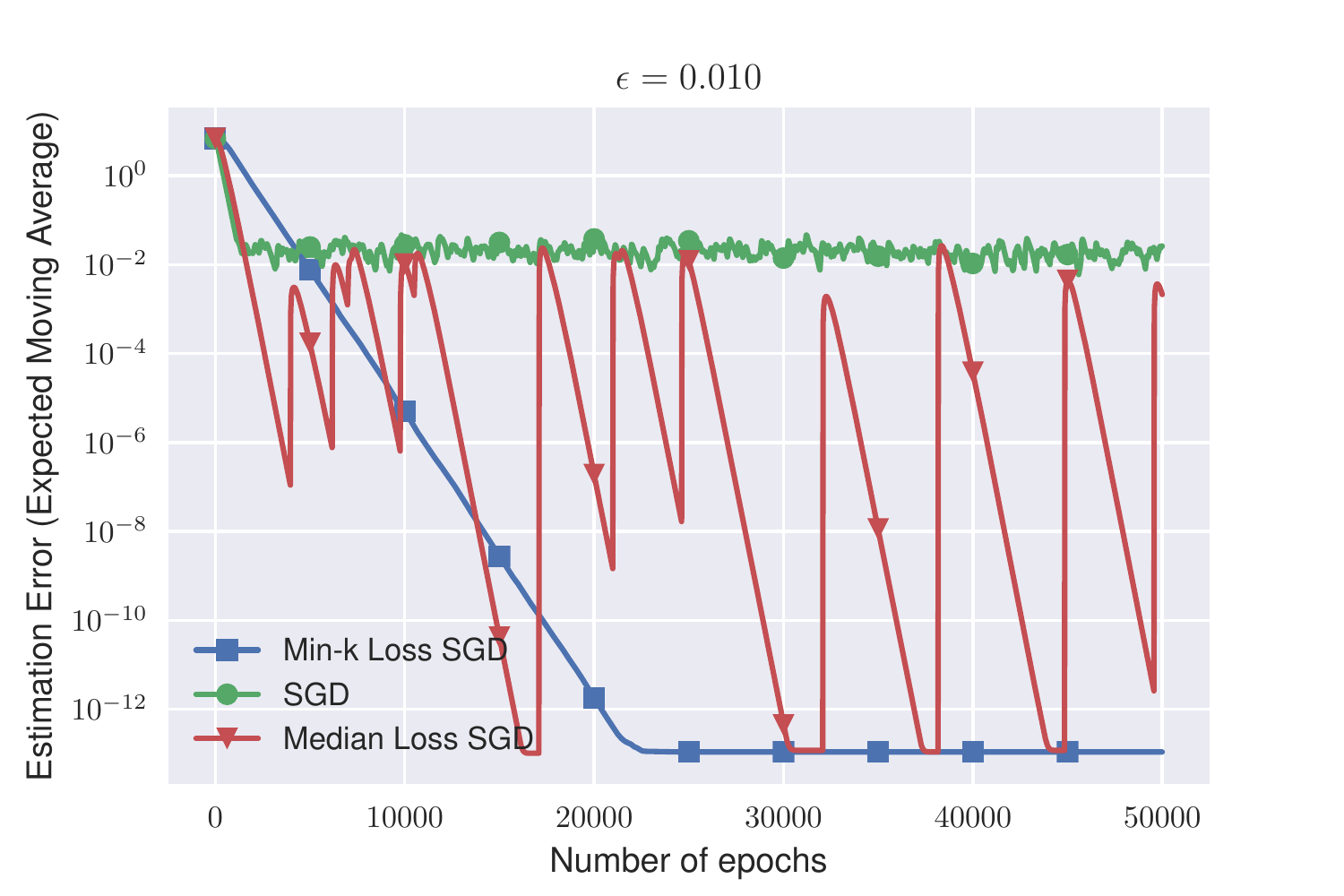}}
	\subfloat[][k=5]{\includegraphics[scale=0.35]{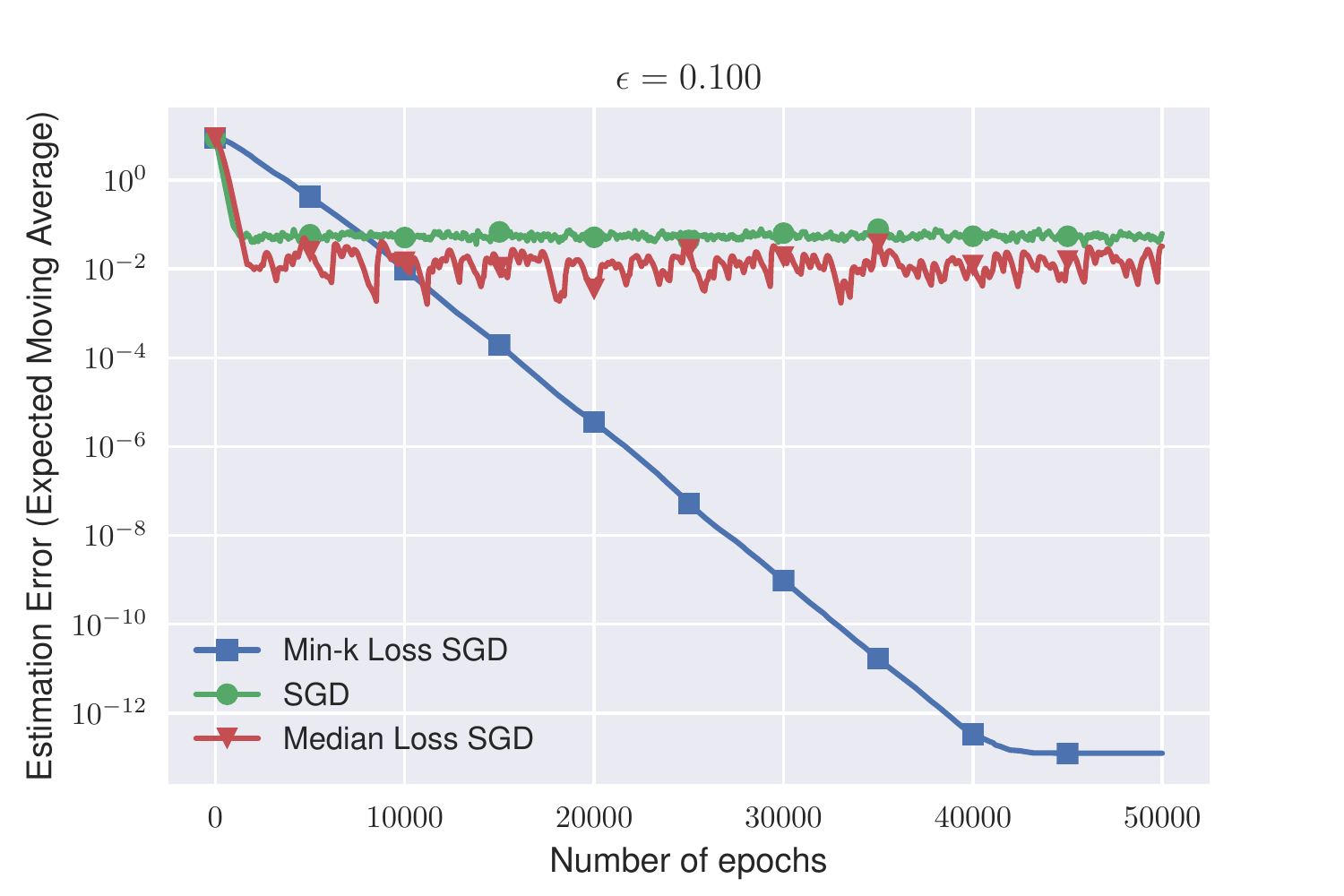}}
	\caption{Comparing the performance of \mkl, SGD and Median loss SGD in the noiseless setting, $d=50$. }
\end{figure}

\begin{figure}[!h]
	\subfloat[][k=3]{\includegraphics[scale=0.35]{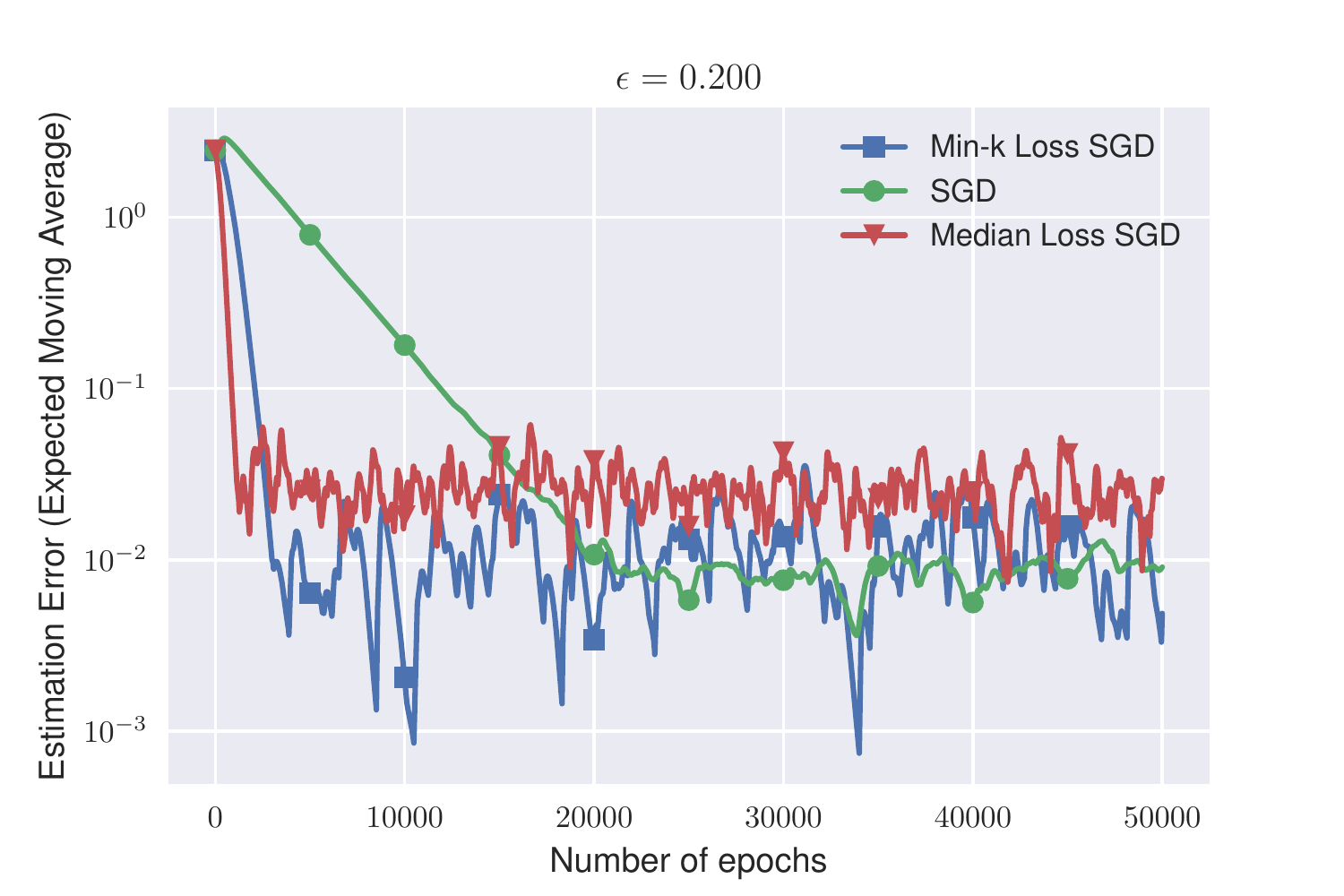}}
	\subfloat[][k=5]{\includegraphics[scale=0.35]{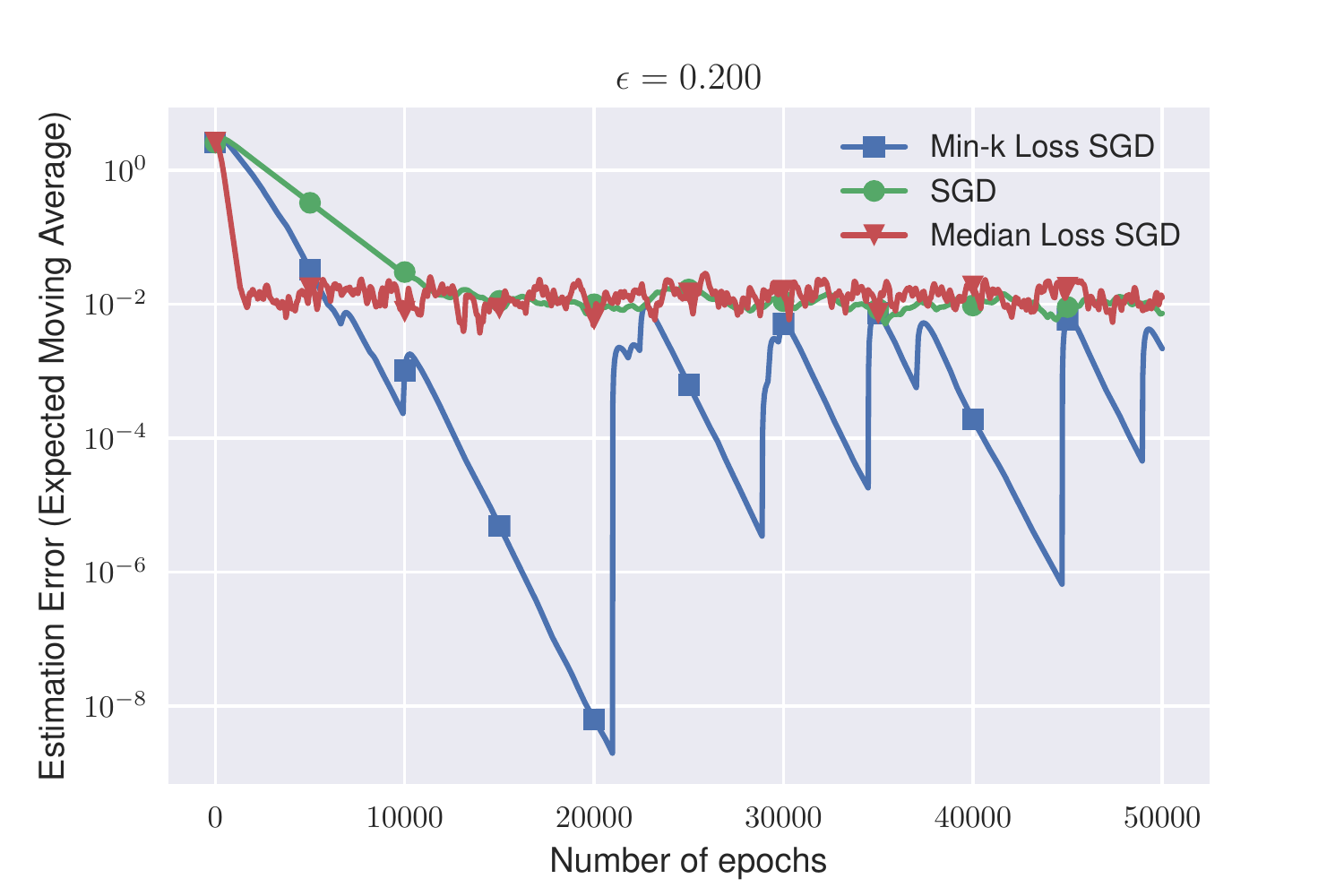}}
	\subfloat[][k=9]{\includegraphics[scale=0.35]{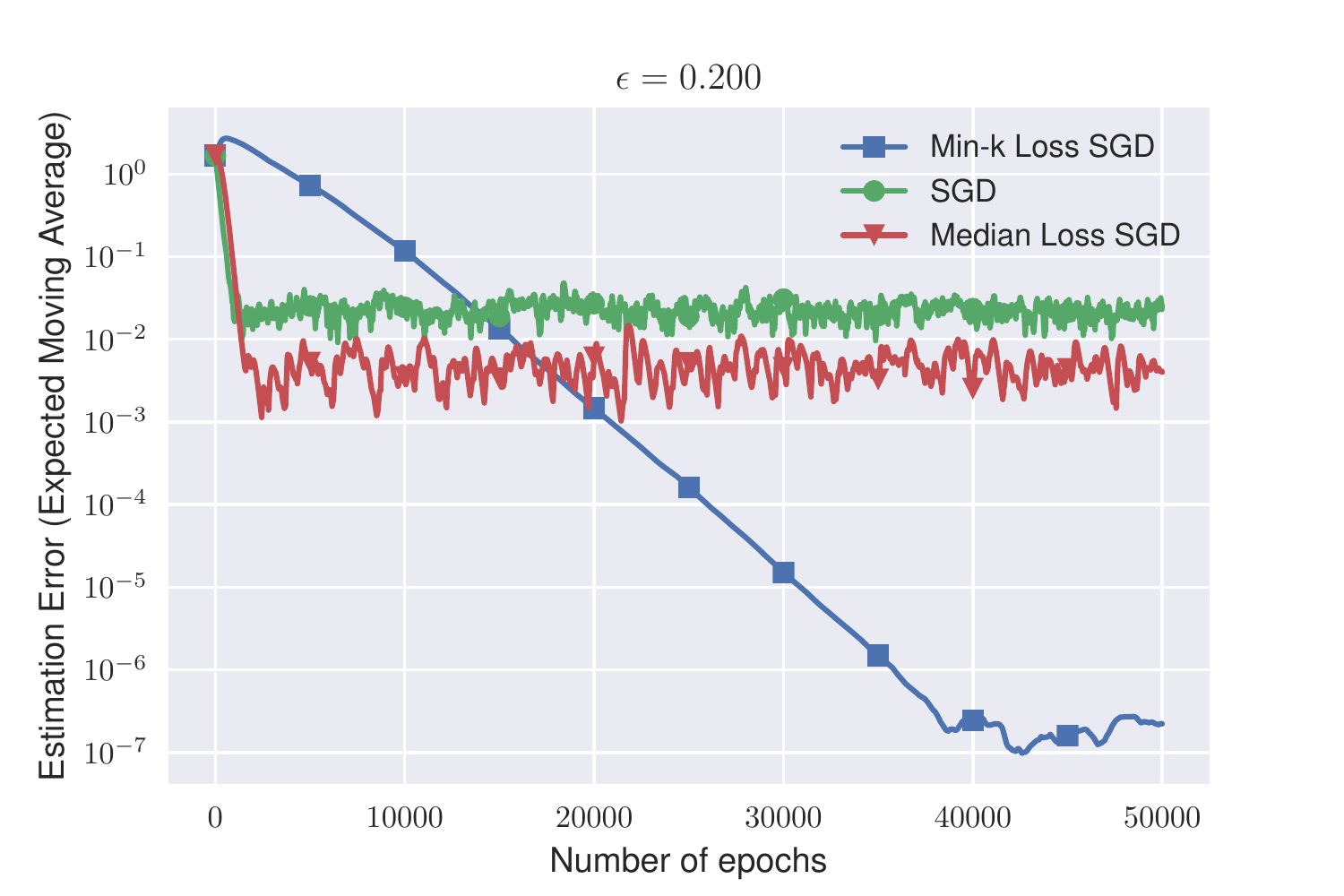}}
	\caption{Comparing the performance of \mkl, SGD and Median loss SGD in the noisy setting, $d=10$, Noise variance=0.0001}
\end{figure}

\begin{figure}[!h]
	\centering
	\subfloat[][k=3]{\includegraphics[scale=0.35]{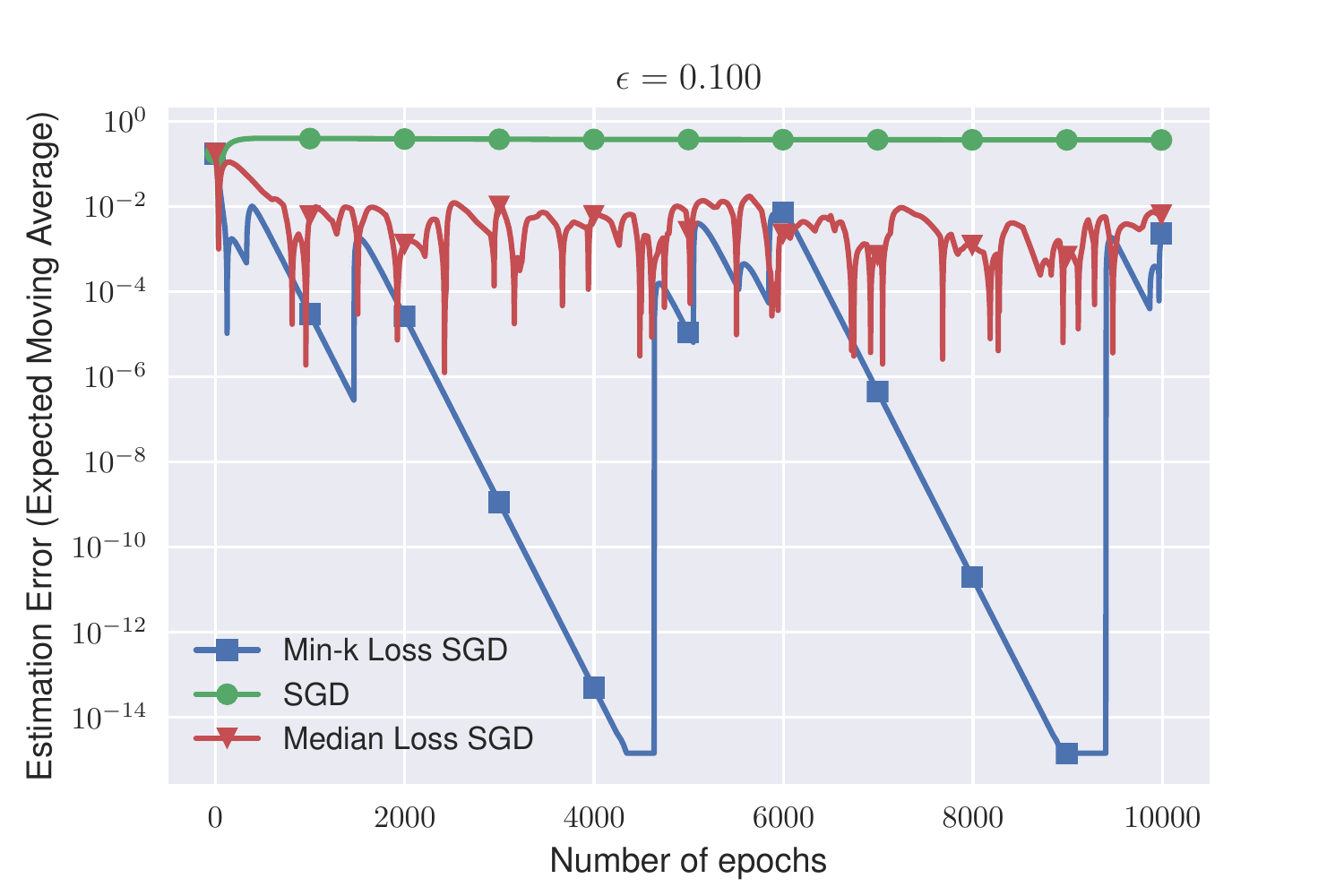}}
	\subfloat[][k=5]{\includegraphics[scale=0.35]{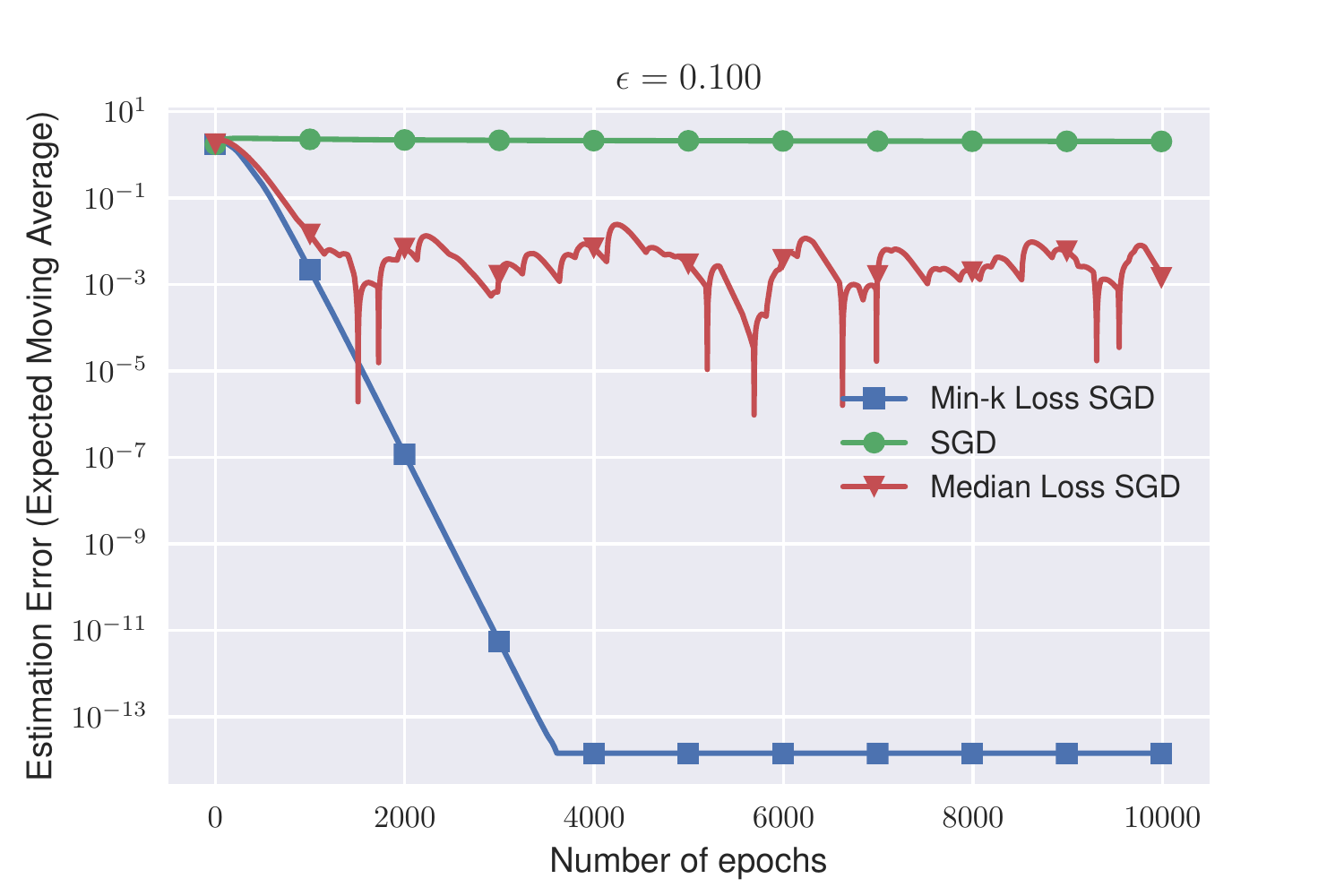}}
	\caption{Comparing the performance of \mkl, SGD and Median loss SGD in the noiseless setting, $d=25$, Noise variance=0.01}
\end{figure}

%\begin{figure}[!h]
%	\subfloat[][k=3]{\includegraphics[scale=0.5]{Figs_Appendix/d10nk5.pdf}}
%	\subfloat[][k=5]{\includegraphics[scale=0.5]{Figs_Appendix/d10nk7.pdf}}
%	    \caption{Comparing the performance of \mkl, SGD and Median loss SGD in the noisy setting, $d=10$, Noise variance=0.1}
%\end{figure}

% \begin{table}[!h]
% 	\centering
% 	\begin{tabular}{|c|c|c|c|c|c|c|c|} 
% 		\hline
% 		\textbf{Dataset}  &\multicolumn{7}{c|}{\textbf{MNIST with 2-layer CNN (Directed Noise)}}  \\ 	\hline
% 		Optimizer &\textbf{SGD} &\multicolumn{5}{c|}{\textbf{\mkl}}  &\textbf{Oracle}\\ \hline 
% 		\backslashbox{$\epsilon$}{$\alpha$} &$ 1.0$ &$ 0.9$ &$0.8$ &$0.7$ &$0.6$  &$0.5$ &1.0\\ \hline
% 		$10 \%$ &97.23  &97.90 &97.48 &96.58 &87.13  &77.78 & \\  \hline
% 		$20 \%$ &94.78 &96.08 &97.28 &97.27  &96.82 &86.90 &\\ \hline
% 		$30 \%$ &88.83 &91.44 &94.29 &91.98 &92.58 &85.11 &\\ \hline
% 		$40 \%$ &73.17 &76.84 &80.10 &79.48 &74.40 &76.44 &\\ \hline
% 	\end{tabular}
% 	\caption{In this experiments, we train a standard 2 layer CNN on subsampled MNIST ($5000$ training samples with labels corrupted using random label noise). We train over $80$ epochs using a learning rate of $0.05$ and introduce weight decay of $0.2$ after $30$ epochs. The reported accuracy is based on the true validation set. The results of the MNIST dataset are reported as the mean of $5$ runs. For the \mkl~algorithm, we introduce a more practical variant that evaluates $k$ sample losses and picks a batch of size $\alpha  k$ where $k=32$.}
% \end{table}

\newpage
\subsubsection{Neural Network Experiments}

Here, we show that in presence of outliers instead of tuning other hyperparameters like learning rate, tuning over $k$ might lead to significant gains in performances for deep neural networks. To illustrate this we play around with two commonly used noise models: random noise and directed noise. In the random noise model, the outlier label is randomly assigned while for the directed noise model for some class `a', the outlier is assigned the same label `b', similarly all the outliers for class `b' are assigned label `c' and so on.

\begin{table}[!ht]
	\centering
	\begin{tabular}{|c|c|c|c|c|c|c|c|} 
		\hline
		\textbf{Dataset}  &\multicolumn{7}{c|}{\textbf{MNIST with 2-layer CNN (Directed Noise)}}  \\ 	\hline
		Optimizer &\textbf{SGD} &\multicolumn{5}{c|}{\textbf{\mkl}}  &\textbf{Oracle}\\ \hline 
		\backslashbox{$\epsilon$}{$\alpha$} &$ 1.0$ &$ 0.9$ &$0.8$ &$0.7$ &$0.6$  &$0.5$ &1.0\\ \hline
		$0.1$ &96.76  &97.23 &95.89 &\textbf{97.47} &96.34  &94.54 &98.52 \\  \hline
		$0.2$ &92.54 &95.81 &95.58 &\textbf{97.46}  &97.03 &95.76 &98.33\\ \hline
		$0.3$ &85.77 &91.56 &93.59 &95.30 &\textbf{96.54} &95.96 &98.16 \\ \hline
		$0.4$ &71.95 &78.68 &82.25 &85.93 &91.29 &\textbf{94.20} &97.98\\ \hline
		%	$0.5$ &47.00 &50.75 &46.77 &\textbf{52.31} &52.29  &45.24 &97.86\\ \hline
	\end{tabular}
	\caption{In this experiments, we train a standard 2 layer CNN on subsampled MNIST ($5000$ training samples with labels corrupted using random label noise). We train over $80$ epochs using an initial learning rate of $0.05$ with the decaying schedule of factor $5$ after every $30$ epochs. The reported accuracy is based on the true validation set. The results of the MNIST dataset are reported as the mean of $5$ runs. For the \mkl~algorithm, we introduce a more practical variant that evaluates $k$ sample losses and picks a batch of size $\alpha  k$ where $k=10$.}
\end{table}

\begin{table}[!h]
	\centering
	\begin{tabular}{|c|c|c|c|c|c|c|c|} 
		\hline
		\textbf{Dataset}  &\multicolumn{7}{c|}{\textbf{MNIST with 2-layer CNN (Random Noise)}}  \\ 	\hline
		Optimizer &\textbf{SGD} &\multicolumn{5}{c|}{\textbf{\mkl}} &\textbf{Oracle} \\ \hline 
		\diagbox{$\epsilon$}{$\alpha$} &$ 1.0$ &$ 0.9$ &$0.8$ &$0.7$ &$0.6$  &$0.5$ &1.0 \\ \hline
		$0.1$  &96.91 &97.9    &\textbf{98.06} &97.59  &96.49  &94.43 &98.44\\  \hline
		$0.2$ &93.94 &95.5    &96.16 &97.02   &\textbf{97.04}  &96.25 &98.18\\ \hline
		$0.3$ &87.14 &90.71   &91.60 &92.97  &94.54  &\textbf{95.36} &97.8\\ \hline
		$0.4$ &71.83 &74.31   &76.6   &78.30  & 77.58  &\textbf{80.86} &97.16\\
		\hline
	\end{tabular}
	\caption{In this experiments, we train a standard 2 layer CNN on subsampled MNIST ($5000$ training samples with labels corrupted using random label noise). We train over $80$ epochs using  an initial learning rate of $0.05$ with the decaying schedule of factor $5$ after every $30$ epochs. The reported accuracy is based on the true validation set. The results of the MNIST dataset are reported as the mean of $5$ runs. For the \mkl~algorithm, we introduce a more practical variant that evaluates $k$ sample losses and picks a batch of size $\alpha  k$ where $k=10$.}
\end{table}

\begin{table}[!h]
	\centering
	\begin{tabular}{|c|c|c|c|c|c|c|c|c|} 
		\hline
		\textbf{Dataset}  &\multicolumn{7}{c|}{\textbf{CIFAR-10 with Resnet-18 (Directed Noise)}}  \\ 	\hline
		Optimizer &\textbf{SGD} &\multicolumn{5}{c|}{\textbf{\mkl}}  &\textbf{Oracle}\\ \hline 
		\backslashbox{$\epsilon$}{$\alpha$} &$ 1.0$ &$ 0.9$ &$0.8$ &$0.7$ &$0.6$ &$0.5$  &1.0\\ \hline
		$0.1$ &79.1  &77.52 &79.57  &81.00 &\textbf{81.94} &80.53 &84.56 \\  \hline
		$0.2$ &72.29 &69.58 &70.17 &72.76 &77.77 &\textbf{78.93} &84.40\\ \hline
		$0.3$ &63.96 &61.43 &60.46 &61.58 &66.49 &\textbf{69.57} &84.66\\ \hline
		$0.4$ &\textbf{52.4} &51.53  &51.04 &51.07 &53.57 &51.2 &84.42 \\ \hline
	\end{tabular}
	\caption{In this experiments, we train Resnet 18 on CIFAR-10 ($50000$ training samples with labels corrupted using directed label noise). We train over $200$ epochs using an initial learning rate of $0.05$ with the decaying schedule of factor $5$ after every $90$ epochs. The reported accuracy is based on the true validation set. The results of the CIFAR-10 dataset are reported as the mean of $3$ runs. For the \mkl~algorithm, we introduce a more practical variant that evaluates $k$ sample losses and picks a batch of size $\alpha  k$ where $k=16$.}
\end{table}

%
%
%\begin{figure}
%	\centering
%	\includegraphics[scale=0.22]{./Figs_Appendix/CIFAR10_decay_OP1_train01.pdf}
%	\includegraphics[scale=0.22]{./Figs_Appendix/CIFAR10_decay_OP1_test01.pdf}
%	\includegraphics[scale=0.22]{./Figs_Appendix/CIFAR10_decay_OP1_testacc01.pdf}
%	\caption{Comparing training loss, test loss and test accuracy of \mkl~and SGD. Parameters: $\epsilon = 0.1$, $k = 2$, $b=16$. The training loss is lower for SGD which means that SGD overfits to the noisy data. The lower test loss and higher accuracy demonstrates the robustness \mkl~provides for corrupted data.}
%	\label{fig:my_label}
%\end{figure}
%

%
%\begin{figure}
%	\centering
%	\includegraphics[scale=0.22]{./Figs_Appendix/CIFAR10_decay_OP1_train03.pdf}
%	\includegraphics[scale=0.22]{./Figs_Appendix/CIFAR10_decay_OP1_test03.pdf}
%	\includegraphics[scale=0.22]{./Figs_Appendix/CIFAR10_decay_OP1_testacc03.pdf}
%	\caption{Comparing training loss, test loss and test accuracy of \mkl~and SGD. Parameters: $\epsilon = 0.3$, $k = 2$, $b=16$. The training loss is lower for SGD which means that SGD overfits to the noisy data. The lower test loss and higher accuracy demonstrates the robustness \mkl~provides for corrupted data.}
%	\label{fig:my_label1}
%\end{figure}

\end{document}